\definecolor{LightCyan}{rgb}{0.8, 0.9, 1}
\definecolor{LightCyan}{rgb}{0.8, 0.9, 1}
\definecolor{LightGray}{rgb}{0.9,0.9,0.9}
\icmltitlerunning{Optimal Horizon-Free Reward-Free Exploration for Linear Mixture MDPs}
\begin{document}

\twocolumn[
\icmltitle{Optimal Horizon-Free Reward-Free Exploration for Linear Mixture MDPs}



\icmlsetsymbol{equal}{*}

\begin{icmlauthorlist}
\icmlauthor{Junkai Zhang}{uclaml}
\icmlauthor{Weitong Zhang}{uclaml}
\icmlauthor{Quanquan Gu}{uclaml}
\end{icmlauthorlist}

\icmlaffiliation{uclaml}{Department of Computer Science, University of California, Los Angeles, California, USA}

\icmlcorrespondingauthor{Quanquan Gu}{qgu@cs.ucla.edu}

\icmlkeywords{Machine Learning, ICML}

\vskip 0.3in
]



\printAffiliationsAndNotice{} 
\begin{abstract}
We study reward-free reinforcement learning (RL) with linear function approximation, where the agent works in two phases: (1) in the exploration phase, the agent interacts with the environment but cannot access the reward; and (2) in the planning phase, the agent is given a reward function and is expected to find a near-optimal policy based on samples collected in the exploration phase. The sample complexities of existing reward-free algorithms have a polynomial dependence on the planning horizon, which makes them intractable for long planning horizon RL problems. 
In this paper, we propose a new reward-free algorithm for learning linear mixture Markov decision processes (MDPs), where the transition probability can be parameterized as a linear combination of known feature mappings. At the core of our algorithm is uncertainty-weighted value-targeted regression with exploration-driven pseudo-reward and a high-order moment estimator for the aleatoric and epistemic uncertainties. When the total reward is bounded by $1$, we show that our algorithm only needs to explore $\tilde O\left( d^2\varepsilon^{-2}\right)$ episodes to find an $\varepsilon$-optimal policy, where $d$ is the dimension of the feature mapping. The sample complexity of our algorithm only has a polylogarithmic dependence on the planning horizon and therefore is ``horizon-free''. 
In addition, we provide an $\Omega\left(d^2\varepsilon^{-2}\right)$ sample complexity lower bound, which matches the sample complexity of our algorithm up to logarithmic factors, suggesting that our algorithm is optimal.
\end{abstract}

\section{Introduction}
In Reinforcement Learning (RL), the agent sequentially interacts with the environment by executing policies and receiving observations, including states and rewards. The goal of the agent is to maximize the total reward. To achieve this goal, the agent needs to explore the environment and exploit the collected information to find the optimal policy. The exploration has long been considered as a central challenge for RL, for which the agent needs to strategically visit states to learn transition dynamics and the value of different states. RL algorithms are often designed to exploit the transition and the reward information  to achieve efficient exploration.

Unfortunately, in many real-world RL problems, reward functions are manually designed to incentive the agent to learn specific tasks, and they may change over time~\citep{achiam2017constrained, tessler2018reward, miryoosefi2019reinforcement}. To avoid learning the transition dynamics repeatedly, \citet{jin2020reward} proposed a new RL paradigm, \textit{Reward-free Exploration}, which separates exploration and planning into two different phases. In the exploration phase, the agent cannot access the real reward function. It can only learn the transition dynamics based on the collected episodes. While in the planning phase, the agent can no longer interact with the environment but has access to the reward function. The goal is to find the optimal policy based on the reward function and previous exploration. A series of work \citep{jin2020reward, kaufmann2021adaptive, menard2021fast, zhang2020nearly} have achieved the optimal sample complexity of $\tilde{O}(H^2S^2A\varepsilon^{-2})$, where $H$ is the planning horizon, $S$ is the number of states, and $A$ is the number of actions.

The sample complexity for learning tabular MDPs shows that learning becomes intractable when the sizes of the state and action spaces increase without further structural assumptions. Linear function approximation is a classical approach to deal with this challenge, which approximates the transition dynamic or the value function by linear functions on compact feature mappings. To this end, \citet{wang2020reward, zanette2020provably} studied reward-free RL in linear MDPs \citep{yang2019sample,jin2020provably}. \citet{zhang2021reward} studied reward-free exploration for linear mixture MDPs \citep{ayoub2020model,zhou2020provably}, where the transition probability is a linear combination of feature mappings. The subsequent work \citet{chen2022nearoptimal} has achieved near optimal sample complexity $\widetilde{O}\left(H^3d(H+d)\varepsilon^{-2}\right)$. However, this sample complexity depends on the planning horizon, which will blow up for long planning horizon problems. Therefore, a natural question arises:
\begin{center}\emph{
Can we design horizon-free, minimax optimal reward-free RL algorithms with
linear function approximation? }
\end{center}

In this paper, we answer this question affirmatively for linear mixture MDPs. In detail, our contributions are highlighted as follows.

\begin{itemize}
    \item We propose an algorithm for reward-free exploration in the linear mixture MDP setting. The algorithm guides the agent to collect samples using a well-designed exploration-driven pseudo-reward function. With a novel analysis based on high-order moment estimation that precisely controls the aleatoric and epistemic uncertainties, our algorithm can achieve an $\tilde{O}(d^2\varepsilon^{-2})$ sample complexity. This complexity only has polylogarithmic dependence on $H$.

    \item We show that any reward-free algorithm needs to explore at least $\Omega(d^2\varepsilon^{-2})$ episodes, to achieve an $\varepsilon$-optimal policy for any reward function, by constructing a special class of linear mixture MDPs. This lower bound matches the upper bound of our algorithm up to logarithmic factors, which indicates that our algorithm is optimal. 
      \item When rescaling the reward to satisfy $\sum_{h=1}^H r_h(s_h,a_h) \le H$, our algorithm achieves an $\tilde{O}(d^2H^2\varepsilon^{-2})$ sample complexity. This improves the sample complexity in the previous work \citet{chen2022nearoptimal}, which requires the $d>H$ condition to match the lower bound.
      
\end{itemize}
\paragraph{Notation}
We use the lowercase letter to denote scalars and lower and uppercase boldface letters to denote vectors and matrices, respectively. We denote by $[n]$ the set $\{1,\cdots, n\}$, and by $\overline{[n]}$ the set $\{0,\cdots,n-1\}$. For a vector $\xb$ and a positive semi-definite matrix $\bSigma$, we denote by $\| \xb \|_2$ the vector's Euclidean norm and define $\| \bx \|_{\bSigma} = \sqrt{\bx^\top\bSigma\bx}$. For two positive sequences $\{ 
 a_n\}$ and $\{ b_n \}$ with $n = 1,2,\cdots$, we write $a_n = O(b_n) $ if there exists an absolute constant $C>0$ such that $a_n \le C b_n$ holds for all $n\ge 1$, write $a_n = \Omega(b_n) $ if there exists an absolute constant $C>0$ such that $a_n \ge C b_n$ holds for all $n \ge 1$, and write $a_n = o(b_n)$ if $a_n/b_n \rightarrow 0$ as $n\rightarrow\infty$. We use $\tilde{O}(\cdot)$ and $\tilde{\Omega}(\cdot)$ to further hide the polylogarithmic factors.

\section{Related Work}
\paragraph{\textbf{RL} with Linear Function Approximation}
In recent years, a series of works have been devoted to the study of RL with linear function approximation \citep{jiang2017contextual, dann2018oracle, yang2019sample, wang2019optimism, du2019good, sun2019model, jin2020provably, zanette2020frequentist, zanette2020learning,  yang2020reinforcement, modi2020sample, ayoub2020model, jia2020model, cai2019provably, weisz2021exponential, zhou2021provably,  zhou2020nearly, he2022nearly, agarwal2022vo}.
Our work belongs to the linear mixture MDP setting \citep{yang2019sample, modi2020sample, ayoub2020model, jia2020model, zhou2020nearly, zhou2021provably},
where the transition kernel can be parameterized as a linear combination of some basic transition probability functions. \citet{zhou2020nearly} firstly achieved minimax regret $\tilde{O}\left(dH\sqrt{T}\right)$ in linear mixture MDPs by proposing a Bernstein-type concentration inequality for self-normalized martingales. Another kind of popular linearly parameterized MDP is linear MDP~\citep{wang2019optimism, du2019good, yang2020reinforcement, jin2020provably, zanette2020frequentist, wang2020reinforcement, he2021logarithmic}, which assumes both transition probability and reward function are linear functions of known feature mappings on state-action pairs. Under this setting, \citet{jin2020provably} firstly proposed statistically and computationally efficient algorithm LSVI-UCB and achieved $\tilde{O}\left( \sqrt{d^3H^3T}\right)$ regret bound. Recent works \citep{he2022nearly} further achieved nearly minimax optimal regret $\tilde{O}(d\sqrt{H^3K})$ by proposing computationally efficient algorithm LSVI-UCB++. Its concurrent work \citep{agarwal2022vo} achieves similar result under assumption $\sum_{h=1}^Hr_h(s_h,a_h)\le1$ with regret upper bound of $\tilde{O}(d\sqrt{HT}+d^6H^5)$.

\noindent\textbf{Reward-free \textbf{RL}}
Unlike standard RL settings, exploration and planning in reward-free RL are separated into two different phases. \citet{jin2020reward} achieved $\tilde{O}(H^5S^2A/\varepsilon^2)$ sample complexity in tabular MDPs by executing exploratory policy visiting states with probability proportional to its maximum visitation probability under any possible policy. Subsequent works~\citep{kaufmann2021adaptive, menard2021fast} proposed algorithms RF-UCRL and RF-Express to gradually improve the result to  $\tilde{O}\left(H^3S^2A\varepsilon^{-2}\right)$. The optimal sample complexity bound $\widetilde{O}(H^2S^2A\varepsilon^{-2})$ was achieved by algorithm SSTP proposed in \citet{zhang2020nearly}, which matched the lower bound provided in \citet{jin2020reward} up to logarithmic factors.

Recent years have witnessed a trend of reward-free exploration in RL with linear function approximation~\citep{wang2020reward, zanette2020provably, zhang2021reward, chen2022nearoptimal, huang2022towards, wagenmaker2022reward}. The near minimax optimal sample complexity of reward-free exploration in linear mixture MDP was achieved by \citet{chen2022nearoptimal} when $d>H$ using the well-designed exploration-driven pseudo reward function. On the other hand, in the linear MDP setting, \citet{wang2020reward} proposed the first efficient algorithm, which only required $\tilde{O}(H^6d^3\varepsilon^{-2})$ sample complexity. The subsequent works, \citet{chen2022nearoptimal} and \citet{wagenmaker2022reward}, gave sample complexity of $\tilde{O}(H^4d^3\varepsilon^{-2})$ and $\tilde{O}(H^5d^2\varepsilon^{-2})$, which are the sharpest for $H$ and $d$, respectively. Some significant works are summarized in Table~\ref{tab:rw}.

\newcolumntype{g}{>{\columncolor{LightCyan}}c}
\begin{table*}[ht]
\centering
\caption{Comparison of episodic reward-free RL algorithms. Column \textbf{Time Homo.} means if the algorithm is time-homogeneous ($\checkmark$) or not ($\times$), rows with light blue background indicates our results.}
\small{
\begin{tabular}{cgggg}
\hline
\rowcolor{white}
Setting & Algorithm & Rewards Scale & Time Homo. & Sample Complexity \\
\hline
\rowcolor{white}
 & \citet{jin2020reward} &$r_h(s_h,a_h)\in [0,1]$ & $\times$ & $\widetilde{O}(H^5S^2A\varepsilon^{-2})$ \\
\rowcolor{LightGray}
 \cellcolor{white} & \Gape[0pt][2pt]{\makecell[c]{RF-UCRL \\ \citep{kaufmann2020adaptive}}} &$r_h(s_h,a_h)\in [0,1]$& $\times$ & $\widetilde{O}(H^4S^2A\varepsilon^{-2})$ \\
 \rowcolor{white}
 \makecell[c]{Tabular \\ MDP} & \makecell[c]{RF-Express \\ \citep{menard2021fast}} &$r_h(s_h,a_h)\in [0,1]$&$\times$ &$\widetilde{O}(H^3S^2A\varepsilon^{-2})$ \\
 \rowcolor{LightGray}
 \cellcolor{white} & \Gape[0pt][2pt]{\makecell[c]{SSTP \\ \citep{zhang2020nearly}}} &$\sum_{h=1}^Hr_h(s_h,a_h)\le 1$& \checkmark &$\widetilde{O}(S^2A\varepsilon^{-2})$ \\
 \rowcolor{white}
 & \makecell[c]{Lower bound \\ \citep{jin2020reward}} &$r_h(s_h,a_h)\in [0,1]$& $\times$ &$\Omega(H^2S^2A\varepsilon^{-2})$ \\
 \rowcolor{LightGray}
 \cellcolor{white} & \Gape[0pt][2pt]{\makecell[c]{Lower bound \\ \citep{zhang2020nearly}} }&$\sum_{h=1}^Hr_h(s_h,a_h)\le 1$& \checkmark &$\Omega(S^2A\varepsilon^{-2})$ \\
\hline 
\rowcolor{white}
 & \citet{wang2020reward}  &$r_h(s_h,a_h)\in [0,1]$& $\times$&$\widetilde{O}\left(H^6 d^3 \varepsilon^{-2}\right)$ \\
\rowcolor{LightGray}
 \cellcolor{white}\makecell[c]{Linear \\ MDP} & \Gape[0pt][2pt]{\makecell[c]{FRANCIS \\ \citep{zanette2020provably}}}  &$r_h(s_h,a_h)\in [0,1]$& $\times$&$\widetilde{O}\left(H^5 d^3 \varepsilon^{-2}\right)$ \\
 \rowcolor{white}
 & \makecell[c]{RFLIN \\ \citep{wagenmaker2022reward}}  &$r_h(s_h,a_h)\in [0,1]$& $\times$ &$\widetilde{O}\left(H^5 d^2 \varepsilon^{-2}\right)$ \\
\hline
\rowcolor{LightGray}
 \cellcolor{white}& \Gape[0pt][2pt]{\makecell[c]{UCRL-RFE+ \\ \citep{zhang2021reward}} }&$r_h(s_h,a_h)\in [0,1]$& \checkmark &$\widetilde{O}\left(H^4d(H+d)\varepsilon^{-2}\right)$ \\
\rowcolor{white}
 Linear & \citet{chen2022nearoptimal} &$r_h(s_h,a_h)\in [0,1]$& $\times$ &  $\widetilde{O}\left(H^3d(H+d)\varepsilon^{-2}\right)$ \\
 Mixture & \textbf{Our work} (Cor.~\ref{cor:samplecomplexity}) &$\sum_{h=1}^Hr_h(s_h,a_h)\le 1$& \checkmark &$\widetilde{O}(d^2\varepsilon^{-2})$ \\
 MDP & \textbf{Our work}  (Cor.~\ref{cor:samplecomplexityH}) &$\sum_{h=1}^Hr_h(s_h,a_h)\le H$& \checkmark &$\tilde{O}(H^2d^2\varepsilon^{-2})$ \\
 & Lower bound  (Thm.~\ref{thm:lowerbound}) &$\sum_{h=1}^Hr_h(s_h,a_h)\le 1$&\checkmark &$\Omega\left(d^2  \varepsilon^{-2}\right)$ \\
 & Lower bound  (Cor.~\ref{cor:lowerboundH}) &$r_h(s_h,a_h)\in [0,1]$& \checkmark &$\Omega(H^2d^2\varepsilon^{-2})$ \\
\hline
\end{tabular}
}
\label{tab:rw}
\end{table*}

\noindent\textbf{Horizon-free \textbf{RL}}
The long planning horizon has long been viewed as RL's main challenge. However, a series of works has shown that RL is no more difficult than contextual bandits by removing the influence of the total reward scale. In tabular MDPs, the algorithm proposed in \citet{wang2020long} firstly achieved polylogarithmic $H$ dependency sample complexity bound $\tilde{O}(S^5A^4\varepsilon^{-2})$ by carefully reusing samples and avoid unnecessary sampling. \citet{zhang2021reinforcement} further proposed an improved algorithm \texttt{MVP} to achieve near-optimal regret bound $\tilde{O}(\sqrt{SAK}+S^2A)$ based on new Bernstein-type bonus. Similar polylogarithmic $H$ dependency bounds had been established by \citet{ren2021nearly} for linear MDP with anchor points, \citet{tarbouriech2021stochastic} for the stochastic shortest path. \citet{li2022settling} achieved the surprising $H$ independent sample complexity bound $O((SA)^{O(S)}\varepsilon^{-5})$ by building a connection between discounted MDPs and episodic MDPs and a novel perturbation analysis in MDPs. The algorithm proposed by \citet{zhang2022horizon} further improved the sample complexity to $O(S^9A^3\varepsilon^{-2}\text{polylog}(S,A,\varepsilon^{-1}))$ only depending on state and action spaces size polynomially by exploiting the power of stationary policy.  Thanks to the linear function approximation, \citet{zhou22high} firstly achieve horizon-free regret bound $\tilde{O}(d\sqrt{K} + d^2)$ independent of state and action spaces size. However, all the above works are limited to standard RL settings. In the paradigm of reward-free exploration, the only horizon-free result was achieved by \citet{zhang2021reward} with sample complexity bound of $\tilde{O}(S^2A\varepsilon^{-2})$, where the polynomial dependency on $S$ and $A$ is still unacceptable when the states and actions spaces are large. Our algorithm HF-UCRL-RFE++ establishes the first horizon-free sample complexity bound independent of state and action spaces size in reward-free exploration.

\section{Preliminaries}
We consider an episodic finite horizon Markov Decision Process (MDP) $\cM = \left(\cS,\cA, H,\{r_h\}_{h=1}^H, \PP\right)$, where $\cS$ is the countable (and maybe infinite) state space, $\cA$ is the action space, $H$ is the length of the episode, $r_h:\cS\times\cA \rightarrow [0,1]$ is the deterministic reward function, and $\PP: \cS\times\cA \rightarrow [0,1]$ is the time-homogeneous transition probability. 

Based on the current state $s\in\cS$ and the time step $h\in [H]$, a policy $\pi$ chooses the action $a\in\cA$ to be executed by the agent. Formally, we denote a policy as $\pi = \{ \pi_h \}_{h=1}^H$, where $\pi_h:\cS\rightarrow\cA$ is a function which maps a state $s$ to an action $a$. For any $(s,a) \in \cS\times\cA$ and $h\in[H]$, we define the action-value function $Q_h^\pi(s,a)$ and value function $V_h^\pi(s)$ as follows:
\begin{align*}
& Q_h^\pi(s,a) = \EE \bigg[ \sum_{h'=h}^H r(s_{h'},a_{h'}) \Big| s_h = s, a_h = a, \\
  & \qquad \qquad s_{h'}\sim\PP(\cdot|s_{h'-1},a_{h'-1}),a_{h'} = \pi_{h'}(s_{h'})\bigg], \\
& V_h^\pi (s) = Q_h^\pi\big(s,\pi_h(s)\big).
\end{align*}
We define the optimal value function $V_h^*(\cdot)$ and optimal action-value function $Q_h^*(\cdot,\cdot)$ as $V_h^*(\cdot) = \sup_\pi V_h^\pi(\cdot)$ and $Q_h^*(\cdot,\cdot) = \sup_{\pi}Q_h^\pi(\cdot,\cdot)$, respectively. For any function $V:\cS \rightarrow [0,1]$, we introduce the following short-hands to denote the conditional expectation and variance of $V$:
\begin{align*}
\left[\PP V\right](s,a) & = \EE_{s'\sim\PP(\cdot|s,a)}V\left(s'\right), \\
\left[\VV V\right](s,a) & = \left[\PP V^2\right](s,a) - \left[\PP V\right](s,a)^2.
\end{align*}
For any $(s, a) \in \cS\times\cA$ and $h\in[H]$, the Bellman equation and Bellman optimality equation can be defined recursively as
\begin{align*}
Q_h^\pi (s,a) &= r(s,a) + \left[\PP V_{h+1}^\pi\right](s,a), \\
Q_h^* (s,a) &= r(s,a) + \left[\PP V_{h+1}^*\right](s,a).
\end{align*}
In this paper, we make the structural assumption that the transition dynamic has a linear structure, which has been considered in prior works as below:

\begin{definition}[Linear Mixture MDPs, \citealt{jia2020model,ayoub2020model,zhou2020provably}]\label{def:mdp}
The unknown transition probability $\PP$ is a linear combination of $d$ signed basis measures $\bphi_i(s'|s,a)$, i.e., $\PP(s'|s,a) = \sum_{i=1}^d \bphi_i(s'|s,a)\btheta^*_i$. Meanwhile, for any $V: \cS \rightarrow [0,1]$, $i \in [d], (s,a) \in \cS \times \cA$, the summation $\sum_{s' \in \cS} \bphi_i(s'|s,a) V(s')$ can be calculated efficiently within $\cO$ time. For simplicity, let $\bphi = [\bphi_1, \dots, \bphi_d]^\top$, $\btheta^* = [\btheta^*_1,\dots, \btheta^*_d]^\top$ and $\bphi_V(s, a) = \sum_{s' \in \cS} \bphi(s'| s, a)V(s')$. W.l.o.g., we assume $\|\btheta^*\|_2 \le B, \|\bphi_V(s, a)\|_2 \le 1$ for all $V: \cS \rightarrow [0,1]$ and $(s,a) \in \cS \times \cA$.
\end{definition}

We further assume that the accumulated reward of an episode for any trajectory is upper bounded by 1, which ensures that the only factors affecting the final statistical complexity are difficulties brought by exploration and long planning horizon rather than the scale of the total reward.

\begin{assumption}\label{as:reward}
    (Bounded total reward) For any trajectory $\{s_h,a_h\}_{h=1}^H$, we have $0\leq\sum_{h=1}^H r_h(s_h,a_h) 
 \le 1$.
\end{assumption}
We denote the set of reward functions satisfying Assumption \ref{as:reward} by $\cR$.

\paragraph{Reward-free RL} In reward-free RL, the real reward function is accessible only after the agent finishes the interactions with the environment. Specifically, the algorithm can be separated into two phases: (i) \textit{Exploration phase}: the algorithm can't access the reward function but collect $K$ episodes of samples by interacting with the environment.  (ii) \textit{Planning phase}: The algorithm is given reward function $\{r_h\}_{h=1}^H$ and is expected to find the optimal policy. $(\varepsilon,\delta)$-learn and sample complexity of the algorithm is formally defined as follows.

\begin{definition}
 $((\varepsilon, \delta)$-learnability). Given an MDP transition kernel set $\mathcal{P}$, reward function set $\mathcal{R}$ and a initial state distribution $\mu$, we say a reward-free algorithm can $(\varepsilon, \delta)$-learn the problem $(\mathcal{P}, \mathcal{R})$ with sample complexity $K(\varepsilon, \delta)$, if for any transition kernel $P \in \mathcal{P}$, after receiving $K(\varepsilon, \delta)$ episodes in the exploration phase, for any reward function $r \in \mathcal{R}$, the algorithm returns a policy $\pi$ in planning phase, such that with probability at least $1-\delta,~V_1^*\left(s_1 ; r\right)-V_1^\pi\left(s_1 ; r\right) \leq \varepsilon$.
\end{definition}

\section{Algorithms}\label{sec:alg}

In this section, we propose our reward-free exploration algorithm HF-UCRL-RFE++. This algorithm consists of two phases. In the exploration phase, it builds an estimator $\btheta$ for the linear mixture MDP transition kernel parameter $\btheta^*$ based on the sampled episodes. At a high level, the estimation follows the \textit{value-targeted regression} (VTR) framework proposed by \citet{jia2020model}. The VTR is basically a ridge regression with value functions as responses and feature mappings as predictors. However, value functions have no estimates since the reward function is not accessible. Therefore, the value functions and reward functions are replaced by well-designed exploration-driven pseudo-value functions and pseudo-reward functions. To achieve a better estimation, we further apply the \textit{high-order moment estimation} (HOME) technique proposed by \citet{zhou22high}. Then, during the planning phase, the algorithm uses the estimator $\btheta$ acquired in the exploration phase to find the optimal policy $\pi$ for the given reward functions. Our algorithm is described in Algorithm \ref{alg:exp}.

\begin{algorithm*}[!ht]
\caption{HF-UCRL-RFE++}
\label{alg:exp}
\begin{algorithmic}[1]
\REQUIRE Confidence radius $\{\beta_k\}$, regularization parameter $\lambda$, number of the high-order estimator $M$.
\STATE \textbf{Phase I: Exploration Phase}
\STATE Initialize $\hat\bSigma_{1, 1, m} \leftarrow \lambda \Ib$, $\tilde \bSigma_{1, 1, m} \leftarrow \lambda \Ib$, $\tilde \bbb_{1, 1, m} \leftarrow \zero$,  $\hat\bbb_{1, 1, m} \leftarrow \zero$ for all $m \in \overline{[M]}$, $\cU_{1}=\left\{\btheta|\btheta\in\RR^d\right\}$.
\STATE Set $\hat \btheta_{1, m} \leftarrow \hat \bSigma_{1, 1, m}^{-1} \hat \bbb_{1, 1, m}$, $\tilde \btheta_{1, m} \leftarrow \tilde \bSigma_{1, 1, m}^{-1} \tilde \bbb_{1, 1, m}$ for all $m \in \overline{[M]}$.
\label{ln:sol}
\FOR {$k = 1, 2, \cdots, K$}
\STATE Calculate $ \pi_k, \btheta_k, r_k = \argmax_{\pi, \btheta \in \cU_{k},r\in R}\hat V_{k,1}(s_1; \btheta, \pi, r)$, where $\hat V_{k,1}$ is defined in \eqref{eq:pseudo_V}. Denote $\big\{\tilde V_{k,h}(\cdot)\big\}_{h=1}^H = \left\{V_h(\cdot;\btheta_k,\pi_k,r_k)\right\}_{h=1}^H$. \label{ln:optimization}
\STATE Receive initial state $s_1^k = s_1$. \label{ln:trajectory_start}
\FOR {$h = 1, 2, \cdots, H$}
\STATE Execute $a_h^k = \pi_h^k\left(s_h^k\right)$, receive $s_{h+1}^k \sim \PP\left(\cdot| s_h^k, a_h^k\right)$.
\STATE For $m \in \overline{[M]}$, denote $\hat \bphi_{k, h, m} = \bphi_{\hat V_{k,h+1}^{2^m}}(s_h^k, a_h^k)$, $\tilde \bphi_{k, h, m} = \bphi_{\tilde V_{k,h+1}^{2^m}}(s_h^k, a_h^k)$.
\STATE Set $\big\{\hat \sigma_{k, h, m}\big\} \leftarrow \text{HOME}_{\text{Alg.~\ref{alg:home}}}\Big(\big\{\hat\bphi_{k, h, m}, \hat \btheta_{k, m}, \hat\bSigma_{k, h, m}, \dot{\hat\bSigma}_{k, m}\big\}, \beta_k, \alpha, \gamma\Big)$.\label{ln:home}
\STATE Set $\big\{\tilde \sigma_{k, h, m}\big\} \leftarrow \text{HOME}_{\text{Alg.~\ref{alg:home}}}\Big(\big\{\tilde \bphi_{k, h, m}, \tilde \btheta_{k, m}, \tilde\bSigma_{k, h, m}, \dot{\tilde\bSigma}_{k, m}\big\}, \beta_k, \alpha, \gamma\Big)$. \label{ln:home2}
\STATE Set $\tilde \bSigma_{k, h + 1, m} \leftarrow \tilde \bSigma_{k, h, m} + \tilde \bphi_{k, h, m}\tilde \bphi_{k, h, m}^\top\tilde \sigma_{k, h, m}^{-2}$ for $m \in \overline{[M]}$. \label{ln:wr1}
\STATE Set $\hat \bSigma_{k, h + 1, m} \leftarrow \hat \bSigma_{k, h, m} + \hat \bphi_{k, h, m}\hat \bphi_{k, h, m}^\top\hat \sigma_{k, h, m}^{-2}$ for $m \in \overline{[M]}$.
\STATE Set $\tilde\bbb_{k, h + 1, m} \leftarrow \tilde\bbb_{k, h, m} + \tilde\bphi_{k, h, m}\tilde V_{k, h+1}^{2^m}(s_{h+1}^k)\tilde \sigma_{k, h, m}^{-2}$ for $m \in \overline{[M]}$. \label{ln:wr2}
\STATE Set $\hat \bbb_{k, h + 1, m} \leftarrow \hat \bbb_{k, h, m} + \hat \bphi_{k, h, m}\hat V_{k, h+1}^{2^m}(s_{h+1}^k)\hat \sigma_{k, h, m}^{-2}$ for $m \in \overline{[M]}$.
\ENDFOR
\STATE $\dot{\tilde{\bSigma}}_{k+1, m} \leftarrow \tilde \bSigma_{k , H + 1, m}$, $\dot{\hat{\bSigma}}_{k+1, m} \leftarrow \hat \bSigma_{k , H + 1, m}$. \label{ln:dot}
\STATE Set $\tilde \bSigma_{k+1, 1, m} \leftarrow \tilde \bSigma_{k , H + 1, m}, \tilde \bbb_{k+1, 1, m} \leftarrow \tilde\bbb_{k, H + 1, m}$, $\tilde \btheta_{k+1,m} = \tilde\bSigma_{k+1, 1, m}^{-1}\tilde\bbb_{k+1, 1, m}$.
\STATE Set $\hat\bSigma_{k+1, 1, m} \leftarrow \hat\bSigma_{k , H + 1, m}, \hat\bbb_{k+1, 1, m} \leftarrow \hat\bbb_{k, H + 1, m}$, $\hat \btheta_{k+1,m} = \hat\bSigma_{k+1, 1, m}^{-1}\hat\bbb_{k+1, 1, m}$.
\STATE Update the confidence set $\cU_{k}$ to $\cU_{k+1}$ by adding constraints 
\eqref{up:hat},~\eqref{up:tilde}. \label{ln:update_confidence_set}
\label{ln:confidence_set}
\ENDFOR \label{ln:trajectory_end}
\STATE \textbf{Phase II: Planning Phase}
\STATE Receive reward function $r$. \label{ln:receive_reward}
\STATE $\hat\pi_r = \arg\max_{\pi} V_1(\cdot;\btheta_K, \pi, r)$. \label{ln:optimistic_planning}
\STATE Return policy $\hat\pi_r$. \label{ln:return_policy}
\end{algorithmic}
\end{algorithm*}

\paragraph{Exploration-driven Pseudo Value Function}
As mentioned above, in the paradigm of reward-free exploration, we have to construct the pseudo-reward function to guide the agent in taking actions in the absence of the real reward function. As we adopt in this work, the most natural idea is to construct the pseudo-reward function related to uncertainty, which urges the agent to collect information about the most uncertain states and actions. Two approaches follow this idea: one is constructing the pseudo reward function directly measuring and maximizing the uncertainty of each stage, and the other is constructing the pseudo reward function maximizing the overall uncertainty along trajectories. \citet{zhang2021model} took the first approach, constructing the pseudo-reward function in the form of
\begin{align}
r^k_{h}(s,a) & = \min\Big\{1, \frac{2\beta}{H}\sqrt{\max_{V\in \cS\rightarrow [0,H-h]} \| \bphi_V (s,a) \|_{\bSigma_{1,k}^{-1}}} \Big\},    \notag 
\end{align}
and the pseudo-value function to be the argument of the maxima for the above uncertainty measure. Under this construction, the suboptimality in the planning phase can be bounded by the accumulation of uncertainty. This approach is straightforward but has the following two drawbacks. Firstly, without the truncation for accumulation of uncertainty, the upper bound of overall suboptimality in the planning phase will be in the scale of $O(H)$, which is meaningless since the value function lies in the interval of $[0,1]$ under our assumption. {Second, since VTR utilizes value functions' variance information for $\btheta$ estimation, it requires a Bellman-equation-type equality between two consecutive stages $h$ and $h+1$. However, the first approach does not satisfy this requirement, preventing us from acquiring a more accurate estimate.}

To address the above issues, we follow the design of pseudo value function proposed in \citet{chen2022nearoptimal}. In particular, we are constructing the pseudo-reward function aiming to maximize the overall uncertainty along trajectories. We view the uncertainty of states and actions as a function of (pseudo) reward function $r$, policy $\pi$, and transition kernel parameter $\btheta$ defined as follows
\begin{align}
u_{k,h}(s,a; \btheta,\pi, r)  =  &\min\Big\{1, \nonumber \\
     &\beta\big\| \bphi_{V_h(\cdot;\btheta,\pi,r)}(s,a)\big\|_{\dot{\tilde\bSigma}_{k,0}^{-1}}\Big\}, \label{eq:pseudo_r}
\end{align}
where $V_h(\cdot;\btheta,\pi,r)$ is the the value function of policy $\pi$ for linear mixture MDP with transition kernel parameter $\btheta$ and the reward function $r$, and the overall uncertainty along the trajectory is the truncated sum of each step uncertainty defined as
\begin{align}
    \bar V_{k,h}(s;\btheta,\pi,r)  = \min \Big\{1, & u_{k,h}(s,\pi(s);\btheta,\pi, r) \notag
      \\ 
      & + \bphi^\top_{\bar V_{k,h+1}(\cdot;\btheta,\pi,r)} (s,\pi(s))\btheta^*\Big\}. \notag
\end{align}
However, the definition of $ \bar V_{k,h}(s;\btheta,\pi,r)$ involves $\btheta^*$ , which is unknown to the agent. Hence, we construct the optimistic estimation of $\bar V_{k,h}(s;\btheta,\pi,r)$ as $\hat V_{k,h}(s;\btheta,\pi,r)$ defined as
\begin{align}
\hat V_{k,h}(s;\btheta,\pi,&r)  = \min \Big\{1, u_{k,h}(s,\pi(s);\btheta,\pi, r) \nonumber  \\
    \quad & +  2 \beta  \big\| \bphi_{\hat V_{k,h+1}(\cdot;\btheta,\pi,r)} 
    (s,\pi(s)\big)\big\|_{\dot{\hat\bSigma}_{k,0}^{-1}} \nonumber 
       \\
      & + \bphi^\top_{\hat V_{k,h+1}(\cdot;\btheta,\pi,r)} (s,\pi(s))\btheta\Big\}.    \label{eq:pseudo_V}
\end{align}
Notable, the definitions of $u_{k,h}$ and $\hat{V}_{k,h}$ involve the covariance matrices $\dot{\tilde\bSigma}_{k,0}$ and $\dot{\hat\bSigma}_{k,0}$, which are computed at the end of the preceding episode at Line~\ref{ln:dot} of Algorithm~\ref{alg:exp}. In the following content, when there is no confusion, we may write $ \hat V_{k,h}(\cdot) = \hat V_{k,h}(\cdot;\btheta_k,\pi_k, r_k)$, $u_{k,h}(\cdot,\cdot) = u_{k,h}(\cdot,\cdot;\btheta_k,\pi_k, r_k)$. In order to collect more information, the agent is expected to transit through the trajectory with the largest uncertainty $\hat V_{k,h}$. It is notable that $\hat V_{k,h}$ is a function of (pseudo) reward function $r_k$, policy $\pi_k$, and transition kernel parameter $\btheta_k$. Thus, at the beginning of each episode, we set $r_k$, $\pi_k$, and $\btheta_k$ to be arguments of the maxima, as presented in Line~\ref{ln:optimization} in Algorithm~\ref{alg:exp}. Through this process, we acquire the pseudo value function $r_k$, which is essential for reward-free exploration. Afterward, the algorithm collects samples along trajectories induced by policy $\pi_k$ and improves the estimation of $\btheta_k$ in Line~\ref{ln:trajectory_start} to Line~\ref{ln:trajectory_end}. In this stage, Algorithm~\ref{alg:exp} encounters two series of functions in the form of Bellman equations; one is the sum of pseudo rewards $r$, $\tilde{V}_{k,h}(\cdot) = V_{h}(\cdot;\btheta_k,\pi_k, r_k)$, which we refer as pseudo value function, and one is the uncertainty along the trajectory, $\hat{V}_{k,h}$. These two series of functions are both eligible for refined VTR and thus help estimate $\btheta$, as we will explain in the following.

\begin{algorithm*}[!ht]
\caption{High-order moment estimator (HOME)}
\label{alg:home}
\begin{algorithmic}[1]
\REQUIRE Features $\left\{\bphi_{k,h,m}\right\}_{m \in \overline{[M]}}$, vector estimators $\left\{\btheta_{k,m}\right\}_{m \in \overline{[M]}}$, covariance matrix $\left\{\bSigma_{k,h,m}, \dot{\bSigma}_{k, m}\right\}_{m \in \overline{[M]}}$, confidence radius $\beta_k$, $\alpha, \gamma$.
\FOR{$m = 0,\dots, M-2$}
\STATE Set $\left[\overline\VV_{k,m}V_{k,h+1}^{2^m}\right](s_h^k, a_h^k) \leftarrow  \left[\bphi_{k,h,m+1}^\top \btheta_{k,m+1}\right]_{[0, 1]} -  \left[\bphi_{k,h,m}^\top\btheta_{k,m}\right]_{[0,1]}^2$.
\STATE Set $E_{k,h,m} \leftarrow   \left[2\beta_k\left\|\bphi_{k,h,m}\right\|_{\dot{\bSigma}_{k, m}^{-1}}\right]_{[0, 1]} + \left[\beta_k\left\|\bphi_{k,h,m+1}\right\|_{\dot{\bSigma}_{k,m+1}^{-1}}\right]_{[0, 1]}$.
\STATE Set $\overline\sigma_{k,h,m}^2\leftarrow \max\left\{ \gamma^2\left\|\bphi_{k,h,m}\right\|_{\bSigma_{k,h,m}^{-1}}, \left[\overline\VV_{k,m}V_{k, h+1}^{2^m}\right](s_h^k, a_h^k) + E_{k,h,m}, \alpha^2\right\}$. \label{ln:variance}
\ENDFOR
\STATE  Set $\overline\sigma_{k,h,M-1}^2\leftarrow\max\left\{ \gamma^2\left\|\bphi_{k,h,M-1}\right\|_{\bSigma_{k,h,M-1}^{-1}}, 1,   \alpha^2 \right\}$.
\ENSURE $\left\{\overline\sigma_{k,h,m}\right\}_{m \in \overline{[M]}}$.
\end{algorithmic}
\end{algorithm*}

\paragraph{High-order Moment Estimation}
The key technique used in our algorithm consists of two series of high-order estimations for the transition kernel parameter $\btheta$. The algorithm for high-order moment estimation is stated in Algorithm~\ref{alg:home}. In the exploration phase, the agent learns the environment with the help of two series of value functions $\tilde{V}_{k,h}$ and $\hat{V}_{k,h}$. They serve to characterize different aspects of the model, one for pseudo values and one for trajectory uncertainty. And thus, they rely on different estimations of transition kernel parameter $\btheta$. Two independent series of higher-order moment estimations are necessary for achieving accurate estimation. In the Algorithm~\ref{alg:exp}, both estimations of $\btheta$ are the solutions to the weighted regression problem in the following form:
\begin{align}
&\argmin_{\btheta}\bigg(  \lambda \|\btheta\|_2^2 \nonumber \\ 
& \quad + \sum_{j=1}^{k-1}\sum_{h=1}^{H}\big( \bphi_{j,h,0}^\top \btheta - V_{j,h}(s^j_{h+1})\big)^2 /\bar{\sigma}_{j,h,0}^2\bigg), \label{eq:reg}
\end{align}
where the regression weight $\bar{\sigma}_{j,h,0}$ is set as Equation~\eqref{eq:weight}.
\begin{align}
 \overline\sigma_{k,h,0}^2\leftarrow  &\max\Big\{ \gamma^2\left\|\bphi_{k,h,0}\right\|_{\bSigma_{k,h,0}^{-1}},  \notag
  \\ &\left[\overline\VV_{k,0}V_{k, h+1}\right](s_h^k, a_h^k) + E_{k,h,0}, \alpha^2 \Big\}. \label{eq:weight}
\end{align}
$\bar{\sigma}_{j,h,0}$ can be considered as an combination of \textit{aleatoric uncertainty} and \textit{epistemic uncertainty} \citep{kendall2017uncertainties, mai2022sample}. The first term $\gamma^2\left\|\bphi_{k,h,m}\right\|_{\bSigma_{k,h,0}^{-1}}$ in~\eqref{eq:weight} is the \textit{epistemic uncertainty} caused by limited available data. And the second term in Equation~\eqref{eq:weight} is supposed to be the \textit{aleatoric uncertainty} $ \VV_{k,0}V_{k, h+1} $ characterizing the inherent non-determinism of the transition kernel, which is irreducible. Here the $\VV_{k,m}V_{k, h+1}$ is the variance of $V_{k, h+1}$ to $2^m$ defined as $[\PP V_{k,h+1}^{2^{m+1}}](s_h^k,a_h^k) -  [\PP V_{k,h+1}^{2^{m}}](s_h^k,a_h^k)^2$. Then, $\big[\VV_{k,0}V_{k, h+1}\big] (s_h^k, a_h^k)$ is further replaced with its estimate $\big[\overline\VV_{k,0}V_{k, h+1}\big] (s_h^k, a_h^k)$ plus its error bound $E_{k,h,0}$ since real variance $\big[\VV_{k,0}V_{k, h+1}\big] (s_h^k, a_h^k)$ is unknown to the agent. Because $\big[\VV_{k,0}V_{k, h+1}\big] (s_h^k, a_h^k)$ is a quadratic function of the real transition kernel parameter $\btheta^*$, its estimate can be achieved as 
\begin{align}
\left[\overline{\mathbb{V}}_{k, 0}  V_{k, h+1}\right]\left(s_h^k, a_h^k\right)=&\Big[\Big\langle\bphi_{k, h, 1}, \btheta_{k, 1}\Big\rangle\Big]_{[0,1]} \notag \\
&-\left[\left\langle\hat\bphi_{k, h, 0}, \btheta_{k, 0}\right\rangle\right]_{[0,1]}^2,\notag
\end{align}
where $\btheta_{k,1}$ is again the solution to the weighted regression problem similar to \eqref{eq:weight} with predictors $\bphi_{k,h,1} = \bphi_{V^2_{k,h+1}}(s_h^k,a_h^k)$, responses $V^2_{k,h+1}(s_{h+1}^k)$ and weight $\overline\sigma_{k,h,1}$. Following the above idea, the value of weight $\overline\sigma_{k,h,1}$ further relies on $\btheta_{k,2}$, which is the solution to a weighted regression problem involving another weight $\overline\sigma_{k,h,2}$. The algorithm carried out this process recursively until $\overline\sigma_{k,h,M-1}$, where its second term is replaced by the trivial upper bound of aleatoric uncertainty.

Applying HOME to the reward-free setting brings additional difficulties in controlling the error of our estimate for the model, as the error introduced by using the pseudo reward function instead of the real reward function and the error introduced by estimating the true transition kernel must be controlled separately. To address this problem, we carefully estimate variables indicating different kinds of error into two series of HOME in Line~\ref{ln:home} and Line~\ref{ln:home2}. Since the separation of variables deeply exploits the inner structure of the problem, the two series of HOME can be merged in the end to achieve a unified control for both kinds of error.

Previous work \citet{chen2022nearoptimal} implemented the weighted value regression in a more crude way. The weights are constructed only on aleatoric uncertainty, totally ignoring epistemic uncertainty. In addition, they use the same instead of different transition kernel parameters to calculate different order moments of the value function and stop target value regression at second order moment, which increased avoidable error. As a result, \citet{chen2022nearoptimal} can only replace factor $Hd$ with factor $H+d$ when trying to improve the dependency on $d$ in the upper bound. In contrast, our work further improves factor $H+d$ to factor $H$ through the well-designed target value regression, as we can see in Corollary~\ref{cor:samplecomplexityH}.

\paragraph{High Confidence Set}
At the end of each episode, we add the following constraints into $\cU_{k}$ to update the high confidence set in Line~\ref{ln:update_confidence_set} of Algorithm~\ref{alg:exp}.
\begin{align} 
    \left\| \btheta  - \hat \btheta_{k,m} \right\|_{\dot{\hat\bSigma}_{k,m}} & \le \beta_k,~m \in \overline{[M]}, \label{up:hat} \\
    \left\| \btheta  - \tilde \btheta_{k,m} \right\|_{\dot{\tilde\bSigma}_{k,m}} & \le \beta_k,~m \in \overline{[M]}. \label{up:tilde}
\end{align}
High confidence set $\cU_{k}$ ensures that the estimate $\btheta_k$ lies in a neighborhood of real transition kernel parameter $\btheta^*$. Here the algorithm adds $2M$ inequalities to constraints in each episode. These inequalities guarantee that estimations of the variance of $\hat{V}_{k,h}$ and $\tilde{V}_{k,h}$ up to $M$-th order are near the real values.

\paragraph{Planning Phase}
After finishing the exploration, the agent enters the planning phase and receives the real reward function. Depending on optimal Bellman equations, the agent is able to obtain the optimal policy backward from state $H$ to state $1$ by dynamic programming based on real reward function $r$ and transition kernel parameter estimate $\btheta_K$. And then, the algorithm outputs the optimal policy.

\paragraph{Computational Complexity of \textbf{HF-UCRL-RFE++}}
Similar with~\citet{chen2022nearoptimal}, we assume that the optimization over $\btheta$, $\pi$, and $r$ in Line~\ref{ln:optimization} of Algorithm~\ref{alg:exp} can be accomplished with an oracle which is obvious to be called for $K$ times. At each episode $k$ and each stage $h$, \textbf{HF-UCRL-RFE++} computes $\{\hat\bphi_{k, h, m}\}_{m\in\overline{[M]}}$, $\{\tilde\bphi_{k, h, m}\}_{m\in\overline{[M]}}$, $\big\{\hat \sigma_{k, h, m}\big\}_{m\in\overline{[M]}}$, $\big\{\tilde \sigma_{k, h, m}\big\}_{m\in\overline{[M]}}$, and updates $\{\hat \bSigma_{k, h, m}\}_{m\in\overline{[M]}}$, $\{\tilde \bSigma_{k, h, m}\}_{m\in\overline{[M]}}$. The computation of $\{\hat\bphi_{k, h, m}\}_{m\in\overline{[M]}}$ and $\{\tilde\bphi_{k, h, m}\}_{m\in\overline{[M]}}$ require $O(\cO M)$ times. According to Algorithm~\ref{alg:home}, calculating $\big\{\hat \sigma_{k, h, m}\big\}_{m\in\overline{[M]}}$ and $\big\{\tilde \sigma_{k, h, m}\big\}_{m\in\overline{[M]}}$ require $O(Md^2)$ time since the computation of the inner-product an inversion of matrix and a vector needs $O(d^2)$. The updates of $\{\hat \bSigma_{k, h, m}\}_{m\in\overline{[M]}}$ and $\{\tilde \bSigma_{k, h, m}\}_{m\in\overline{[M]}}$ further require $O(Md^2)$ time. Lastly, determining the optimal policy during the planning phase takes $O(H(SAd + \cO))$ time. Therefore, the total time complexity of \textbf{HF-UCRL-RFE++} is $O(KH(\cO M + Md^2) + HSAd)$.

\section{Main Results}
\subsection{Upper Bounds}
We first provide the suboptimality upper bound of our algorithm HF-UCRL-RFE++.
\begin{theorem}\label{thm:main}
For Algorithm~\ref{alg:exp}, set $M = \log(7KH)/\log(2)$, $\alpha = H^{-1/2}$, $\gamma = d^{-1/4}$, $\lambda = d/B^2 $, $\{\beta_k\}_{k\ge1}$ as 
\begin{align}
\beta_k =  12 \sqrt{d \eta\tau}
 + 30 \tau /\gamma^2 + \sqrt{\lambda}B \notag,
\end{align}
and denote $\beta=\beta_K$, where $\eta = \log(1 + kH/(\alpha^2 d\lambda))$ and $\tau = \log(32(\log(\gamma^2/\alpha)+1)k^2H^2/\delta)$. Then, for any $0<\delta<1$, we have with probability at least $1-\delta$, after collecting $K$ episodes of samples, algorithm~\ref{alg:exp} returns a policy $\hat\pi_r$ satisfying the following sub-optimality bound,
\begin{align}
V_1^*(s_1;r) - V_1(s_1;\btheta^*, \hat\pi_r, r) = \tilde{O}\left(\frac{d^2}{K} + \frac{d}{\sqrt{K}}\right). \notag
\end{align}
\end{theorem}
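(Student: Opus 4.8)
The plan is to reduce the planning-phase suboptimality to the accumulated exploration uncertainty, and then to bound that uncertainty in a horizon-free manner by combining a law-of-total-variance argument with the recursive high-order moment estimator. First I would establish that the high-confidence set is valid, i.e. $\btheta^* \in \cU_k$ for all $k \in [K]$ with probability at least $1-\delta$. For each order $m \in \overline{[M]}$, both $\hat\btheta_{k,m}$ and $\tilde\btheta_{k,m}$ solve variance-weighted ridge regressions whose noise is a bounded martingale-difference sequence with conditional variance captured by the weights $\overline\sigma_{k,h,m}^2$. Applying a Bernstein-type self-normalized concentration inequality for vector-valued martingales (in the style of \citealt{zhou22high}), the stated choice of $\beta_k$ together with the weight floor $\alpha^2$ yields $\|\btheta^* - \hat\btheta_{k,m}\|_{\dot{\hat\bSigma}_{k,m}} \le \beta_k$ and its analogue for $\tilde\btheta_{k,m}$; these are exactly constraints \eqref{up:hat}--\eqref{up:tilde}, so $\btheta^* \in \cU_k$ on this good event.

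Next I would show that, on the good event, the planning suboptimality for any $r \in \cR$ is dominated by the terminal exploration value. Optimism (from $\btheta^* \in \cU_k$ together with the extra bonus in \eqref{eq:pseudo_V}) ensures that $\hat V_{k,h}(\cdot;\btheta,\pi,r)$ upper bounds the per-step discrepancy between $V_h(\cdot;\btheta,\pi,r)$ and its true-transition counterpart; unrolling the Bellman-type recursion defining $\hat V_{k,h}$ then gives $V_1^*(s_1;r) - V_1(s_1;\btheta^*,\hat\pi_r,r) \le 2\hat V_{K,1}(s_1)$ for every $r \in \cR$. Since the weighted covariance matrices are nondecreasing in $k$, the confidence sets are nested and the bonus radii effectively shrink, so $\hat V_{k,1}(s_1)$ is essentially nonincreasing in $k$ and hence $\hat V_{K,1}(s_1) \le \frac1K\sum_{k=1}^K \hat V_{k,1}(s_1)$. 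It therefore suffices to prove the cumulative bound $\sum_{k=1}^K \hat V_{k,1}(s_1) = \tilde O(d^2 + d\sqrt{K})$; dividing by $K$ reproduces the claimed $\tilde O(d^2/K + d/\sqrt{K})$.

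To bound the cumulative value I would unroll $\hat V_{k,1}$ along the realized trajectory $\{(s_h^k,a_h^k)\}_h$, absorbing the transition fluctuation into a martingale and reducing the sum to the accumulated bonus $\sum_{k,h}\beta\|\hat\bphi_{k,h,0}\|_{\dot{\hat\bSigma}_{k,0}^{-1}}$. A Cauchy--Schwarz split against the regression weights followed by the variance-weighted elliptical potential lemma bounds this by $\beta\sqrt{\sum_{k,h}\overline\sigma_{k,h,0}^2}\cdot\sqrt{\tilde O(d)}$, and since $\beta = \tilde O(\sqrt d)$ (because $1/\gamma^2 = \sqrt d$) the whole quantity is $\tilde O\big(d\sqrt{\sum_{k,h}\overline\sigma_{k,h,0}^2}\big)$. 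It then remains to show $\sum_{k,h}\overline\sigma_{k,h,0}^2 = \tilde O(K + d^2)$: the floor $\alpha^2 = 1/H$ summed over the $KH$ stage-episodes contributes $\tilde O(K)$; the aleatoric term $[\overline\VV_{k,0}V_{k,h+1}](s_h^k,a_h^k)$ contributes $\tilde O(K)$ because the value functions lie in $[0,1]$, so the law of total variance caps each episode's variance sum at $O(1)$; and the epistemic term $\gamma^2\|\bphi_{k,h,0}\|_{\bSigma_{k,h,0}^{-1}}$ together with the errors $E_{k,h,0}$ accumulate to $\tilde O(d^2)$. Plugging back gives $\tilde O(d\sqrt{K+d^2}) = \tilde O(d\sqrt K + d^2)$.

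The hard part will be the horizon-free control of $\sum_{k,h}\overline\sigma_{k,h,0}^2$ above. Naively $\sum_h[\VV V_{k,h+1}] = O(H)$ per episode, which would reintroduce polynomial dependence on $H$; the law of total variance caps the \emph{true} variance sum at $O(1)$ per episode, but the algorithm only accesses the \emph{estimated} variance $\overline\VV_{k,0}V_{k,h+1}$. Controlling the propagated estimation error requires the recursive high-order moment estimator of Algorithm~\ref{alg:home}: the order-$(m{+}1)$ regression bounds the variance-estimation error of order $m$ through $E_{k,h,m}$, and one must show that chaining these $M = \log(7KH)/\log 2$ levels keeps the total error polylogarithmic in $H$ rather than polynomial. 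Carrying out this coupled recursion simultaneously for the two value families $\hat V_{k,h}$ and $\tilde V_{k,h}$, and verifying that the accumulated cross-order error terms do not degrade the $\tilde O(K+d^2)$ variance budget, is the technical heart of the proof.
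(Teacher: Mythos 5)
Your proposal is correct and follows essentially the same route as the paper: validity of the confidence sets via a Bernstein-type self-normalized inequality (Lemma~\ref{lm:main_theta}), the reduction $V_1^*(s_1;r) - V_1(s_1;\btheta^*,\hat\pi_r,r) \le O(1)\cdot\hat V_{K,1}(s_1)$ (Lemma~\ref{lm:suboptimality_main}, with constant $4$ rather than your $2$, since the model error must be paid for both $\pi_r^*$ and $\hat\pi_r$), monotonicity and averaging (Lemma~\ref{lm:decreasing_main}), and finally the weighted elliptical potential plus the coupled high-order moment recursion over both value families to get $\sum_k \hat V_{k,1}(s_1) = \tilde O(d\sqrt K + d^2)$ (Lemmas~\ref{lm:R}--\ref{lm:G} and \ref{lm:sum}). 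The only presentational difference is that the paper does not bound the variance budget $\sum_{k,h}\overline\sigma_{k,h,0}^2$ as a standalone step but resolves the self-referential coupling you flag at the end via the simultaneous recursion on $\tilde R_m + |\tilde A_m| + 4\hat R_m + |\hat A_m|$ and the sequence lemma (Lemma~\ref{lm:sequence}), which is exactly the ``technical heart'' you identified.
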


The next corollary specifies the sample complexity of our algorithm.
\begin{corollary}\label{cor:samplecomplexity}
Under the same conditions as in Theorem~\ref{thm:main}, Algorithm~\ref{alg:exp} has sample complexity of $m(\varepsilon,\delta) = \tilde{O}(d^2\varepsilon^{-2})$ to output an $\varepsilon$-optimal policy in the planning phase. The exact expression of sample complexity is delayed to Appendix in Lemma~\ref{lm:exactSC}.
\end{corollary}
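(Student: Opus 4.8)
The plan is to invert the suboptimality bound of Theorem~\ref{thm:main} and solve for the number of episodes $K$ needed to reach accuracy $\varepsilon$. Writing that bound with its logarithmic factors made explicit, there is an absolute constant $C$ and a polylogarithmic factor $\iota = \mathrm{polylog}(d, H, K, 1/\delta)$ such that, on the same probability-$(1-\delta)$ event as in Theorem~\ref{thm:main},
\begin{align}
V_1^*(s_1;r) - V_1(s_1;\btheta^*, \hat\pi_r, r) \le C\iota\left(\frac{d^2}{K} + \frac{d}{\sqrt{K}}\right). \notag
\end{align}
To push the right-hand side below $\varepsilon$, I would bound each of the two terms by $\varepsilon/2$. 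Since $\varepsilon \le 1$ (the value function lies in $[0,1]$ under Assumption~\ref{as:reward}), the term $d/\sqrt{K}$ decays more slowly in $K$ and is therefore the binding constraint: requiring $C\iota d/\sqrt{K}\le \varepsilon/2$ yields $K \ge 4C^2\iota^2 d^2/\varepsilon^2$, whereas the requirement $C\iota d^2/K \le \varepsilon/2$ only needs $K \ge 2C\iota d^2/\varepsilon$, which is implied by the former whenever $\varepsilon\le 1$. Hence it suffices to take $K = \tilde O(d^2\varepsilon^{-2})$, and the $(\varepsilon,\delta)$-learnability guarantee is inherited directly from the theorem.

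The one subtlety is that $\iota$ itself depends on $K$ through $\beta_K$, $\eta$, and $\tau$ (all logarithmic in $K, H, 1/\delta$, as specified in Theorem~\ref{thm:main}), so the inequality $K \ge 4C^2\iota(K)^2 d^2/\varepsilon^2$ is implicit. I would resolve this in the standard way: because $\iota(K)$ grows only polylogarithmically in $K$, substituting any coarse overestimate $K_0 = \mathrm{poly}(d, H, \varepsilon^{-1}, \delta^{-1})$ into $\iota$ and then setting $K = \lceil 4C^2\iota(K_0)^2 d^2/\varepsilon^2\rceil$ makes the inequality hold, and the resulting $K$ remains $\tilde O(d^2\varepsilon^{-2})$ since only the logarithmic factors are affected. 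Tracking these factors through $\beta_K = 12\sqrt{d\eta\tau}+30\tau/\gamma^2 + \sqrt{\lambda}B$ with $\gamma=d^{-1/4}$ and $\lambda = d/B^2$ produces the exact polylogarithmic expression, which is precisely what Lemma~\ref{lm:exactSC} records.

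The main (and essentially only) obstacle is the bookkeeping of these implicit logarithmic factors; the algebra of inverting the $d^2/K + d/\sqrt{K}$ bound is routine. I do not expect to need any new probabilistic or structural argument beyond Theorem~\ref{thm:main}, as both the accuracy guarantee and the high-probability event transfer directly.
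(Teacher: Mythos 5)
Your proposal is correct and follows essentially the same route as the paper: the paper's proof of Lemma~\ref{lm:exactSC} likewise just sets the suboptimality bound of Theorem~\ref{thm:main} (in its exact form, Lemma~\ref{lm:suboptimality_bound}) equal to $\varepsilon$ and solves for $K$, yielding a $\frac{1}{\varepsilon^2}$-term plus a $\frac{1}{\varepsilon}$-term, both $\tilde O(d^2)$ in magnitude. Your extra step of resolving the self-referential logarithmic factors via a coarse overestimate $K_0$ is sound and in fact slightly more careful than the paper, which leaves $\beta$, $\iota$, $\zeta$, $M$ implicitly dependent on $K$ in its exact expression.
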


\begin{remark}
To the best of our knowledge, Corollary~\ref{cor:samplecomplexity} provides the first horizon-free sample complexity upper bound independent of state space size $S$ and action space size $A$ for reward-free exploration. This result shows that long-horizon planning does not add extra difficulty to reward-free exploration.
\end{remark}

\begin{corollary}
\label{cor:samplecomplexityH}
When rescaling the assumption $ \sum_{h=1}^Hr_h(s_h,a_h) \le 1$ to $\sum_{h=1}^Hr_h(s_h,a_h) \le H$, under the same conditions as in Theorem~\ref{thm:main}, Algorithm~\ref{alg:exp} has sample complexity of $m(\varepsilon, \delta) = O(H^2d^2\varepsilon^{-2})$ to output an $\varepsilon$-optimal policy in the planning phase.
\end{corollary}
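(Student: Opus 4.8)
The plan is to derive Corollary~\ref{cor:samplecomplexityH} from Theorem~\ref{thm:main} by a simple reward-rescaling reduction, exploiting two facts: the exploration phase of Algorithm~\ref{alg:exp} never accesses the reward, and value functions are linear in the reward.

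First I would observe that the exploration phase, and hence the estimator $\btheta_K$ together with the collected confidence sets $\cU_k$, is completely independent of the reward function; only the planning step in Line~\ref{ln:optimistic_planning} consumes $r$. Thus a single run of the exploration phase serves both reward scales, and I only need to compare the planning outputs for different reward normalizations.

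Next, given a reward $r$ with $\sum_{h=1}^H r_h(s_h,a_h)\le H$, I would define the rescaled reward $r' = r/H$, i.e. $r'_h = r_h/H$. Since $0\le \sum_{h=1}^H r_h(s_h,a_h)\le H$, we get $0\le \sum_{h=1}^H r'_h(s_h,a_h)\le 1$, so $r'$ satisfies Assumption~\ref{as:reward} and lies in $\cR$. Because value functions are linear in the reward and positive scaling does not change the maximizing policy, the planning step returns the same policy for $r$ and $r'$, namely $\hat\pi_r = \hat\pi_{r'}$, and moreover $V_1^*(s_1;r) = H\cdot V_1^*(s_1;r')$ and $V_1(s_1;\btheta^*,\hat\pi_r,r) = H\cdot V_1(s_1;\btheta^*,\hat\pi_{r'},r')$. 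Applying Theorem~\ref{thm:main} to $r'\in\cR$ and multiplying the resulting suboptimality bound by $H$ then yields
\begin{align}
V_1^*(s_1;r) - V_1(s_1;\btheta^*,\hat\pi_r,r) = \tilde{O}\left(\frac{Hd^2}{K} + \frac{Hd}{\sqrt{K}}\right). \notag
\end{align}

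Finally I would set this bound to $\varepsilon$ and solve for $K$. The dominant term $Hd/\sqrt{K}$ forces $K = \tilde{O}(H^2 d^2 \varepsilon^{-2})$, under which the first term $Hd^2/K = \tilde{O}(\varepsilon^2/H)$ is of strictly lower order, so the claimed sample complexity $m(\varepsilon,\delta) = O(H^2 d^2 \varepsilon^{-2})$ follows. There is no substantive obstacle here: the only points requiring care are verifying that the rescaled reward genuinely lands in the class $\cR$ to which Theorem~\ref{thm:main} applies, and that both the optimal value and the returned value scale exactly by $H$, so that the entire suboptimality gap---not merely one of its endpoints---picks up the factor $H$.
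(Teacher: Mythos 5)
Your proposal is correct and takes essentially the same route as the paper's own proof (Lemma~\ref{lm:exactSC_H}): both rescale $r' = r/H$ so that $r'$ satisfies Assumption~\ref{as:reward}, apply the bounded-total-reward result to $r'$, and use the fact that the true suboptimality scales exactly by $H$, so an $\varepsilon/H$-optimal policy for $r'$ is $\varepsilon$-optimal for $r$. The only cosmetic difference is that you solve the suboptimality bound of Theorem~\ref{thm:main} for $K$ after multiplying by $H$, while the paper substitutes $\varepsilon/H$ into the explicit sample-complexity expression of Lemma~\ref{lm:exactSC}; these are equivalent.
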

\begin{remark}
The assumption $\sum_{h=1}^Hr_h(s_h,a_h) \le H$ covers the vanilla assumption $r_h(s_h,a_h)\in[0,1]$. Therefore, compared with \citet{chen2022nearoptimal}, our analysis does not require the $d > H$ assumption and achieves the same sample complexity bound up to logarithmic factors except for the trivial $\tilde O(H)$ difference between time-homogeneous and time-inhomogeneous models with a milder assumption. This improvement can be attributed to the refined value target regression technique, high-order moment estimation (HOME), adopted in our approach. We provide a detailed analysis of this improvement in the ``High-order Moment Estimation" part in the Section~\ref{sec:alg}.

\end{remark}
\subsection{Lower Bounds}
\label{se:lower_bound}
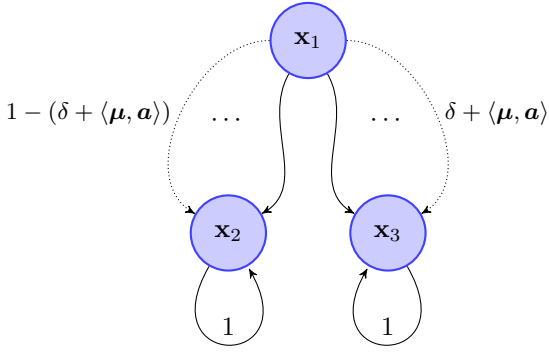
\begin{figure}
\centering
\begin{tikzpicture}[node distance=1.5cm,>=stealth',bend angle=45,auto]
\tikzstyle{place}=[circle,thick,draw=blue!75,fill=blue!20,minimum size=10mm]
\begin{scope}
\node [place, label] (c0){$\xb_1$};
\coordinate [below left of = c0, label = center:{$\cdots$}] (l1) {};
\coordinate [below right of = c0, label = center:{$\cdots$}] (r1) {};
\node [place] (l2) [below of=l1]{$\xb_{2}$};
\node [place] (r2) [below of=r1]{$\xb_{3}$};
\path[->] (c0)
    edge [in=150,out=180, densely dotted] node[left]{\footnotesize{$1-(\delta + \la\bmu,\ba\ra)$}} (l2)
    edge [in=30,out=-120] node[]{} (l2)
    edge [in=30,out=0, densely dotted] node[right, align = center]{\footnotesize{$\delta + \la\bmu,\ba\ra$}} (r2)
    edge [in=150,out=-60] node[]{} (r2)
    ;
\path[->] (l2)
    edge [in=-60,out=-120, min distance = 6cm, loop] node[above] {$1$} ()
    ;
\path[->] (r2)
    edge [in=-120,out=-60, min distance = 6cm, loop] node[above] {$1$} ()
    ;
\end{scope} 
\end{tikzpicture}
\caption{The transition kernel of the hard-to-learn linear mixture MDPs.}
\label{fig:hardmdp}
\end{figure}

The following results provide lower bounds of the sample complexity and suggest that our algorithm is minimax optimal. We will consider the \textit{hard-to-learn linear mixture MDPs} constructed in \citet{zhou22high}. The state space is $\cS = \{x_1, x_2, x_3\}$ and the action space is $\cA=\{ \textbf{a}\} = \{-1, 1\}^{d-1}$. The reward function satisfies $r(x_1,\cdot) = r(x_2,\cdot) = 0$, and $r(x_3,\cdot) = \frac{1}{H}$. The transition probability is defined to be $\PP(x_2 \mid x_1,\ba) = 1-(\delta + \la\bmu,\ba\ra)$ and $\PP(x_3 \mid x_1,\ba) = \delta + \la\bmu,\ba\ra$, where $\delta = 1/6$ and $\bmu \in \{ -\Delta, \Delta \}^{d-1}$ with $\Delta = \sqrt{\delta/K}/(4\sqrt{2})$.

\begin{theorem}\label{thm:lowerbound}
Suppose $B>1$. Then for any algorithm $\texttt{ALG}_{Free}$ solving reward-free linear mixture MDP problems satisfying assumption~\ref{as:reward}, there exist a linear mixture MDP $\cM$ such that $\texttt{ALG}_{Free}$ needs to collect at least $\Omega\left(d^2\varepsilon^{-2}\right)$ episodes of samples to output an $\varepsilon$-optimal policy with probability at least $1-\delta$. This lower bound matches the sample complexity upper bound provided in Corollary~\ref{cor:samplecomplexity}, which shows our upper bound is optimal.
\end{theorem}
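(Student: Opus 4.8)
The plan is to turn the $\varepsilon$-optimality requirement on the family $\{\cM_{\bmu}:\bmu\in\{-\Delta,\Delta\}^{d-1}\}$ into a lower bound for recovering the sign vector $\ba^\star:=\bmu/\Delta\in\{-1,1\}^{d-1}$, and then to invoke the average-case form of Assouad's lemma over this hypercube. First I would fix the test reward $r(x_3,\cdot)=1/H$ from the construction and compute the value functions. Since $x_2$ and $x_3$ are absorbing and only $x_3$ is rewarding, a trajectory starting at $x_1$ collects total reward $(H-1)/H$ exactly when its first transition lands in $x_3$; hence $V_1^\pi(x_1)=\frac{H-1}{H}\big(\delta+\langle\bmu,\pi(x_1)\rangle\big)$ with the offset $\delta=1/6$. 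The optimal action is $\ba^\star$, and for a planning policy that plays $\ba$ at $x_1$ the suboptimality is $V_1^*-V_1^\pi=\frac{H-1}{H}\langle\bmu,\ba^\star-\ba\rangle=\frac{H-1}{H}\,2\Delta\,d_H(\ba,\ba^\star)$, where $d_H$ denotes Hamming distance. Thus any $\varepsilon$-optimal policy must return a sign pattern within Hamming radius $\tfrac{\varepsilon}{2\Delta}\cdot\tfrac{H}{H-1}\le 4\sqrt{3}\,\varepsilon\sqrt{K}$ of $\ba^\star$, using the construction's value $\Delta=\sqrt{\delta/K}/(4\sqrt{2})=1/(8\sqrt{3K})$.

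Next I would quantify the information available in the exploration phase. Each episode contains exactly one informative transition---the step out of $x_1$---which is a Bernoulli draw with success probability $\delta+\langle\bmu,\ba\rangle$; every later step sits in an absorbing state and reveals nothing, so the reward-free restriction is costless here. The structural crux is that flipping a single coordinate of $\ba^\star$ shifts this success probability by exactly $2\Delta|a_i|=2\Delta$, \emph{independently} of the action $\ba$ played, because $|a_i|=1$. Hence the single-episode KL divergence between two hypotheses differing in coordinate $i$ satisfies the uniform bound $\mathrm{KL}\le (2\Delta)^2/\big(\delta(1-\delta)\big)$, with no dependence on the agent's (possibly history-adaptive) choices. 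Tensorizing over the $K$ episodes by the chain rule for KL---using that a fixed algorithm applies the same policy map under both hypotheses---the divergence between the two $K$-episode interaction laws is at most $K\cdot(2\Delta)^2/\big(\delta(1-\delta)\big)$. With the prescribed $\Delta=\sqrt{\delta/K}/(4\sqrt{2})$ this total collapses to an absolute constant below $1$ (numerically at most $0.2$ for $\delta=1/6$); this cancellation is exactly what pins down the scaling of $\Delta$.

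I would then apply Assouad's lemma, which lower-bounds the worst-case expected Hamming error by $\frac{d-1}{2}\big(1-\max\|P-P'\|_{\mathrm{TV}}\big)$, the maximum taken over interaction laws $P,P'$ of neighboring hypotheses; bounding each total-variation distance by Pinsker's inequality and the constant KL estimate above yields $\EE[d_H]\ge c(d-1)$ for an absolute constant $c>0$ (numerically $c\approx0.36$). On the other hand, if the algorithm is $\varepsilon$-optimal with probability at least $1-\zeta$ (the confidence level, written $\delta$ in the theorem), then $d_H\le 4\sqrt{3}\,\varepsilon\sqrt{K}$ on an event of probability $\ge1-\zeta$, while $d_H\le d-1$ always, so $\EE[d_H]\le 4\sqrt{3}\,\varepsilon\sqrt{K}+(d-1)\zeta$. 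Combining the two estimates gives $(c-\zeta)(d-1)\le 4\sqrt{3}\,\varepsilon\sqrt{K}$, and for any confidence level $\zeta\le c/2$ this rearranges to $K=\Omega(d^2\varepsilon^{-2})$, matching Corollary~\ref{cor:samplecomplexity}. The hypothesis $B>1$ enters only to ensure that the parameter $\btheta^*$ realizing $\cM_{\bmu}$ fits inside the norm ball of Definition~\ref{def:mdp}.

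The step I expect to be most delicate is the information-theoretic bookkeeping for adaptive, reward-free interaction: I must justify that the KL tensorizes cleanly across episodes even though the action at $x_1$ in episode $k$ may depend on the entire observed history, and that the per-episode bound is genuinely action-free. This action-independence is precisely where the hypercube design with $|a_i|=1$ is indispensable---it prevents the learner from concentrating its queries to resolve a few coordinates cheaply, which is what would otherwise degrade the dependence from $d^2$ to $d$. A secondary subtlety is the minimax-to-PAC conversion: exploiting the additive, per-coordinate structure of the loss (so the failure event contributes only $(d-1)\zeta$ to $\EE[d_H]$) rather than a crude $V_{\max}\le1$ bound is what preserves the correct $\varepsilon^{-2}$ rate at constant confidence, and one must separately check that $\delta+(d-1)\Delta\in[0,1]$ so the constructed kernels are valid probabilities, which holds throughout the operative regime $K=\Omega(d^2)$.
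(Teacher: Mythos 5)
Your proposal is correct in substance, but it takes a genuinely different route from the paper. The paper never runs an information-theoretic argument itself: it cites the regret lower bound $\Omega\left(d\sqrt{K'}\right)$ for the same hard instance family as a black box (Theorem 5.4 of \citet{zhou22high}, restated as Lemma~\ref{lm:mdp-lower}), converts regret into a PAC sample-complexity bound for standard (non-reward-free) RL by a replay trick --- run the learner for $K$ episodes, then keep executing its final policy for $(c-1)K$ further episodes, so the regret accumulated in the tail forces the final policy to be suboptimal in expectation (Lemma~\ref{lm:nonfree-sample}) --- and finally observes that any reward-free learner yields a standard learner by ignoring rewards, so the bound transfers by contradiction (proof of Theorem~\ref{thm:lowerbound}). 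You instead re-derive the statistical core directly in PAC form: the Hamming decomposition of the suboptimality $\frac{H-1}{H}\,2\Delta\, d_H(\ba,\ba^\star)$, an action-independent per-episode KL bound (where $|a_i|=1$ is indeed the crucial point), adaptive tensorization via the chain rule, and Assouad's lemma plus the expected-Hamming-to-PAC conversion. Your route is self-contained, avoids the regret-to-PAC conversion entirely, and makes transparent where the $d^2$ (rather than $d$) scaling comes from; the paper's route is much shorter and inherits all validity conditions on the construction from the cited theorem.

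Two points in your write-up need tightening. First, a small slip: since $H/(H-1)>1$, your claimed inequality $\frac{\varepsilon}{2\Delta}\cdot\frac{H}{H-1}\le 4\sqrt{3}\,\varepsilon\sqrt{K}$ goes the wrong way; it should read $\le 8\sqrt{3}\,\varepsilon\sqrt{K}$ for $H\ge 2$, which only affects constants. Second, the validity constraint $\delta+(d-1)\Delta\in[0,1]$ is circular as you state it, because $\Delta$ is defined through the very $K$ you are lower-bounding, so you cannot simply restrict attention to the ``operative regime $K=\Omega(d^2)$.'' The standard patch is a two-case argument: if $K< c_0 d^2$, redefine $\Delta=\delta/(d-1)$; the total KL over $K$ episodes is then still a constant below one, Assouad still forces $\EE[d_H]=\Omega(d)$, while $\varepsilon$-optimality forces $d_H=O(\varepsilon d/\delta)$ --- a contradiction once $\varepsilon$ is below a fixed constant, so $K\ge c_0 d^2$ must hold and your main argument applies. (The paper's proof faces the same issue through the hypothesis $K'\ge 3d^2$ of Lemma~\ref{lm:mdp-lower}, which it likewise leaves implicit.)
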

\begin{remark}
The lower bound is similar to the lower bound provided in \citet{chen2022nearoptimal}. The first difference is that we rescale the non-zero reward in hard-to-learn cases from $1$ to  $\frac{1}{H}$ in order to satisfy Assumption~\ref{as:reward}. The second difference is that we consider the time-homogeneous model instead of the time-inhomogeneous one in theirs. By these changes, our lower bound for reward-free exploration provided in Theorem~\ref{thm:lowerbound} removes the unnecessary polynomial dependency on episode length $H$ introduced by the scale of total reward.
\end{remark}

\begin{corollary}\label{cor:lowerboundH}
Under the same conditions as Theorem~\ref{thm:lowerbound} except replacing $\sum_{h=1}^H r_h(s_h,a_h) \le 1$ with $r_h\in[0,1]$, for any algorithm $\texttt{ALG}_{Free}$ solving reward-free linear mixture MDP problems satisfying assumption~\ref{as:reward}, there exist a linear mixture MDP $\cM$ such that $\texttt{ALG}_{Free}$ needs to collect at least $\tilde{\Omega}\left(H^2d^2\varepsilon^{-2}\right)$ episodes to output an $\varepsilon$-optimal policy with probability at least $1-\delta$. {This means our upper bound is optimal.}
\end{corollary}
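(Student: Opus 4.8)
The plan is to derive Corollary~\ref{cor:lowerboundH} from Theorem~\ref{thm:lowerbound} by a reward-rescaling reduction, exploiting the defining feature of the reward-free setting: the exploration phase never observes the reward. The hard instance of Theorem~\ref{thm:lowerbound} is a family of transition kernels indexed by $\bmu \in \{-\Delta, \Delta\}^{d-1}$, and the per-step reward $r(x_3,\cdot) = 1/H$ is chosen \emph{only} to satisfy Assumption~\ref{as:reward}. Since the transition kernels, and hence everything an algorithm can observe during exploration, are identical regardless of which reward is later handed to the planner, I would keep exactly the same family of transition kernels and reuse verbatim the information-theoretic lower bound on the exploration phase from Theorem~\ref{thm:lowerbound}, changing only the reward presented in the planning phase.

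Concretely, I would replace the per-step reward $r(x_3,\cdot) = 1/H$ by $r(x_3,\cdot) = 1$ (keeping $r(x_1,\cdot) = r(x_2,\cdot) = 0$), which satisfies the relaxed assumption $r_h \in [0,1]$ with total reward at most $H$. Write the original reward as $r^{(1)}$ and the rescaled one as $r^{(H)} = H\,r^{(1)}$. Because the value function is linear in the reward for any fixed transition kernel and policy, and because the optimal policy is invariant under a positive rescaling of the reward, we obtain the exact identities $V_1^\pi(s_1; r^{(H)}) = H\,V_1^\pi(s_1; r^{(1)})$ and $V_1^*(s_1; r^{(H)}) = H\,V_1^*(s_1; r^{(1)})$. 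The suboptimality gap therefore scales exactly by $H$:
\begin{align*}
V_1^*(s_1; r^{(H)}) - V_1^{\pi}(s_1; r^{(H)}) = H\big( V_1^*(s_1; r^{(1)}) - V_1^{\pi}(s_1; r^{(1)}) \big),
\end{align*}
so a policy is $\varepsilon$-optimal for $r^{(H)}$ if and only if it is $(\varepsilon/H)$-optimal for $r^{(1)}$.

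It then remains to invoke Theorem~\ref{thm:lowerbound} with target accuracy $\varepsilon' = \varepsilon/H$: on this transition family, any reward-free algorithm that returns an $\varepsilon'$-optimal policy for $r^{(1)}$ with probability at least $1-\delta$ must collect $\Omega\big(d^2 (\varepsilon')^{-2}\big)$ episodes. Substituting $\varepsilon' = \varepsilon/H$ and using the equivalence above yields $\Omega\big(d^2 (\varepsilon/H)^{-2}\big) = \Omega\big(H^2 d^2 \varepsilon^{-2}\big)$ episodes needed to find an $\varepsilon$-optimal policy for $r^{(H)}$, which is the claimed bound; the $\tilde\Omega$ absorbs the logarithmic factors already tracked in the confidence-level bookkeeping of the underlying construction. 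Matching this against the $\tilde O(H^2 d^2 \varepsilon^{-2})$ upper bound of Corollary~\ref{cor:samplecomplexityH} shows the algorithm stays optimal under the relaxed reward scale.

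This corollary is essentially a bookkeeping consequence of Theorem~\ref{thm:lowerbound}, so I do not anticipate a genuine obstacle; the only points that require care are confirming that (i) the rescaled reward stays within $[0,1]$ per step, which holds by construction, and (ii) the value scaling is \emph{exactly} multiplicative in $H$, which hinges on the reward entering the Bellman recursion linearly and on the optimal policy being reward-scale invariant. The conceptual takeaway is that relaxing Assumption~\ref{as:reward} from total reward bounded by $1$ to per-step reward in $[0,1]$ inflates the attainable value range by a factor of $H$; since the lower bound scales quadratically in the inverse target accuracy, this inflation manifests precisely as the extra $H^2$ factor.
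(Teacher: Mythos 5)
Your proposal is correct and follows essentially the same route as the paper's own proof: keep the hard transition family from Theorem~\ref{thm:lowerbound}, rescale the reward at $x_3$ from $1/H$ to $1$ so that suboptimality gaps scale exactly by $H$, and invoke Theorem~\ref{thm:lowerbound} at accuracy $\varepsilon/H$ to obtain the $\Omega\left(H^2d^2\varepsilon^{-2}\right)$ bound. Your additional observations (exact linearity of the value in the reward, and the exploration phase being blind to the reward) are the same facts the paper uses implicitly, just spelled out more carefully.
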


\section{Proof Sketch of Theorem~\ref{thm:main}} \label{sc:main_proof}
We provide the proof sketch of Theorem~\ref{thm:main} along with several key lemmas in big-O notation. The detail of these lemmas is restated in Appendix~\ref{app:main_results}. The following lemmas are conditioned on some good events.

Firstly, Lemma~\ref{lm:suboptimality_main} controls the suboptimality gap between optimal value functions and our estimated value function in the planning phase with the uncertainty along trajectories.

\begin{lemma}\label{lm:suboptimality_main}
For any reward function $r$ in the planning phase, the suboptimality gap of the outputted policy $\hat\pi_r$ can be bounded as
\begin{align}
V_1^*\left(s_1;r\right) - V_1\left(s_1;\btheta^*, \hat\pi_r, r\right) \le 4 \hat V_{K,1}\left(s_1\right). \label{eq:mian_suboptimality}
\end{align}
\end{lemma}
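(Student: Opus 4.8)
The plan is to reduce the suboptimality to the value gap of a \emph{fixed} policy evaluated under the true parameter $\btheta^*$ versus the learned parameter $\btheta_K$, and then to observe that $\hat V_{K,1}$ is engineered precisely to dominate this gap. I work on the good event that $\btheta^*\in\cU_K$; note also that $\btheta_K\in\cU_K$ by construction in Line~\ref{ln:optimization}. Writing $V_1^*(s_1;r)=V_1(s_1;\btheta^*,\pi^*,r)$ with $\pi^*$ optimal for $\btheta^*$, and inserting $\btheta_K$ and $\hat\pi_r$, I would split
\begin{align}
V_1^*(s_1;r)-V_1(s_1;\btheta^*,\hat\pi_r,r)
&= \underbrace{V_1(s_1;\btheta^*,\pi^*,r)-V_1(s_1;\btheta_K,\pi^*,r)}_{(\mathrm{I})} \notag\\
&\quad+ \underbrace{V_1(s_1;\btheta_K,\pi^*,r)-V_1(s_1;\btheta_K,\hat\pi_r,r)}_{(\mathrm{II})} \notag\\
&\quad+ \underbrace{V_1(s_1;\btheta_K,\hat\pi_r,r)-V_1(s_1;\btheta^*,\hat\pi_r,r)}_{(\mathrm{III})}. \notag
\end{align}
Since $\hat\pi_r=\argmax_\pi V_1(\cdot;\btheta_K,\pi,r)$ (Line~\ref{ln:optimistic_planning}), the middle term satisfies $(\mathrm{II})\le0$, so it remains to bound the two fixed-policy value differences $(\mathrm{I})$ and $(\mathrm{III})$.

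For a fixed policy $\pi$ and any $\btheta\in\cU_K$, I would establish the pointwise bound $V_h(s;\btheta^*,\pi,r)-V_h(s;\btheta,\pi,r)\le 2\bar V_{K,h}(s;\btheta,\pi,r)$ by backward induction on $h$. The Bellman equation together with the linearity of $\bphi_V$ in $V$ gives the one-step identity
\begin{align}
&V_h(s;\btheta^*,\pi,r)-V_h(s;\btheta,\pi,r) \notag\\
&\quad= \bphi^\top_{V_{h+1}(\cdot;\btheta^*,\pi,r)-V_{h+1}(\cdot;\btheta,\pi,r)}(s,\pi(s))\,\btheta^*
+ \bphi^\top_{V_{h+1}(\cdot;\btheta,\pi,r)}(s,\pi(s))(\btheta^*-\btheta), \notag
\end{align}
where $a=\pi(s)$. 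The first summand is controlled by the induction hypothesis together with the monotonicity of the operator $V\mapsto\bphi_V^\top(s,a)\btheta^*=[\PP V](s,a)$; for the second, Cauchy--Schwarz with $\btheta^*,\btheta\in\cU_K$ (constraint~\eqref{up:tilde}) yields $\|\btheta^*-\btheta\|_{\dot{\tilde\bSigma}_{K,0}}\le2\beta$, so it is at most $2\beta\|\bphi_{V_{h+1}(\cdot;\btheta,\pi,r)}(s,\pi(s))\|_{\dot{\tilde\bSigma}_{K,0}^{-1}}$, which feeds into the per-step uncertainty $u_{K,h}$ appearing in $\bar V$ and contributes a factor $2$. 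I would then prove $\bar V_{K,h}\le\hat V_{K,h}$ by a second induction: passing from $\btheta^*$-propagation to $\btheta$-propagation costs $\bphi^\top_{\hat V_{K,h+1}}(s,\pi(s))(\btheta^*-\btheta)\le2\beta\|\bphi_{\hat V_{K,h+1}}(s,\pi(s))\|_{\dot{\hat\bSigma}_{K,0}^{-1}}$ (using \eqref{up:hat}), which is exactly the optimism bonus built into \eqref{eq:pseudo_V}. Chaining the two inductions gives $V_h(s;\btheta^*,\pi,r)-V_h(s;\btheta,\pi,r)\le2\hat V_{K,h}(s;\btheta,\pi,r)$.

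Applying this bound to $(\mathrm{I})$ (with $\pi=\pi^*$, $\btheta=\btheta_K$) gives $(\mathrm{I})\le2\hat V_{K,1}(s_1;\btheta_K,\pi^*,r)$, and telescoping $(\mathrm{III})$ under $\btheta^*$ in the same symmetric way (only $\|\btheta_K-\btheta^*\|$ enters) gives $(\mathrm{III})\le2\hat V_{K,1}(s_1;\btheta_K,\hat\pi_r,r)$. Since $\btheta_K\in\cU_K$, $r\in\cR$, and $\pi^*,\hat\pi_r$ are admissible policies, each of $(\btheta_K,\pi^*,r)$ and $(\btheta_K,\hat\pi_r,r)$ is feasible for the maximization $\hat V_{K,1}(s_1)=\max_{\pi,\btheta\in\cU_K,r\in\cR}\hat V_{K,1}(s_1;\btheta,\pi,r)$ of Line~\ref{ln:optimization}, so both are at most $\hat V_{K,1}(s_1)$. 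Adding the two contributions and recalling $(\mathrm{II})\le0$ yields the claimed $V_1^*(s_1;r)-V_1(s_1;\btheta^*,\hat\pi_r,r)\le4\hat V_{K,1}(s_1)$, the constant $4$ being $2\times2$: two terms, each carrying the $2\beta$ factor from the confidence set.

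I expect the main obstacle to be the careful bookkeeping of the two truncations $\min\{1,\cdot\}$ inside the inductions: whenever a truncation is active one must fall back on the crude bound that the value differences are at most $1$ (which itself relies on a good-event guarantee that the relevant value functions lie in $[0,1]$), and one must verify that the composition of $u_{K,h}$, the optimism bonus, and the truncation preserves both $\le2\bar V_{K,h}$ and $\bar V_{K,h}\le\hat V_{K,h}$ in the untruncated regime. A secondary point is to keep the covariance matrices $\dot{\tilde\bSigma}_{K,0}$ and $\dot{\hat\bSigma}_{K,0}$ entering $u_{K,h}$ and the optimism bonus aligned with the norms actually controlled by $\cU_K$, and to reconcile the step index of the value function appearing in $u_{K,h}$ with the one-step regression error that arises in the telescoping identity.
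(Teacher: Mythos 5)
Your proposal is correct and follows essentially the same route as the paper: the same decomposition exploiting optimality of $\hat\pi_r$ under $\btheta_K$ (so the middle term is nonpositive), the same Cauchy--Schwarz/confidence-set bound $\|\btheta_K-\btheta^*\|_{\dot{\tilde\bSigma}_{K,0}}\le 2\beta$ on the one-step errors, and the same optimism-plus-feasibility step at Line~\ref{ln:optimization}, producing the constant $4=2+2$. The only cosmetic difference is that you run the induction pointwise through $\bar V_{K,h}$ (expectation inside the truncation) where the paper passes through the trajectory functional $W_1$ of Lemma~\ref{lm:traj} and Lemma~\ref{lm:optimism} (equivalent up to Jensen's inequality for $\min\{1,\cdot\}$), and the step-index mismatch you flag in $u_{K,h}$ (written with $V_h$ in \eqref{eq:pseudo_r} but used as if with $V_{h+1}$) is present in the paper's own proof as well.
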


Then, the next lemma shows that the uncertainty along trajectories decreases with respect to episodes. This lemma is intuitively right since the uncertainty should decrease with more information collected.

\begin{lemma}\label{lm:decreasing_main}
For uncertainty along trajectories, we have 
\begin{align}
\hat V_{K,1}(s_1;\btheta_K,\pi_K,r_K) \le \frac{1}{K}\bigg(\sum_{k=1}^K \hat V_{k,1}(s_1;\btheta_k,\pi_k,r_k )\bigg). \notag
\end{align}
\end{lemma}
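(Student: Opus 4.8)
The plan is to deduce the averaged bound from the stronger monotonicity statement that the optimized trajectory uncertainty $W_k := \hat V_{k,1}(s_1;\btheta_k,\pi_k,r_k)$ is non-increasing in $k$. Indeed, once $W_{k+1}\le W_k$ is established for every $k$, we obtain $W_K\le W_k$ for all $k\le K$, and averaging over $k$ gives $\hat V_{K,1}(s_1;\btheta_K,\pi_K,r_K)=W_K\le\frac1K\sum_{k=1}^K W_k$, which is exactly the claim. So the whole task reduces to proving $W_{k+1}\le W_k$.

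First I would record two structural facts. (i) The confidence sets are nested, $\cU_{k+1}\subseteq\cU_k$, because the update only ever \emph{adds} the constraints \eqref{up:hat} and \eqref{up:tilde}; in particular the episode-$(k{+}1)$ maximizer $(\btheta_{k+1},\pi_{k+1},r_{k+1})$ is feasible for the episode-$k$ optimization, so $W_k\ge\hat V_{k,1}(s_1;\btheta_{k+1},\pi_{k+1},r_{k+1})$. (ii) The frozen covariance matrices are monotone in the Loewner order, $\dot{\hat\bSigma}_{k+1,m}\succeq\dot{\hat\bSigma}_{k,m}$ and $\dot{\tilde\bSigma}_{k+1,m}\succeq\dot{\tilde\bSigma}_{k,m}$, since $\dot{\hat\bSigma}_{k+1,m}=\hat\bSigma_{k,H+1,m}$ and each within-episode update adds a positive-semidefinite rank-one term. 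Hence $\dot{\hat\bSigma}_{k+1,m}^{-1}\preceq\dot{\hat\bSigma}_{k,m}^{-1}$, so for any \emph{fixed} feature vector the step uncertainty $u_{k,h}$ in \eqref{eq:pseudo_r} and the bonus in \eqref{eq:pseudo_V} can only shrink as $k$ grows.

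Combining (i) and (ii), it suffices to prove the fixed-argument monotonicity $\hat V_{k+1,h}(\cdot;\xi)\le\hat V_{k,h}(\cdot;\xi)$ for $\xi=(\btheta_{k+1},\pi_{k+1},r_{k+1})$, which I would establish by backward induction on $h$ from $H{+}1$ down to $1$. The base case $\hat V_{\cdot,H+1}\equiv0$ is trivial. For the inductive step the $u$-term shrinks by (ii), and the linear term $\bphi^\top_{\hat V_{k,h+1}}(s,\pi(s))\btheta$, being the expected next-stage value under the transition kernel parametrized by $\btheta$, is monotone in its value argument, so the induction hypothesis $\hat V_{k+1,h+1}\le\hat V_{k,h+1}$ propagates through it; the outer $\min\{1,\cdot\}$ preserves the inequality.

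The main obstacle is the exploration bonus $2\beta\|\bphi_{\hat V_{k,h+1}}(s,\pi(s))\|_{\dot{\hat\bSigma}_{k,0}^{-1}}$, which depends on the episode index through both the covariance and the changing value function $\hat V_{k,h+1}$. Because the basis measures $\bphi_i$ in Definition~\ref{def:mdp} are \emph{signed}, $\|\bphi_V\|_{\dot{\hat\bSigma}_{k,0}^{-1}}$ is not monotone in $V$, so a smaller $\hat V_{k+1,h+1}$ does not automatically produce a smaller feature norm, and the naive split $\bphi_{\hat V_{k,h+1}}=\bphi_{\hat V_{k+1,h+1}}+\bphi_{\Delta}$ with $\Delta\ge0$ leaves an uncontrolled term. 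To resolve this I would lean on the good event, under which $\btheta,\btheta^*\in\cU_k$ and $|\bphi^\top_V(\btheta-\btheta^*)|\le 2\beta\|\bphi_V\|_{\dot{\hat\bSigma}_{k,0}^{-1}}$; this sandwiches the optimistic one-step map between the true monotone quantity $[\PP V]$ and its inflated version, and, combined with the Loewner monotonicity of (ii) and the $\min\{1,\cdot\}$ truncation, forces the truncated one-step map to be non-increasing in $k$. This sandwiching step is the technical heart of the lemma; the remaining manipulations are routine bookkeeping.
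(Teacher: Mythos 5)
Your overall skeleton coincides with the paper's proof (which follows Lemma 14 of \citet{chen2022nearoptimal}): reduce the claim to monotonicity of $W_k=\hat V_{k,1}(s_1;\btheta_k,\pi_k,r_k)$ in $k$, obtain that from (i) nestedness $\cU_{k+1}\subseteq\cU_k$ together with optimality of $(\btheta_k,\pi_k,r_k)$, and (ii) fixed-argument monotonicity $\hat V_{k+1,h}(\cdot;\xi)\le\hat V_{k,h}(\cdot;\xi)$ proved by backward induction on $h$ using the Loewner ordering $\dot{\hat\bSigma}_{k,m}\preceq\dot{\hat\bSigma}_{k+1,m}$ and $\dot{\tilde\bSigma}_{k,m}\preceq\dot{\tilde\bSigma}_{k+1,m}$. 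Up to the induction step you track the paper exactly; note in particular that the $u$-term is unproblematic for both of you, since $u_{k,h}(\cdot,\cdot;\btheta,\pi,r)$ depends on $k$ only through $\dot{\tilde\bSigma}_{k,0}$ (its value function $V_h(\cdot;\btheta,\pi,r)$ is $k$-independent).

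The genuine gap is in your resolution of what you yourself call the technical heart. The good event gives, for any value function $V$ and feasible $\btheta$, the sandwich $[\PP V](s,a)\le\bphi_V^\top(s,a)\btheta+2\beta\big\|\bphi_V(s,a)\big\|_{\dot{\hat\bSigma}_{k,0}^{-1}}\le[\PP V](s,a)+4\beta\big\|\bphi_V(s,a)\big\|_{\dot{\hat\bSigma}_{k,0}^{-1}}$. To compare episodes you must apply the upper bound at index $k+1$ and the lower bound at index $k$; writing $T_k=2\beta\big\|\bphi_{\hat V_{k,h+1}}(s,\pi(s))\big\|_{\dot{\hat\bSigma}_{k,0}^{-1}}+\bphi_{\hat V_{k,h+1}}^\top(s,\pi(s))\btheta$, the induction hypothesis $\hat V_{k+1,h+1}\le\hat V_{k,h+1}$ and monotonicity of the \emph{true} kernel $\PP$ then yield only $T_{k+1}-T_k\le 4\beta\big\|\bphi_{\hat V_{k+1,h+1}}(s,\pi(s))\big\|_{\dot{\hat\bSigma}_{k+1,0}^{-1}}$, a residual with the wrong sign whose size is exactly the bonus you are trying to control. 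Nothing forces $T_{k+1}\le T_k$, and the $\min\{1,\cdot\}$ truncation cannot repair this, since it only helps when both sides saturate at $1$. In general, monotonicity of a bonus-inflated (optimistic) quantity can never be deduced by sandwiching it against the un-inflated truth: the slack of the sandwich \emph{is} the bonus. The paper does not attempt this; in its induction step it asserts that, alongside the Loewner ordering, both maps $V\mapsto\big\|\bphi_V(s,\pi(s))\big\|_{\bSigma^{-1}}$ and $V\mapsto\bphi_V^\top(s,\pi(s))\btheta$ are monotone in the value argument, so the induction hypothesis passes through every term of \eqref{eq:pseudo_V} directly. Your signedness objection to the first of these is a fair criticism of that step (and, note, it applies equally to the linear term you were willing to accept as monotone, since $\bphi(\cdot|s,a)^\top\btheta$ need not be a nonnegative measure for an arbitrary $\btheta\in\cU_k$), but your proposed substitute fails; to complete a proof you would have to either justify the monotonicity assertions the paper relies on or find a genuinely different argument, not the sandwich.
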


The last lemma upper bounds the sum of the uncertainty along trajectories.
\begin{lemma}\label{lm:sum_main}
For any $0<\delta<1$, with probability at least $1-4M\delta$, we have
\begin{align}
\sum_{k=1}^K \hat V_{k,1}(s_1;\btheta_k,\hat\pi_k,r_k ) = \tilde{O}(d\sqrt{K} + d^2). \label{eq:main_sum}
\end{align}
\end{lemma}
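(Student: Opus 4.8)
The plan is to unroll the Bellman-type recursion \eqref{eq:pseudo_V} for $\hat V_{k,h}$ along the sampled trajectory $\{(s_h^k,a_h^k)\}_{h=1}^H$ and sum the per-episode bounds over $k$. Throughout I would condition on the good event on which the confidence-set constraints \eqref{up:hat} and \eqref{up:tilde} hold for every level $m\in\overline{[M]}$, so that $\btheta^*\in\cU_k$ and $\|\btheta_k-\btheta^*\|_{\dot{\hat\bSigma}_{k,0}}\le 2\beta$. On this event I can replace the optimistic term $\bphi_{\hat V_{k,h+1}}^\top(s_h^k,a_h^k)\btheta_k$ by the true one-step mean $[\PP\hat V_{k,h+1}](s_h^k,a_h^k)$ at the cost of an additional $2\beta\,\|\bphi_{\hat V_{k,h+1}}(s_h^k,a_h^k)\|_{\dot{\hat\bSigma}_{k,0}^{-1}}$, which merges with the bonus already present in \eqref{eq:pseudo_V}. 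Writing $[\PP\hat V_{k,h+1}](s_h^k,a_h^k)=\hat V_{k,h+1}(s_{h+1}^k)+\epsilon_{k,h}$ with $\epsilon_{k,h}$ a bounded martingale difference and telescoping over $h$ (with $\hat V_{k,H+1}\equiv 0$) bounds $\hat V_{k,1}(s_1)$ by $\sum_h u_{k,h}(s_h^k,a_h^k)+O(\beta)\sum_h\|\bphi_{\hat V_{k,h+1}}(s_h^k,a_h^k)\|_{\dot{\hat\bSigma}_{k,0}^{-1}}+\sum_h\epsilon_{k,h}$. Summing over $k$ leaves a per-step uncertainty sum $(\mathrm{I})$, an elliptical-bonus sum $(\mathrm{II})$, and a martingale sum $(\mathrm{III})$.

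For $(\mathrm{I})$ and $(\mathrm{II})$ I would use the identity $\|\bphi\|_{\dot\bSigma_{k,0}^{-1}}=\sigma_{k,h,0}\,\|\bphi/\sigma_{k,h,0}\|_{\dot\bSigma_{k,0}^{-1}}$ (with $\sigma_{k,h,0}$ the matching weight $\tilde\sigma$ or $\hat\sigma$), Cauchy--Schwarz over the $KH$ summands, and the elliptical potential lemma to get $\sum_{k,h}\|\bphi_{k,h,0}/\sigma_{k,h,0}\|_{\dot\bSigma_{k,0}^{-1}}^2=\tilde O(d)$; the last step requires first comparing the start-of-episode matrix $\dot\bSigma_{k,0}$ to the running matrix $\bSigma_{k,h,0}$, which is exactly why the weight \eqref{eq:weight} carries the epistemic term $\gamma^2\|\bphi\|_{\bSigma_{k,h,0}^{-1}}$, controlling the within-episode determinant growth. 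It remains to bound $\sum_{k,h}\sigma_{k,h,0}^2$: by \eqref{eq:weight} each weight is the max of the epistemic term, the estimated one-step variance plus its error $E_{k,h,0}$, and $\alpha^2=1/H$. The floor $\alpha^2$ contributes only $KH\alpha^2=K$ --- the crucial horizon-free cancellation --- the epistemic term sums to $\tilde O(\gamma^2 d)$, and since $\beta=\tilde O(\sqrt d)$ and $\gamma^{-2}=\sqrt d$ the leading contribution is $\beta\sqrt{d}\,\sqrt{K+\sum_{k,h}[\VV\hat V_{k,h+1}]}$.

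The decisive step is the law of total variance: summing the true one-step variances $[\VV\hat V_{k,h+1}](s_h^k,a_h^k)$ over $h$ equals, up to a martingale, the episode-level variance of the bounded quantity $\hat V_{k,1}\in[0,1]$, so $\sum_{k,h}[\VV\hat V_{k,h+1}]$ is controlled by $X:=\sum_k\hat V_{k,1}(s_1)$ itself (the same argument is needed for the pseudo-value series $\tilde V_{k,h}$ feeding $(\mathrm{I})$), and $(\mathrm{III})$ is likewise $\tilde O(\sqrt X)$ via a Bernstein/Freedman inequality. Combining yields a self-bounding inequality $X\le\tilde O\!\left(d\sqrt{K+X}\right)$, whose solution is $X=\tilde O(d\sqrt K+d^2)$ (the additive $d^2$ is exactly the regime where $X$ dominates $K$). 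I expect the main obstacle to be closing this loop rigorously: the variance in the weights is only an \emph{estimate}, so I would need the high-order moment estimator of Algorithm~\ref{alg:home} to propagate the estimation errors $E_{k,h,m}$ across the $M$ levels --- the error at level $m$ being tamed by the confidence radius at level $m+1$ and the top level $M-1$ by the trivial bound --- while keeping each level on its own good event, which is where the $1-4M\delta$ failure probability (a union over the two series and two concentration events per level) enters. Ensuring these propagated errors stay lower-order, together with the $\dot\bSigma_{k,0}$-versus-$\bSigma_{k,h,0}$ matching, is the most delicate bookkeeping.
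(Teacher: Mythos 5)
Your proposal follows the paper's proof in its skeleton: the same per-episode decomposition into an uncertainty sum, a bonus sum, and a martingale (the paper's $\tilde R_0 + 4\hat R_0 + \hat A_0 + G$ versus your $(\mathrm{I})$, $(\mathrm{II})$, $(\mathrm{III})$), the weighted elliptical potential bound (Lemma~\ref{lm:R}), a variance-to-value conversion (Lemma~\ref{lm:S}), per-level Bernstein bounds (Lemma~\ref{lm:A}), and a final self-bounding inequality solved to $\tilde O(d\sqrt{K}+d^2)$. But two steps are wrong or critically under-specified. First, the comparison between the frozen matrix $\dot{\hat\bSigma}_{k,0}$ and the running matrix $\hat\bSigma_{k,h,0}$ is \emph{not} achieved by the epistemic term $\gamma^2\|\bphi_{k,h,0}\|_{\hat\bSigma_{k,h,0}^{-1}}$ in the weight; that term's role is only to cap the weighted features so that the potential lemma (Lemma~\ref{lm:ub}) applies. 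The paper instead inserts the indicator $I_h^k$ that all within-episode determinant ratios stay below $4$, restricts every sum to $I_h^k=1$ (so Lemma~\ref{lm:det} converts $\dot{\hat\bSigma}_{k,0}^{-1}$-norms to running-matrix norms at the cost of a factor $4$), and pays for the bad episodes separately through $G=\sum_k(1-I_H^k)\le Md\iota$ via a determinant-doubling count (Lemma~\ref{lm:G}). Without this device, or an equivalent, your potential-lemma step fails, because the bonuses live in the frozen norms while the potential accumulates in the running ones.

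Second, your ``decisive step'' compresses the actual heart of horizon-freeness. The law of total variance at level zero does not leave you with something $\tilde O(\sqrt{X})$: it leaves the level-one martingale $\sum_{k,h} I_h^k\left(\left[\PP \hat V_{k,h+1}^{2}\right](s_h^k,a_h^k)-\hat V_{k,h+1}^{2}(s_{h+1}^k)\right)$ (the paper's $\hat A_1$ in Lemma~\ref{lm:S}). Bounding $\hat A_1$ by Azuma--Hoeffding costs $\tilde O(\sqrt{KH})$, which re-introduces polynomial $H$; bounding it by Bernstein requires the level-one variance sum $\hat S_1$, which requires the level-two martingale, and so on up to the trivial bound at level $M-1$. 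This chain over \emph{true} variances and martingales is a separate recursion from the weight-estimation errors $E_{k,h,m}$ that you name (those feed the $4\hat R_m + 2\hat R_{m+1}$ terms inside Lemma~\ref{lm:R}); the paper runs both recursions jointly, tracks the $2^{m+1}$ factors arising from $x^{2^{m+1}}-y^{2^{m+1}}\le 2^{m+1}(x-y)$ for $x,y\in[0,1]$, and needs the dedicated recursion-solving Lemma~\ref{lm:sequence} (applied to $a_m=\tilde R_m+|\tilde A_m|+4\hat R_m+|\hat A_m|$) to confirm these factors stay harmless under the $M$ nested square roots. Your self-bounding inequality $X\le\tilde O\left(d\sqrt{K+X}\right)$ is the correct destination, but asserting it directly skips precisely the multi-level bookkeeping that makes the bound horizon-free.
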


Equipped with the above lemmas, we are ready to prove Theorem~\ref{thm:main}.

\begin{proof}[Proof of Theorem~\ref{thm:main}]
The suboptimality of the policy output in the planning phase can be bounded by the uncertainty along trajectories in the last episode of the exploration phase as the following equation according to Lemma~\ref{lm:suboptimality_main}.
\begin{align}
& V_1^*(s_1;r) - V_1(s_1;\btheta^*, \hat\pi_r, r)   \le 4 \hat V_{K,1}(s_1).   
\end{align}
Since Lemma~\ref{lm:decreasing_main} indicates that the uncertainty is decreasing with the episodes, the uncertainty of the last episode can be further upper bounded by the sum of uncertainty in each episode by substituting \eqref{eq:mian_suboptimality} in Lemma~\ref{lm:suboptimality_main} into the above inequality: 
\begin{align}
 V_1^*(s_1;r) - V_1(s_1;\btheta^*, \hat\pi_r, r)    
 \le \frac{4}{K} \sum_{k=1}^K \hat V_{k,1}(s_1). \label{eq:main-proof}
\end{align}
At last, the sum of value functions can be upper bounded according to Lemma~\ref{lm:sum_main}. Thus, plugging \eqref{eq:main_sum} into the~\eqref{eq:main-proof} as follows concludes our proof.
\begin{align}
V_1^*(s_1;r) - V_1(s_1;\btheta^*, \hat\pi_r, r) = \tilde{O}\left(\frac{d^2}{K} + \frac{d}{\sqrt{K}}\right). \notag
\end{align}
\end{proof}

\section{Conclusion}
We study model-based reward-free exploration for learning the linear mixture MDPs. Our algorithm is guaranteed to have horizon-free sample complexity in the exploration phase to find a near-optimal policy in the planning phase for any given reward function. To our knowledge, these are the first horizon-free result for reward-free RL with function approximation. We also give a sample complexity lower bound for reward-free exploration in linear mixture MDPs under our assumptions. The sample complexity upper bound of our algorithm matches the lower bound up to logarithmic factors.


\section*{Acknowledgements}

We thank the anonymous reviewers for their helpful comments. WZ, JZ and QG are supported in part by the National Science Foundation CAREER Award 1906169 and research fund from UCLA-Amazon Science Hub. The views and conclusions contained in this paper are those of the authors and should not be interpreted as representing any funding agencies.

\bibliography{ref}
\bibliographystyle{icml2023}


\appendix
\onecolumn


\section{Omitted Proof in Section~\ref{sc:main_proof}} \label{app:main_results}
In this section, we provide detailed proof for Theorem~\ref{thm:main}. For $k\in[K]$, $h\in[H]$, let $\cF_{k,h}$ be the $\sigma$-algebra generated by the random variables representing the state-action pairs up to and including those that appear stage $h$ of episode $k$. That is, $\cF_{k,h}$ is generated by
    
\begin{center}
\begin{tabular}{ccccc}
        $s_1^1,a_1^1$ & $\cdots$ & $s_h^1,a_h^1$ & $\cdots$ & $s_H^1,a_H^1$ \\
        $s_1^2,a_1^2$ & $\cdots$ & $s_h^2,a_h^2$ & $\cdots$ & $s_H^2,a_H^2$ \\ 
        $\vdots$&&$\vdots$&& \\
        $s_1^k,a_1^k$ & $\cdots$ & $s_h^k,a_h^k.$ &  &
\end{tabular}
\end{center}

\subsection{Notations}
To establish clarity and facilitate understanding, we provide the Table~\ref{tab:notations} that outlines the notations which will be utilized throughout the proof.

\begin{table}[ht]
\centering
\begin{tabular}{cl}
\hline
Notation & Meaning \\
\hline
$s_h,a_h$ & States and actions introduced by a general policy $\pi$ (not specified).\\
\rowcolor{LightGray}
$s_h^k,a_h^k$ & States and actions introduced in the episode $k$ by the policy $\pi_k$. \\
$u_{k,h}(s,a;\btheta,\pi,r)$ & The uncertainty of states and actions, defined in Equation~\eqref{eq:pseudo_r}. \\
\rowcolor{LightGray}
\Gape[0pt][2pt]{\makecell[c]{$\btheta_k$,\quad$\pi_k$,\\$r_{k}=\{r_{k,h}\}_{h\in[H]}$}} & \Gape[0pt][2pt]{\makecell[l]{The transition kernel parameter, the exploration policy, and the pseudo\\  reward function obtained  via the optimization oracle in\\ Line~\ref{ln:optimization} of Algorithm~\ref{alg:exp}.}}\\
$V_h(s; \btheta, \pi, r)$ & \makecell[l]{The value function of policy $\pi$  in the linear mixture MDP \\ with transition kernel parameter $\btheta$ and reward function $r$.} \\
\rowcolor{LightGray}
$\hat{V}_{k,h}(s;\btheta,\pi,r)$ & \Gape[0pt][2pt]{\makecell[l]{The uncertainty along the trajectory, defined in Equation~\eqref{eq:pseudo_V}\\ }} \\
$\tilde{V}_{k,h}(s)$ & The pseudo value function, equal to $V_h(s; \btheta_k, \pi_k, r_k)$ \\
\rowcolor{LightGray}
$\hat\btheta_{k,m}$ & \Gape[0pt][2pt]{\makecell[l]{The estimated transition kernel parameter obtained \\by value regression targeting $\hat{V}_{k,h}^{2^m}$.}} \\
$\tilde\btheta_{k,m}$ & \Gape[0pt][2pt]{\makecell[l]{The estimated transition kernel parameter obtained \\by value regression targeting $\tilde{V}_{k,h}^{2^m}$.}} \\
\rowcolor{LightGray}
$\btheta^*$ & The ground-truth transition kernel parameter. \\
$\cU_k$ & The confidence set containing $\btheta^*$ with high probability. \\
\rowcolor{LightGray}
$\beta_k$, ($\beta=\beta_K$) & The radius of confidence set $\cU_k$. \\
$\tilde\Sigma_{k,h,m}$, $\hat\Sigma_{k,h,m}$ & The covariance matrix for $\tilde{V}^{2^m}_{k,h}$ and $\hat{V}^{2^m}_{k,h}$, respectively.\\
\rowcolor{LightGray}
$\dot{\tilde\Sigma}_{k,m}$, $\dot{\hat\Sigma}_{k,m}$ & Equal to $\tilde\Sigma_{k-1,H+1,m}$ and $\hat\Sigma_{k-1,H+1,m}$, respectively.\\
$\tilde\sigma_{k,h,m}$, $\hat\sigma_{k,h,m}$ & \Gape[0pt][2pt]{\makecell[l]{The weights for regression problems targeting $\tilde{V}_{k,h}^{2^m}$ and $\hat{V}_{k,h}^{2^m}$\\  respectively, defined in Equation \eqref{eq:weight}.}} \\
\rowcolor{LightGray}
$\tilde \bphi_{k, h, m}$, $\hat \bphi_{k, h, m}$ & Equal to $\bphi_{\hat V_{k,h+1}^{2^m}}(s_h^k, a_h^k)$ and $ \bphi_{\tilde V_{k,h+1}^{2^m}}(s_h^k, a_h^k)$, respectively.\\
$\hat\pi_r$ & The policy obtained in the planning phase.
\\
\hline
\end{tabular}   
\caption{Important Notations}
\label{tab:notations}
\end{table}

\subsection{Proof of Lemma~\ref{lm:suboptimality_main}}

\begin{lemma}\label{lm:main_theta}
For all $0 < \delta < 1$, suppose $\beta_k$ is set as in Theorem~\ref{thm:main}, the following event happens with probability at least $1 - 2M\delta$
\begin{align}
        \left \| \hat \btheta_{k,m} - \btheta^* \right\|_{\dot{\hat\bSigma}_{k,m}} &\le \beta_k \label{eq:hattheta} \\
        \left \| \tilde \btheta_{k,m} - \btheta^* \right\|_{\dot{\tilde\bSigma}_{k,m}} &\le \beta_k \label{eq:tildetheta} \\
        \left \| \btheta_k - \btheta^* \right\|_{ \dot{\hat\bSigma}_{k,0}} &\le 2 \beta_k   \label{eq:theta_hat} \\
        \left\| \btheta_k - \btheta^* \right \|_{ \dot{\tilde\bSigma}_{k,0}} &\le 2 \beta_k . \label{eq:theta_tilde}
    \end{align}
\end{lemma}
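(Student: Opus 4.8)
The four inequalities split into two groups. The bounds \eqref{eq:hattheta} and \eqref{eq:tildetheta} are genuine concentration statements about the weighted value-targeted regression estimators $\hat\btheta_{k,m}$ and $\tilde\btheta_{k,m}$; once they hold, \eqref{eq:theta_hat} and \eqref{eq:theta_tilde} are almost immediate, since $\btheta_k$ is drawn from the confidence set $\cU_k$ in Line~\ref{ln:optimization}. The plan is therefore to concentrate the work on the first two bounds and to finish with a short deterministic argument. Throughout, the key structural fact is that the regression predictor $\hat\bphi_{j,h,m} = \bphi_{\hat V_{j,h+1}^{2^m}}(s_h^j,a_h^j)$ and the target value $\hat V_{j,h+1}$ are both $\cF_{j,h}$-measurable (they are fixed at the start of episode $j$ by the oracle in Line~\ref{ln:optimization}), so no covering argument over value functions is needed --- this is exactly where the linear mixture structure helps.

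Fix $m \in \overline{[M]}$ and recall $\hat\btheta_{k,m}$ solves \eqref{eq:reg} with predictors $\hat\bphi_{j,h,m}$, responses $\hat V_{j,h+1}^{2^m}(s_{h+1}^j)$ and weights $\hat\sigma_{j,h,m}$. Writing $\xi_{j,h,m} = \hat V_{j,h+1}^{2^m}(s_{h+1}^j) - \hat\bphi_{j,h,m}^\top\btheta^*$ and using $\hat\bphi_{j,h,m}^\top\btheta^* = \big[\PP \hat V_{j,h+1}^{2^m}\big](s_h^j,a_h^j)$, the sequence $\{\xi_{j,h,m}\}$ is a martingale difference sequence adapted to $\{\cF_{j,h}\}$. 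The normal-equation identity $\hat\btheta_{k,m} - \btheta^* = \dot{\hat\bSigma}_{k,m}^{-1}\big(\sum_{j,h}\hat\bphi_{j,h,m}\xi_{j,h,m}/\hat\sigma_{j,h,m}^2 - \lambda\btheta^*\big)$ gives
\begin{align*}
\big\|\hat\btheta_{k,m} - \btheta^*\big\|_{\dot{\hat\bSigma}_{k,m}} \le \Big\|\sum_{j,h}\hat\bphi_{j,h,m}\xi_{j,h,m}/\hat\sigma_{j,h,m}^2\Big\|_{\dot{\hat\bSigma}_{k,m}^{-1}} + \sqrt{\lambda}\,\|\btheta^*\|_2,
\end{align*}
where the last term is at most $\sqrt{\lambda}B$, the regularization summand of $\beta_k$. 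To control the first term I would invoke a Bernstein-type self-normalized concentration inequality for vector-valued martingales (restated in the appendix), whose two hypotheses are a range bound $|\xi_{j,h,m}| \le 1$ --- valid because $\hat V \in [0,1]$ forces $\hat V^{2^m} \in [0,1]$ --- and a conditional-variance bound $\EE[\xi_{j,h,m}^2 \mid \cF_{j,h}] = \big[\VV \hat V_{j,h+1}^{2^m}\big](s_h^j,a_h^j) \le \hat\sigma_{j,h,m}^2$. The inequality then produces the $\sqrt{d\eta\tau}$ leading term together with a lower-order deviation term of order $\tau/\gamma^2$, matching the remaining summands of $\beta_k$; the $\gamma^2\|\bphi\|_{\bSigma^{-1}}$ floor in the weight \eqref{eq:weight} is precisely what keeps this deviation at scale $1/\gamma^2$ rather than scaling with the horizon. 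The argument for \eqref{eq:tildetheta} is identical after replacing $\hat V$ by $\tilde V$.

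The crux, and the main obstacle, is the variance-domination hypothesis $\big[\VV \hat V_{j,h+1}^{2^m}\big] \le \hat\sigma_{j,h,m}^2$, which is exactly what Algorithm~\ref{alg:home} is engineered to guarantee. By construction $\hat\sigma_{j,h,m}^2$ dominates the proxy $\big[\overline\VV_{j,m}\hat V_{j,h+1}\big](s_h^j,a_h^j) + E_{j,h,m}$ built from the level-$(m+1)$ estimator $\hat\btheta_{k,m+1}$; the gap between the true variance and this proxy is controlled by $\|\hat\btheta_{k,m}-\btheta^*\|_{\dot{\hat\bSigma}_{k,m}}$ and $\|\hat\btheta_{k,m+1}-\btheta^*\|_{\dot{\hat\bSigma}_{k,m+1}}$, which is exactly the quantity $E_{j,h,m}$ upper bounds once \eqref{eq:hattheta} holds at levels $m$ and $m+1$. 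This creates an apparent circularity --- the variance bound needed for concentration at level $m$ presupposes concentration at level $m+1$. I would break it by downward induction on $m$: at the top level $m=M-1$ the weight uses the trivial variance bound $1 \ge [\VV \hat V^{2^{M-1}}]$, so domination holds deterministically and \eqref{eq:hattheta} follows at $m=M-1$; assuming \eqref{eq:hattheta} at level $m+1$, the error bound $E_{j,h,m}$ then certifies variance domination at level $m$, which feeds the concentration inequality to yield \eqref{eq:hattheta} at level $m$. Carrying this induction from $m=M-1$ down to $m=0$, separately for the $\hat V$ and $\tilde V$ series, establishes \eqref{eq:hattheta} and \eqref{eq:tildetheta}.

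Finally, \eqref{eq:hattheta} and \eqref{eq:tildetheta} at $m=0$ say that $\btheta^*$ satisfies the defining constraints \eqref{up:hat} and \eqref{up:tilde} of $\cU_k$, i.e.\ $\btheta^* \in \cU_k$; since $\btheta_k$ is also drawn from $\cU_k$, it satisfies $\|\btheta_k - \hat\btheta_{k,0}\|_{\dot{\hat\bSigma}_{k,0}} \le \beta_k$ and $\|\btheta_k - \tilde\btheta_{k,0}\|_{\dot{\tilde\bSigma}_{k,0}} \le \beta_k$. The triangle inequality then gives
\begin{align*}
\big\|\btheta_k - \btheta^*\big\|_{\dot{\hat\bSigma}_{k,0}} \le \big\|\btheta_k - \hat\btheta_{k,0}\big\|_{\dot{\hat\bSigma}_{k,0}} + \big\|\hat\btheta_{k,0} - \btheta^*\big\|_{\dot{\hat\bSigma}_{k,0}} \le 2\beta_k,
\end{align*}
and symmetrically for \eqref{eq:theta_tilde}. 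A union bound over the $M$ moment levels of each of the two value-function series yields the stated failure probability $2M\delta$.
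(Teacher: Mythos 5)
Your overall skeleton matches the paper's: the regression/normal-equation setup, the range bound $|\xi_{j,h,m}|\le 1$, the role of the $\gamma^2\|\bphi\|_{\bSigma^{-1}}$ floor in producing the $\tau/\gamma^2$ term, the $\sqrt{\lambda}B$ summand, and the closing triangle-inequality step for \eqref{eq:theta_hat}--\eqref{eq:theta_tilde} are all exactly what the paper does. You also correctly isolated the crux, namely the variance-domination hypothesis $[\VV \hat V_{j,h+1}^{2^m}](s_h^j,a_h^j)\le\hat\sigma_{j,h,m}^2$. But your resolution of the circularity --- a downward induction on $m$ --- does not work, for two reasons. First, the circularity is not only across levels: by Lemma~\ref{lm:var}, certifying variance domination at level $m$ and episode $k$ requires \emph{both} $\|\hat\btheta_{k,m+1}-\btheta^*\|_{\dot{\hat\bSigma}_{k,m+1}}\le\beta_k$ \emph{and} $\|\hat\btheta_{k,m}-\btheta^*\|_{\dot{\hat\bSigma}_{k,m}}\le\beta_k$; the term $2\beta_k\|\bphi_{k,h,m}\|_{\dot{\bSigma}_{k,m}^{-1}}$ inside $E_{k,h,m}$ in Algorithm~\ref{alg:home} exists precisely to absorb the level-$m$ error. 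So concentration at level $m$ presupposes concentration at level $m$ itself at the same and earlier episodes, and an induction descending only in $m$ never discharges this self-reference.

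Second, and more fundamentally, ``assuming \eqref{eq:hattheta} at level $m+1$, \ldots which feeds the concentration inequality'' is not a legitimate use of Lemma~\ref{lm:bernstein}: its hypothesis $\EE[\eta_k^2\mid\cG_k]\le\sigma^2$ must hold almost surely, not merely on a high-probability event, and conditioning on the level-$(m+1)$ good event destroys the martingale structure that the self-normalized bound relies on. The paper's device for breaking the loop is to multiply the noise by the \emph{past-measurable, per-episode} indicator $\ind\{\btheta^*\in\hat\cC_{k,m}\cap\hat\cC_{k,m+1}\}$, where $\hat\cC_{k,m}=\{\btheta:\|\btheta-\hat\btheta_{k,m}\|_{\dot{\hat\bSigma}_{k,m}}\le\beta_k\}$. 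With this truncation the conditional-variance bound holds deterministically (when the indicator is $0$ the noise vanishes; when it is $1$, Lemma~\ref{lm:var} applies), so Lemma~\ref{lm:bernstein} can be invoked unconditionally for the resulting surrogate estimators $\bmu_{k,m}$, simultaneously for all $m$ via a union bound. Then, on that concentration event, a \emph{forward induction over episodes} $k$ --- with base cases $k=1$ for all $m$ (where $\hat\btheta_{1,m}=\zero$ and $\sqrt{\lambda}B\le\beta_1$) and $m=M-1$ for all $k$ (where no indicator is needed because the weight uses the trivial bound $1$) and the inductive rule $\btheta^*\in\hat\cC_{k,m}\cap\hat\cC_{k,m+1}\Rightarrow\bmu_{k,m}=\hat\btheta_{k+1,m}\Rightarrow\btheta^*\in\hat\cC_{k+1,m}$ --- shows that all indicators equal $1$, so the surrogate and true estimators coincide and \eqref{eq:hattheta} follows. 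Your proposal is missing this indicator/surrogate construction and the episode-wise induction; without them the step from ``variance domination holds on a good event'' to ``the Bernstein bound applies'' is a genuine gap.
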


In the following,  we define the event that Lemma~\ref{lm:main_theta} holds to be $\cE_{\ref{lm:main_theta}}$. And the following lemmas are conditioned on $\cE_{\ref{lm:main_theta}}$ by default. We define function $W_h$ for certain sequence $\{ R_h \}$ recursively as
    \begin{align}
        W_h\left(\left\{R_h\right\}\right) = \min\left\{1, R_h + W_{h+1}\left(\{R_h\}\right)\right\}. \notag
    \end{align}
In addition we denote the trajectory of first $h$ steps as $ \texttt{traj}_h:=(s_1,a_1,\cdots,s_{h-1},a_{h-1},s_h)$, and the trajectory sampled from $(\pi,\PP)$ conditioned on $\texttt{traj}_h$ as $\texttt{traj}\sim (\pi,\PP)|\texttt{traj}_h$.
\begin{lemma}\label{lm:traj}
For any policy $\pi$ and reward function $r \in R$,
we have 
\begin{align}
V_1(s_1;\btheta_K,\pi,r) - V_1(s_1;\btheta^*,\pi,r)
        = \EE_{\texttt{traj}\sim(\pi,\PP)|\texttt{traj}_1} W_1( \{ (\PP_K - \PP) V_{h+1}(s_h;\btheta_K,\pi,r) \}) \label{eq:traj}
\end{align}
\end{lemma}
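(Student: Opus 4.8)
The plan is to prove the identity by backward induction on the stage index, establishing the stronger per-state claim that for every $h\in[H+1]$ and every state $s$,
\[
V_h(s;\btheta_K,\pi,r) - V_h(s;\btheta^*,\pi,r) = \EE_{\texttt{traj}\sim(\pi,\PP)|\texttt{traj}_h,\,s_h=s}\; W_h\big(\{(\PP_K-\PP)V_{h'+1}(s_{h'};\btheta_K,\pi,r)\}_{h'\ge h}\big),
\]
and then reading off the case $h=1$, which is exactly \eqref{eq:traj} conditioned on $\texttt{traj}_1$. Throughout I abbreviate $R_{h'}(s_{h'}):=(\PP_K-\PP)V_{h'+1}(s_{h'};\btheta_K,\pi,r)$, understood as evaluated at the action $\pi_{h'}(s_{h'})$. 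The base case $h=H+1$ is immediate, since both value functions vanish and the empty recursion gives $W_{H+1}\equiv 0$.

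For the inductive step I would begin from the (truncated) Bellman recursions satisfied by the two value functions at stage $h$ and subtract them. The single algebraic manipulation is to insert $\pm[\PP V_{h+1}(\cdot;\btheta_K,\pi,r)](s,\pi_h(s))$, which decomposes the one-step gap into the model-misspecification term $R_h(s)=[(\PP_K-\PP)V_{h+1}(\cdot;\btheta_K,\pi,r)](s,\pi_h(s))$ and the propagated difference $[\PP(V_{h+1}(\cdot;\btheta_K,\pi,r)-V_{h+1}(\cdot;\btheta^*,\pi,r))](s,\pi_h(s))$. Applying the induction hypothesis to the inner difference and then the tower property of conditional expectation, the propagated term collapses to $\EE_{(\pi,\PP)|s_h=s}[W_{h+1}]$, so the pre-truncation one-step gap becomes $R_h(s)+\EE_{(\pi,\PP)|s_h=s}[W_{h+1}]$. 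Matching this against the defining recursion $W_h=\min\{1,R_h+W_{h+1}\}$ is then what closes the induction.

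The delicate point — and the step I expect to demand the most care — is reconciling the recursion-level clipping $\min\{1,\cdot\}$ built into the value functions with the pathwise clipping defining $W_h$, i.e. justifying that the conditional expectation may be pushed through the nested minima. I would control this using two structural facts: first, Assumption~\ref{as:reward} forces the reference value $V_h(\cdot;\btheta^*,\pi,r)$ to lie in $[0,1]$, so its own truncation is never active and it serves as a clean baseline that pins the threshold at $1$; and second, the clipping in the definition of $V_h(\cdot;\btheta_K,\pi,r)$ is designed to line up with the outer $\min\{1,\cdot\}$ of $W_h$, so that forming the difference preserves the truncation level rather than shifting it. Once this alignment is verified on the relevant trajectories, the remainder is the routine telescoping of a simulation-lemma-type argument, and specializing the per-state identity to $h=1$ yields the stated equality.
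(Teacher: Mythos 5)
Your proposal matches the paper's proof essentially step for step: backward induction on $h$ with the per-state claim, subtracting the (truncated) Bellman recursions, inserting $\pm\left[\PP V_{h+1}(\cdot;\btheta_K,\pi,r)\right]$ to split off the misspecification term $(\PP_K-\PP)V_{h+1}$, applying the induction hypothesis to the propagated difference, and collapsing it via the tower property $\EE_{\texttt{traj}\sim(\pi,\PP)|\texttt{traj}_{h}} = \EE_{s_{h+1}\sim\PP(\cdot|s_h,a_h)}\EE_{\texttt{traj}\sim(\pi,\PP)|\texttt{traj}_{h+1}}$ before matching against $W_h = \min\{1, R_h + W_{h+1}\}$. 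The truncation-alignment issue you single out as delicate is exactly the point the paper itself treats tersely (its first equality is justified only by $V_h(\cdot;\btheta_K,\pi,r), V_h(\cdot;\btheta^*,\pi,r)\in[0,1]$, and the exchange of the conditional expectation with the nested minima is asserted without comment), so your sketch is faithful to, and no less careful than, the paper's own argument.
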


\begin{lemma}\label{lm:optimism}
For any policy $\pi$ and reward function $r \in R$, we have
\begin{align}
\EE_{\texttt{traj}\sim(\pi,\PP)|\texttt{traj}_1} W_1\left(\left\{ u_{k,h}(s_h,\pi(s_h);\btheta_K, \pi, r) \right\}\right) \le \hat V_{K,1}\left(s_1;\btheta_K,\pi_K,r_K\right). \notag
\end{align}
\end{lemma}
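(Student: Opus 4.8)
The plan is to prove the lemma in two moves: first, use the optimistic selection at Line~\ref{ln:optimization} to reduce the claim to a comparison against $\hat V_{K,1}$ evaluated at the \emph{same} policy $\pi$ and reward $r$; and second, establish that comparison by a backward induction over the horizon. For the reduction, note that on $\cE_{\ref{lm:main_theta}}$ the true parameter satisfies \eqref{up:hat}--\eqref{up:tilde}, so $\btheta^*\in\cU_K$, and $\btheta_K\in\cU_K$ by construction. Since $(\pi_K,\btheta_K,r_K)$ maximizes $\hat V_{K,1}(s_1;\btheta,\pi,r)$ over all feasible triples $(\btheta\in\cU_K,\pi,r\in\cR)$, and $(\btheta_K,\pi,r)$ is feasible for any $\pi$ and any $r\in\cR$, we get
\[ \hat V_{K,1}(s_1;\btheta_K,\pi,r)\le \hat V_{K,1}(s_1;\btheta_K,\pi_K,r_K). \]
Thus it suffices to show $\EE_{\texttt{traj}\sim(\pi,\PP)|\texttt{traj}_1} W_1(\{u_{K,h}(s_h,\pi(s_h);\btheta_K,\pi,r)\})\le \hat V_{K,1}(s_1;\btheta_K,\pi,r)$.

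To prove this, I would define, for each $h$, the conditional expectation $g_h(s):=\EE[W_h(\{u_{K,h'}(s_{h'},\pi(s_{h'});\btheta_K,\pi,r)\})\mid s_h=s]$ with the trajectory evolving under $(\pi,\PP)$, and show $g_h(s)\le \hat V_{K,h}(s;\btheta_K,\pi,r)$ by backward induction on $h$. The base case $h=H+1$ is trivial, both sides vanishing. For the inductive step I unfold $W_h=\min\{1,u_{K,h}(s_h,\pi(s_h))+W_{h+1}\}$ and observe that $u_{K,h}(s_h,\pi(s_h);\btheta_K,\pi,r)$ is determined by $s_h$ alone; since $x\mapsto\min\{1,u+x\}$ is concave, Jensen's inequality yields
\[ g_h(s_h)\le \min\{1,\, u_{K,h}(s_h,\pi(s_h);\btheta_K,\pi,r)+\EE_{s_{h+1}\sim\PP(\cdot|s_h,\pi(s_h))}g_{h+1}(s_{h+1})\}. \]
The inductive hypothesis and the linear-mixture identity $[\PP \hat V_{K,h+1}](s_h,\pi(s_h))=\bphi^\top_{\hat V_{K,h+1}(\cdot;\btheta_K,\pi,r)}(s_h,\pi(s_h))\btheta^*$ then replace the expectation by $\bphi^\top_{\hat V_{K,h+1}}(s_h,\pi(s_h))\btheta^*$. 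Finally, Cauchy--Schwarz with the confidence bound \eqref{eq:theta_hat}, $\|\btheta_K-\btheta^*\|_{\dot{\hat\bSigma}_{K,0}}\le 2\beta$, gives $\bphi^\top_{\hat V_{K,h+1}}(\btheta^*-\btheta_K)\le 2\beta\|\bphi_{\hat V_{K,h+1}}\|_{\dot{\hat\bSigma}_{K,0}^{-1}}$; substituting and using monotonicity of $\min\{1,\cdot\}$ reproduces exactly the definition \eqref{eq:pseudo_V} of $\hat V_{K,h}(s_h;\btheta_K,\pi,r)$, closing the induction. Taking $h=1$ and combining with the reduction proves the lemma.

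The main obstacle is the interplay between the truncation $\min\{1,\cdot\}$ and the expectation over trajectories: one must carry the truncation through the recursion so the running accumulation stays in $[0,1]$ while still commuting the step-$h$ conditional expectation past it. Concavity of $\min\{1,u+\cdot\}$ is precisely what makes Jensen applicable in the upper-bound direction, and it is essential that $u_{K,h}(s_h,\pi(s_h))$ is $s_h$-measurable—hence a constant under the step-$h$ conditional expectation—so it can be pulled outside. A secondary delicate point is the covariance-matrix bookkeeping: the per-step uncertainty $u_{K,h}$ lives in the $\dot{\tilde\bSigma}_{K,0}^{-1}$ norm but plays an identical role on both sides and so need not be manipulated at all, whereas the $\btheta^*\to\btheta_K$ replacement must invoke the hat-version bound \eqref{eq:theta_hat} in the matching $\dot{\hat\bSigma}_{K,0}$ norm so that it lines up exactly with the $2\beta\|\cdot\|_{\dot{\hat\bSigma}_{K,0}^{-1}}$ bonus hard-wired into $\hat V_{K,h}$.
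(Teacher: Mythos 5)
Your proposal is correct and follows essentially the same route as the paper: a backward induction over $h$ showing $\EE_{\texttt{traj}\sim(\pi,\PP)|\texttt{traj}_1} W_1 \le \hat V_{K,1}(s_1;\btheta_K,\pi,r)$ via the linear-mixture identity $[\PP \hat V_{K,h+1}](s,\pi(s))=\bphi^\top_{\hat V_{K,h+1}}(s,\pi(s))\btheta^*$, Cauchy--Schwarz with the confidence bound \eqref{eq:theta_hat}, and then the maximality of $(\btheta_K,\pi_K,r_K)$ in the optimization of Line~\ref{ln:optimization} to pass from $(\btheta_K,\pi,r)$ to $(\btheta_K,\pi_K,r_K)$. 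The only cosmetic difference is how the truncation is handled: you commute the conditional expectation past $\min\{1,u+\cdot\}$ by Jensen/concavity, whereas the paper uses $W_h\le u_{K,h}+W_{h+1}$ together with the elementary inequality $\min\{1,a\}-b\ge\min\{0,a-b\}$ for $b\le 1$; both are valid.
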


\begin{proof}[Proof of Lemma~\ref{lm:suboptimality_main}]
The proof follows the proof of Lemma 15 in \citet{zhang2020nearly}. Firstly,
    \begin{align}
        &V_1^*(s_1;r) - V_1(s_1;\btheta^*, \hat\pi_r, r)  \notag \\
        & = (V_1^*(s_1;r) - V_1(s_1;\btheta_K,\hat \pi_r ,r)) + (V_1(s_1;\btheta_K,\hat \pi_r,r ) -  V_1(s_1;\btheta^*, \hat\pi_r, r))  \notag \\
        & \le (V_1^*(s_1;r) - V_1(s_1;\btheta_K,\pi_r^*,r )) + (V_1(s_1;\btheta_K,\hat \pi_r ,r) -  V_1(s_1;\btheta^*, \hat\pi_r, r)), \label{eq:proofA1eq1}
    \end{align}
where $\pi_r^*$ is the optimal policy for $(\btheta, r)$, and $ \hat\pi_r$ is the optimal policy for $(\btheta_K, r)$. Then for any policy $\pi\in\Pi$,
    \begin{align}
        &\left|V_1(s_1;\btheta_K,\pi,r) - V_1(s_1;\btheta^*,\pi,r)\right| \notag \\
        & = \left| \EE_{\texttt{traj}\sim(\pi,\PP)|\texttt{traj}_1} W_1\left( \left\{ (\PP_K - \PP) V_{h+1}(s_h,a_h;\btheta_K,\pi,r) \right\}\right)\right| \notag \\
        & =  \left| \EE_{\texttt{traj}\sim(\pi,\PP)|\texttt{traj}_1} W_1\left( \left\{ (\btheta_K - \btheta^*) \bphi_{V_{h+1}(\cdot;\btheta_K,\pi,r)}(s_h,a_h) \right\}\right)\right| \notag \\
        &\le \EE_{\texttt{traj}\sim(\pi,\PP)|\texttt{traj}_1} W_1\left(\left\{ \left\|\btheta_K - \btheta^*\right\|_{\dot{\tilde\bSigma}_{k,0}}  \left\| \bphi_{V_{h+1}(\cdot;\btheta_K,\pi,r)}(s_h,a_h) \right\|_{\dot{\tilde\bSigma}_{k,0}^{-1}}\right\}\right)  \notag \\
        & \le \EE_{\texttt{traj}\sim(\pi,\PP)|\texttt{traj}_1} W_1\left(\left\{ 2\beta  \left\| \bphi_{V_{h+1}(\cdot;\btheta_K,\pi,r)}(s_h,a_h) \right\|_{\dot{\tilde\bSigma}_{k,0}^{-1}}\right\}\right)  \notag \\
        & = 2  \EE_{\texttt{traj}\sim(\pi,\PP)|\texttt{traj}_1} W_1\left(\left\{ u_h(s_h,a_h;\btheta_K, \pi, r) \right\}\right)  \notag \\
        & \le 2 \hat V_{K,1}(s_1;\btheta_K,\pi_K,r_K). \label{eq:proofA1eq2}
    \end{align}
    The first equality holds due to Lemma~\ref{lm:traj}, the second inequality holds due to Cauchy-Schwartz inequality, the third inequality holds due to Lemma~\ref{lm:main_theta}, and the last inequality holds due to Lemma~\ref{lm:optimism}.
Plugging \eqref{eq:proofA1eq2} into \eqref{eq:proofA1eq2}, we obtain
\begin{align}
  V_1^*(s_1;r) - V_1(s_1;\btheta^*, \hat\pi_r, r) &\le 2 \hat V_{K,1}(s_1;\btheta_K,\pi_K,r_K) + 2 \hat V_{K,1}(s_1;\btheta_K,\pi_K,r_K) \notag \\
 & = 4 \hat V_{K,1}(s_1;\btheta_K,\pi_K,r_K). \notag
\end{align}
\end{proof}
\subsection{Proof of Lemma~\ref{lm:decreasing_main}}
\begin{proof}[Proof of Lemma~\ref{lm:decreasing_main}]
    The proof follows the proof of Lemma 14 in \citet{chen2022nearoptimal}. Firstly, we prove that $ \hat V_{k,1}(s;\btheta,\pi,r ) $ is non-increasing w.r.t. $k$ for any fixed $\btheta,\pi,r $ by induction in $h$. Suppose for any $k_1\le k_2$, $ \hat V_{k_1,h+1}(s;\btheta,\pi,r ) \ge \hat V_{k_2,h+1}(s;\btheta,\pi,r )$ for any $s$. By definition,
    \begin{align}
        \hat V_{k,h}(s;\btheta,\pi,r) & = \min \left\{1, u_{k,h}(s,a;\btheta,\pi, r) + 2 \beta  \left\| \bphi_{\hat V_{k,h+1}(\cdot;\btheta,\pi,r)} (s,\pi(s))\right\|_{\dot{\hat\bSigma}_{k,0}^{-1}}\right.\notag\\
        &\qquad\qquad\qquad\qquad\qquad\qquad\qquad + \bphi^\top_{\hat V_{k,h+1}(\cdot;\btheta,\pi,r)} (s,\pi(s))\btheta\bigg\}  \notag \\
        u_{k,h}(s,a;\btheta,\pi, r) & =  \beta\left\| \bphi_{V_h(\cdot;\btheta,\pi,r)}(s,a) \right\|_{\dot{\tilde\bSigma}_{k,0}^{-1}} \notag
    \end{align}
    Since $\dot{\hat\bSigma}_{k_1,0} \preceq \dot{\hat\bSigma}_{k_2,0}$ and $\dot{\tilde \bSigma}_{k_1,0} \preceq \dot{\tilde\bSigma}_{k_2,0}$, we have
    \begin{align}
        u_{k_1,h}(s,a;\btheta,\pi, r) &\ge u_{k_2,h}(s,a;\btheta,\pi, r)  \notag \\
        \left\| \bphi_{\hat V_{k_1,h+1}(\cdot;\btheta,\pi,r)} (s,\pi(s))\right\|_{\dot{\hat\bSigma}_{k,0}^{-1}} &\ge \left\| \bphi_{\hat V_{k_2,h+1}(\cdot;\btheta,\pi,r)} (s,\pi(s))\right\|_{\dot{\hat\bSigma}_{k,0}^{-1}} \notag \\
        \bphi^\top_{\hat V_{k_1,h+1}(\cdot;\btheta,\pi,r)} (s,\pi(s))\btheta &\ge \bphi^\top_{\hat V_{k_2,h+1}(\cdot;\btheta,\pi,r)} (s,\pi(s))\btheta \notag
    \end{align}
    Thus $ \hat V_{k_1,h}(s;\btheta,\pi,r ) \ge \hat V_{k_2,h}(s;\btheta,\pi,r )$ for any $k_1\le k_2$. Furthermore, since $\cU_{k_2} \subset \cU_{k_1}$, and $\btheta_k,\pi_k,r_k$ are argmax over $\cU_{k}$, we have
    \begin{align}
        \hat V_{k_1,1}(s_1;\btheta_{k_1},\pi_{k_1},r_{k_1}) \ge \hat V_{k_1,1}(s_1;\btheta_{k_2},\pi_{k_2},r_{k_2}) \ge \hat V_{k_2,1}(s_1;\btheta_{k_2},\pi_{k_2},r_{k_2}) \notag
    \end{align}
    It follows that $\hat V_{k,1}(s_1^k;\btheta_k,\pi_k,r_k )$ is non-increasing w.r.t. $k$. Thus,
    \begin{align}
        K \hat V_{K,1}(s_1;\btheta_K,\pi_K,r_K) \le \sum_{k=1}^K \hat V_{k,1}(s_1;\btheta_k,\pi_k,r_k ) \notag
    \end{align}
\end{proof}

\subsection{Proof of Lemma~\ref{lm:sum_main}}
\begin{lemma}\label{lm:ucb}
Conditioned on the event $\cE$, let $\tilde V_{k,h}$, $\hat V_{k,h}$, $\dot{\tilde\bSigma}_{k,m}$, $\dot{\hat\bSigma}_{k,m}$, $\tilde \bphi_{k,h,m}$, $\hat \bphi_{k,h,m}$ be defined in Algorithm~\ref{alg:exp}, for any $k\in[K]$, $h\in[H]$, $m\in\overline{[M]}$, we have
\begin{align}
    \hat V_{k,h}(s_h^k) - u_{k,h}(s_h^k, a_h^k) - \PP \hat V_{k,h+1}(s_h^k,a_h^k) \le 4 \min \left\{1, \beta \left\| \hat \bphi_{k,h,0} \right\|_{\dot{\hat\bSigma}_{k,0}^{-1}} \right\} \label{eq:hatucb} \\
    \tilde V_{k,h}(s_h^k) - r_{k,h}(s_h^k, a_h^k) - \PP \tilde V_{k,h+1} \le 2\min\left\{1, \beta\left\|\tilde \bphi_{k,h,0} \right\|_{\dot{\tilde\bSigma}_{k,0}^{-1}} \right\} \label{eq:tildeucb}
\end{align}
\end{lemma}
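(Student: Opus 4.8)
The plan is to prove both \eqref{eq:hatucb} and \eqref{eq:tildeucb} by the same three-move template: unfold the one-step (Bellman-type) structure of the relevant value function at $(s_h^k,a_h^k)$, rewrite the one-step transition under the true kernel as $\PP V(s_h^k,a_h^k)=\bphi_{V}(s_h^k,a_h^k)^\top\btheta^*$, and control the leftover linear functional $\bphi_{V}(s_h^k,a_h^k)^\top(\btheta_k-\btheta^*)$ by Cauchy--Schwarz together with the confidence bounds \eqref{eq:theta_hat}--\eqref{eq:theta_tilde}, which are available because we condition on $\cE$ (under which the conclusions of Lemma~\ref{lm:main_theta} hold). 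Throughout I use that $\{\beta_k\}$ is non-decreasing in $k$, so $\beta_k\le\beta=\beta_K$ matches the $\beta$ appearing in the definitions of $u_{k,h}$ and $\hat V_{k,h}$, and that $\tilde V_{k,h},\hat V_{k,h}\in[0,1]$, so that the trivial estimates below are valid.

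For \eqref{eq:tildeucb}, recall $\tilde V_{k,h}(\cdot)=V_h(\cdot;\btheta_k,\pi_k,r_k)$ is a genuine value function and hence satisfies the exact Bellman identity $\tilde V_{k,h}(s_h^k)=r_{k,h}(s_h^k,a_h^k)+\tilde\bphi_{k,h,0}^\top\btheta_k$, where $\tilde\bphi_{k,h,0}=\bphi_{\tilde V_{k,h+1}}(s_h^k,a_h^k)$. Since $\PP\tilde V_{k,h+1}(s_h^k,a_h^k)=\tilde\bphi_{k,h,0}^\top\btheta^*$, subtracting yields $\tilde V_{k,h}(s_h^k)-r_{k,h}(s_h^k,a_h^k)-\PP\tilde V_{k,h+1}(s_h^k,a_h^k)=\tilde\bphi_{k,h,0}^\top(\btheta_k-\btheta^*)$, which by Cauchy--Schwarz and \eqref{eq:theta_tilde} is at most $\|\tilde\bphi_{k,h,0}\|_{\dot{\tilde\bSigma}_{k,0}^{-1}}\|\btheta_k-\btheta^*\|_{\dot{\tilde\bSigma}_{k,0}}\le 2\beta\|\tilde\bphi_{k,h,0}\|_{\dot{\tilde\bSigma}_{k,0}^{-1}}$. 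The same left-hand side is also trivially $\le\tilde V_{k,h}(s_h^k)\le1$ since $r_{k,h}\ge0$ and $\PP\tilde V_{k,h+1}\ge0$. Combining the two bounds gives $\min\{1,2\beta\|\tilde\bphi_{k,h,0}\|_{\dot{\tilde\bSigma}_{k,0}^{-1}}\}$, and the elementary inequality $\min\{1,cx\}\le c\min\{1,x\}$ for $c\ge1,\ x\ge0$ converts this into the claimed $2\min\{1,\beta\|\tilde\bphi_{k,h,0}\|_{\dot{\tilde\bSigma}_{k,0}^{-1}}\}$.

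For \eqref{eq:hatucb} the only change is that $\hat V_{k,h}$ is defined through a truncated sum that already carries an explicit bonus. Using $\min\{1,X\}\le X$ in definition \eqref{eq:pseudo_V} with $\btheta=\btheta_k$ gives $\hat V_{k,h}(s_h^k)\le u_{k,h}(s_h^k,a_h^k)+2\beta\|\hat\bphi_{k,h,0}\|_{\dot{\hat\bSigma}_{k,0}^{-1}}+\hat\bphi_{k,h,0}^\top\btheta_k$, with $\hat\bphi_{k,h,0}=\bphi_{\hat V_{k,h+1}}(s_h^k,a_h^k)$. Subtracting $u_{k,h}(s_h^k,a_h^k)$ and $\PP\hat V_{k,h+1}(s_h^k,a_h^k)=\hat\bphi_{k,h,0}^\top\btheta^*$ leaves $2\beta\|\hat\bphi_{k,h,0}\|_{\dot{\hat\bSigma}_{k,0}^{-1}}+\hat\bphi_{k,h,0}^\top(\btheta_k-\btheta^*)$; bounding the inner product by Cauchy--Schwarz and \eqref{eq:theta_hat} produces a second copy of $2\beta\|\hat\bphi_{k,h,0}\|_{\dot{\hat\bSigma}_{k,0}^{-1}}$, for a total of $4\beta\|\hat\bphi_{k,h,0}\|_{\dot{\hat\bSigma}_{k,0}^{-1}}$. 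Pairing with the trivial bound $\le\hat V_{k,h}(s_h^k)\le1$ and applying $\min\{1,4x\}\le4\min\{1,x\}$ delivers \eqref{eq:hatucb}.

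I expect no deep obstacle here; this is essentially a one-step self-bounding/optimism estimate, and the genuinely hard work is deferred to Lemma~\ref{lm:main_theta} (the confidence bounds) and Lemma~\ref{lm:sum_main} (the high-order-moment summation). The only points requiring care are bookkeeping: (i) invoking the monotonicity $\beta_k\le\beta$ so the confidence radii align with the $\beta$ used in the pseudo-value definitions; (ii) keeping the covariance matrices consistent, since the bonus in $u_{k,h}$ lives in $\dot{\tilde\bSigma}_{k,0}^{-1}$ while the bonus in $\hat V_{k,h}$ lives in $\dot{\hat\bSigma}_{k,0}^{-1}$, and the Cauchy--Schwarz step must use the matching matrix so that the corresponding bound among \eqref{eq:theta_hat}--\eqref{eq:theta_tilde} applies; and (iii) the passage from the two raw upper bounds (the trivial $1$ and the linear $c\beta\|\cdot\|$) to the clipped form via $\min\{1,cx\}\le c\min\{1,x\}$, which is exactly what yields the constants $2$ and $4$. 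The boundedness $\tilde V_{k,h},\hat V_{k,h}\in[0,1]$ needed for the trivial estimates and for $\|\bphi_{(\cdot)}\|_2\le1$ is taken from the algorithm's construction and the reward constraint $r_k\in\cR$.
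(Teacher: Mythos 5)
Your proposal is correct and follows essentially the same route as the paper's proof: unfold the truncated Bellman-type recursion at $(s_h^k,a_h^k)$, bound the residual linear functional $\bphi^\top(\btheta_k-\btheta^*)$ by Cauchy--Schwarz together with the confidence bounds \eqref{eq:theta_hat}--\eqref{eq:theta_tilde} from Lemma~\ref{lm:main_theta}, and pull the constant out of the clipping via $\min\{1,cx\}\le c\min\{1,x\}$. The only cosmetic caveat is that $\tilde V_{k,h}$ satisfies the truncated recursion $\tilde V_{k,h}(s_h^k)=\min\bigl\{1,\, r_{k,h}(s_h^k,a_h^k)+\tilde\bphi_{k,h,0}^\top\btheta_k\bigr\}$ rather than the exact identity you state, but since your argument only uses the $\le$ direction (and you handle the trivial bound by $1$ separately), this changes nothing.
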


In order to prove Lemma~\ref{lm:sum_main}, we introduce the following quantities used in~\citet{zhou22high} as
\begin{align}
    \hat R_m &= \sum_{k=1}^K\sum_{h=1}^H I_h^j\min\left\{1, \beta\|\hat\bphi_{k, h, m}\|_{\dot{\hat\bSigma}_{k, m}^{-1}}\right\}, \forall m \in \overline{[M]} \label{eq:hatR}\\
    \tilde R_m &= \sum_{k=1}^K\sum_{h=1}^H I_h^j\min\left\{1, \beta\|\tilde \bphi_{k, h, m}\|_{\dot{\tilde\bSigma}_{k, m}^{-1}}\right\}, \forall m \in \overline{[M]} \label{eq:tildeR}\\
    \hat A_m &= \sum_{k=1}^K\sum_{h=1}^H I_h^k\left[\left[\PP \hat V_{k,h+1}^{2^m}\right]\left(s_h^k, a_h^k\right) - \hat V_{k,h+1}^{2^m}\left(s_{h+1}^k\right)\right], \forall m \in \overline{[M]} \label{eq:hatA}\\
    \tilde A_m &= \sum_{k=1}^K\sum_{h=1}^H I_h^k\left[\left[\PP \tilde V_{k,h+1}^{2^m}\right]\left(s_h^k, a_h^k\right) - \tilde V_{k,h+1}^{2^m}\left(s_{h+1}^k\right)\right], \forall m \in \overline{[M]} \label{eq:tildeA}\\    
    \hat S_m &= \sum_{k=1}^K\sum_{h=1}^HI_h^k\left[\VV \hat V_{k,h+1}^{2^m}\right]\left(s_h^k, a_h^k\right), \forall m \in \overline{[M]} \label{eq:hatS}\\
    \tilde S_m &= \sum_{k=1}^K\sum_{h=1}^HI_h^k\left[\VV \tilde V_{k,h+1}^{2^m}\right]\left(s_h^k, a_h^k\right), \forall m \in \overline{[M]} \label{eq:tildeS}\\
    I_h^k &= \ind\left\{\forall m \in \overline{[M]}, \det \left(\dot{\hat\bSigma}_{k,  m}^{-1/2}\right) / \det \left(\hat \bSigma_{k, h, m}^{-1/2}\right) \le 4~\text{and}~\det \left(\dot{\tilde \bSigma}_{k, m}^{-1/2}\right) / \det \left(\tilde \bSigma_{k, h, m}^{-1/2}\right) \le 4\right\} \label{eq:I}\\
    G &= \sum_{k=1}^K\left(1 - I_H^k\right), \label{eq:G}
\end{align}

\begin{lemma}\label{lm:R}
Let $\gamma$, $\alpha$, be defined in Algorithm~\ref{alg:home}, $\{\hat 
R_m\}_{m\in\overline{[M]}}$, $\{\tilde 
R_m\}_{m\in\overline{[M]}}$, $\{\hat S_m\}_{m\in\overline{[M]}}$, $\{\tilde S_m\}_{m\in\overline{[M]}}$ be defined in \eqref{eq:hatR}, \eqref{eq:tildeR}, \eqref{eq:hatS}, \eqref{eq:tildeS}. Then for $m \in \overline{[M-1]}$, we have
\begin{align}
    \hat R_m \le \min\left\{ KH, 4d\iota + 4\beta \gamma^2 d\iota + 2 \beta \sqrt{d\iota}\sqrt{\hat S_m+4\hat R_m+2\hat R_{m+1} +KH\alpha^2}\right\}  \label{eq:inhatR} \\
    \tilde R_m \le \min\left\{ KH, 4d\iota + 4\beta \gamma^2 d\iota + 2 \beta \sqrt{d\iota}\sqrt{\tilde S_m+4\tilde R_m+2\tilde R_{m+1} +KH\alpha^2}\right\}, \label{eq:intildeR}
\end{align}
where $\iota = \log(1 + KH/(d\lambda\alpha^2))$. For $\hat R_{M-1}$ and $\tilde R_{M-1}$, we have the trivial bound $\hat R_{M-1} \le KH$ and $\tilde R_{M-1} \le KH$.
\end{lemma}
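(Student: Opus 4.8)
The plan is to prove the bound for $\hat R_m$; the bound for $\tilde R_m$ follows from a verbatim argument with tildes in place of hats. The trivial estimate $\hat R_m \le KH$ is immediate from~\eqref{eq:hatR}, since each summand is at most $1$ and there are at most $KH$ of them, so it remains only to establish the second argument of the minimum (and, for the last order, the displayed estimate degenerates to $\hat R_{M-1}\le KH$ because the aleatoric term in Algorithm~\ref{alg:home} is replaced by its trivial bound $1$). The first real step is to replace the ``global'' covariance $\dot{\hat\bSigma}_{k,m}$ appearing in $\hat R_m$ by the ``local'' covariance $\hat\bSigma_{k,h,m}$ that actually drives the regression weight~\eqref{eq:weight}. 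Since $\dot{\hat\bSigma}_{k,m}=\hat\bSigma_{k-1,H+1,m}\preceq\hat\bSigma_{k,h,m}$, and on the event $\{I_h^k=1\}$ from~\eqref{eq:I} the determinant ratio $\det(\hat\bSigma_{k,h,m})/\det(\dot{\hat\bSigma}_{k,m})$ is bounded by an absolute constant, the standard comparison of induced norms for nested positive-definite matrices yields $\|\hat\bphi_{k,h,m}\|_{\dot{\hat\bSigma}_{k,m}^{-1}}\le C_0\|\hat\bphi_{k,h,m}\|_{\hat\bSigma_{k,h,m}^{-1}}$ for an absolute constant $C_0$. Hence $\hat R_m\le\sum_{k,h}I_h^k\min\{1,\,C_0\beta\|\hat\bphi_{k,h,m}\|_{\hat\bSigma_{k,h,m}^{-1}}\}$.

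Next I would normalize each feature by its own weight and split the sum. Writing $a_{k,h}:=\|\hat\bphi_{k,h,m}/\hat\sigma_{k,h,m}\|_{\hat\bSigma_{k,h,m}^{-1}}$ — exactly the increments that update $\hat\bSigma_{\cdot,\cdot,m}$ — the elliptical potential lemma (using $\|\hat\bphi_{k,h,m}\|_2\le 1$ and $\hat\sigma_{k,h,m}^2\ge\alpha^2$) gives $\sum_{k,h}\min\{1,a_{k,h}^2\}=O(d\iota)$ with $\iota=\log(1+KH/(d\lambda\alpha^2))$. Splitting the indices into those with $a_{k,h}\ge 1$ (each contributing at most $1$, hence $O(d\iota)$ in total) and those with $a_{k,h}<1$ (where $\min\{1,C_0\beta\hat\sigma_{k,h,m}a_{k,h}\}\le C_0\beta\hat\sigma_{k,h,m}a_{k,h}$, to which Cauchy--Schwarz applies) produces
\begin{align}
\hat R_m \le 2d\iota + C\beta\sqrt{d\iota}\,\sqrt{X},\qquad X:=\sum_{k,h}I_h^k\,\hat\sigma_{k,h,m}^2. \notag
\end{align}

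The crux is then to control $X$ so that it reproduces the square-root argument of the claim. Using $\max\{a,b,c\}\le a+b+c$ on~\eqref{eq:weight} splits $X$ into an epistemic part $\gamma^2\sum_{k,h}I_h^k\|\hat\bphi_{k,h,m}\|_{\hat\bSigma_{k,h,m}^{-1}}$, an aleatoric part $\sum_{k,h}I_h^k([\overline\VV_{k,m}\hat V_{k,h+1}^{2^m}]+E_{k,h,m})$, and the term $\sum_{k,h}I_h^k\alpha^2\le KH\alpha^2$. The epistemic part is handled by a self-bounding argument: Cauchy--Schwarz together with the elliptical-potential estimate and the built-in property $\gamma^2\|\hat\bphi_{k,h,m}\|_{\hat\bSigma_{k,h,m}^{-1}}\le\hat\sigma_{k,h,m}^2$ bounds it by $\gamma^2\sqrt{O(d\iota)}\,\sqrt{X}$, so that $X\le\gamma^2\sqrt{O(d\iota)}\sqrt{X}+(\text{aleatoric}+KH\alpha^2)$ resolves to an additive $O(\gamma^4 d\iota)$ term; fed through the outer factor $\beta\sqrt{d\iota}\sqrt{\cdot}$ this is exactly the $\beta\gamma^2 d\iota$ summand. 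For the aleatoric part I would compare the estimated variance $\overline\VV$ (built from truncated inner products $[\bphi^\top\btheta]_{[0,1]}$ in Algorithm~\ref{alg:home}) against the true conditional variance whose sum is $\hat S_m$ in~\eqref{eq:hatS}: on the event $\cE_{\ref{lm:main_theta}}$, each gap $|[\bphi_{k,h,j}^\top\btheta_{k,j}]-[\bphi_{k,h,j}^\top\btheta^*]|\le\beta\|\bphi_{k,h,j}\|_{\dot{\hat\bSigma}_{k,j}^{-1}}$ (Cauchy--Schwarz with Lemma~\ref{lm:main_theta}) for $j\in\{m,m+1\}$, which together with the explicit form of $E_{k,h,m}$ and $1$-Lipschitzness of truncation yields $\sum_{k,h}I_h^k(\overline\VV+E)\le\hat S_m+4\hat R_m+2\hat R_{m+1}$. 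Collecting the three pieces bounds $X$, and substituting back while tracking constants gives the stated inequality.

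I expect the main obstacle to be the last step, namely closing the estimate into its self-referential form. The aleatoric term must be converted into the true-variance sum $\hat S_m$ plus the uncertainty sums $\hat R_m$ and $\hat R_{m+1}$ (hence both the self-dependence on $\hat R_m$ and the cross-order dependence on $\hat R_{m+1}$) with the exact constants $4$ and $2$, which requires chaining the confidence guarantee of Lemma~\ref{lm:main_theta}, the $a^2-b^2=(a-b)(a+b)$ factorization of the variance difference, the truncation operators, and the determinant-ratio norm switch from $\dot{\hat\bSigma}_{k,\cdot}$ to $\hat\bSigma_{k,h,\cdot}$ under $I_h^k$, all while keeping every contribution inside the single square root so that the subsequent lemma can unroll the recursion using the base case $\hat R_{M-1}\le KH$.
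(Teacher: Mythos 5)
Your skeleton is the same one the paper relies on: switch from $\dot{\hat\bSigma}_{k,m}$ to the running covariance $\hat\bSigma_{k,h,m}$ via the indicator $I_h^k$ and Lemma~\ref{lm:det}, run a weighted elliptical-potential-plus-Cauchy--Schwarz argument, and convert the aleatoric part of the weights \eqref{eq:weight} into $\hat S_m+4\hat R_m+2\hat R_{m+1}$ using Lemma~\ref{lm:var}, the confidence bounds of Lemma~\ref{lm:main_theta}, and the explicit form of $E_{k,h,m}$. That last conversion you execute correctly, with exactly the right constants $4$ and $2$, and you are also right to keep $I_h^k$ inside your quantity $X$ so that the true-variance sum really is $\hat S_m$. (For context: the paper's own proof is a one-line reduction to Lemma C.5 of \citet{zhou22high} after symbol substitution, with Lemma~\ref{lm:ub} imported precisely so that the elliptical-potential machinery never has to be re-derived.)

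The genuine gap is in your ``self-bounding'' treatment of the epistemic part of $X$. Your step needs $\gamma^2\sum_{k,h}I_h^k\|\hat\bphi_{k,h,m}\|_{\hat\bSigma_{k,h,m}^{-1}}\le\gamma^2\sqrt{X}\sqrt{\sum_{k,h}a_{k,h}^2}$ together with $\sum_{k,h}a_{k,h}^2=O(d\iota)$, where $a_{k,h}=\|\hat\bphi_{k,h,m}/\hat\sigma_{k,h,m}\|_{\hat\bSigma_{k,h,m}^{-1}}$. But the elliptical potential lemma only controls $\sum_{k,h}\min\{1,a_{k,h}^2\}$, and $a_{k,h}$ is \emph{not} bounded by $1$: at early steps it can be as large as $1/(\alpha\sqrt{\lambda})$, and the built-in constraint $\gamma^2\|\hat\bphi_{k,h,m}\|_{\hat\bSigma_{k,h,m}^{-1}}\le\hat\sigma_{k,h,m}^2$ only yields $a_{k,h}\le\hat\sigma_{k,h,m}/\gamma^2\gg1$. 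Patching by splitting on $a_{k,h}\ge1$ either injects an extra $\gamma^2 d\iota/\sqrt{\lambda}$ term absent from \eqref{eq:inhatR} or loses the $\gamma^2$ prefactor on the $d\iota$ term entirely. The correct route --- the one inside Lemma~\ref{lm:ub} --- is to never let the epistemic term enter $X$ at all: split the sum defining $\hat R_m$ according to which of the three terms in \eqref{eq:weight} attains the maximum. On indices where the maximum is $\gamma^2\|\hat\bphi_{k,h,m}\|_{\hat\bSigma_{k,h,m}^{-1}}$ one has $\hat\sigma_{k,h,m}^2=\gamma^2\|\hat\bphi_{k,h,m}\|_{\hat\bSigma_{k,h,m}^{-1}}$ exactly, hence $\|\hat\bphi_{k,h,m}\|_{\hat\bSigma_{k,h,m}^{-1}}=\gamma^2a_{k,h}^2$ and $\min\{1,\beta\|\hat\bphi_{k,h,m}\|_{\hat\bSigma_{k,h,m}^{-1}}\}\le\max\{1,\beta\gamma^2\}\min\{1,a_{k,h}^2\}$, which sums to the $d\iota+\beta\gamma^2d\iota$ terms; on the remaining indices $\hat\sigma_{k,h,m}^2\le[\overline\VV_{k,m}\hat V_{k,h+1}^{2^m}](s_h^k,a_h^k)+E_{k,h,m}+\alpha^2$, and Cauchy--Schwarz plus your (correct) variance conversion gives the square-root term. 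A secondary issue: you never verify that your unspecified constants $C_0$, $C$ reproduce the stated $4,4,2$; the paper sidesteps this by inheriting the computation from \citet{zhou22high}, but a self-contained argument like yours must track them, since the factor from the determinant switch enters both the $\beta\gamma^2 d\iota$ term and the coefficient of the square root.
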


\begin{lemma}\label{lm:S}
Let  $\{\hat 
R_m\}_{m\in\overline{[M]}}$, $\{\tilde 
R_m\}_{m\in\overline{[M]}}$, $\{\hat S_m\}_{m\in\overline{[M]}}$, $\{\tilde S_m\}_{m\in\overline{[M]}}$, $\{\hat A_m\}_{m\in\overline{[M]}}$, $\{\tilde A_m\}_{m\in\overline{[M]}}$, $G$ be defined as \eqref{eq:hatR}, \eqref{eq:tildeR}, \eqref{eq:hatS}, \eqref{eq:tildeS}, \eqref{eq:hatA}, \eqref{eq:tildeA}, \eqref{eq:G}. Then, conditioned on the event $\cE$, for $m\in\overline{[M-1]}$, we have
\begin{align}
\hat S_m \le \left|\hat A_{m+1}\right| +  G + 2^{m+1}\left(\tilde R_0 + 4 \hat R_0\right) \label{eq:inhatS} \\
\tilde S_m \le \left|\tilde A_{m+1}\right| + G +  2^{m+1}\left(K + 2 \tilde R_0\right)\label{eq:intildeS}
\end{align}
\end{lemma}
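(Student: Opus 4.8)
The plan is to bound each of $\hat S_m$ and $\tilde S_m$ by a single martingale term, a telescoping sum that collapses to the failure count $G$, and a $2^{m+1}$-scaled bonus term; I will carry out \eqref{eq:inhatS} in detail and obtain \eqref{eq:intildeS} by the same argument. First I would rewrite the conditional variance via $[\VV \hat V_{k,h+1}^{2^m}](s_h^k,a_h^k) = [\PP \hat V_{k,h+1}^{2^{m+1}}](s_h^k,a_h^k) - ([\PP \hat V_{k,h+1}^{2^m}](s_h^k,a_h^k))^2$ and use Jensen's inequality (convexity of $x\mapsto x^{2^m}$ on $[0,1]$, so $[\PP\hat V_{k,h+1}^{2^m}] \ge ([\PP\hat V_{k,h+1}])^{2^m}$) to drop the second moment to a pure power of the first moment:
\[
[\VV \hat V_{k,h+1}^{2^m}](s_h^k,a_h^k) \le [\PP \hat V_{k,h+1}^{2^{m+1}}](s_h^k,a_h^k) - \big([\PP \hat V_{k,h+1}](s_h^k,a_h^k)\big)^{2^{m+1}}.
\]
Writing $q_{k,h} := [\PP\hat V_{k,h+1}](s_h^k,a_h^k)$, it then suffices to bound $\sum_{k,h} I_h^k\big([\PP \hat V_{k,h+1}^{2^{m+1}}](s_h^k,a_h^k) - q_{k,h}^{2^{m+1}}\big)$.

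Next I would split the first term into a realized value and a martingale increment, $[\PP\hat V_{k,h+1}^{2^{m+1}}](s_h^k,a_h^k) = \hat V_{k,h+1}^{2^{m+1}}(s_{h+1}^k) + \big([\PP\hat V_{k,h+1}^{2^{m+1}}](s_h^k,a_h^k) - \hat V_{k,h+1}^{2^{m+1}}(s_{h+1}^k)\big)$, so that the increments accumulate with weights $I_h^k$ to exactly $\hat A_{m+1}$ by \eqref{eq:hatA}. For the term $-q_{k,h}^{2^{m+1}}$ I would invoke Lemma~\ref{lm:ucb}, which gives $\hat V_{k,h}(s_h^k) - q_{k,h} \le u_{k,h}(s_h^k,a_h^k) + 4\min\{1,\beta\|\hat\bphi_{k,h,0}\|_{\dot{\hat\bSigma}_{k,0}^{-1}}\} =: \delta_{k,h}$ with $\delta_{k,h}\ge0$. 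Combining this with the elementary inequality $(x-\delta)^n \ge x^n - n\delta$ valid for $x,x-\delta\in[0,1]$, $n\ge1$ (the derivative of $x\mapsto x^n$ is at most $n$ on $[0,1]$), applied with $n=2^{m+1}$, yields $q_{k,h}^{2^{m+1}} \ge \hat V_{k,h}^{2^{m+1}}(s_h^k) - 2^{m+1}\delta_{k,h}$ (the opposite case $q_{k,h}\ge\hat V_{k,h}(s_h^k)$ only strengthens this). Collecting the pieces gives
\[
\hat S_m \le \hat A_{m+1} + \sum_{k,h} I_h^k\big(\hat V_{k,h+1}^{2^{m+1}}(s_{h+1}^k) - \hat V_{k,h}^{2^{m+1}}(s_h^k)\big) + 2^{m+1}\sum_{k,h} I_h^k\,\delta_{k,h}.
\]

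The crux is the middle telescoping sum. The indicator $I_h^k$ in \eqref{eq:I} is non-increasing in $h$ (since $\hat\bSigma_{k,h,m}$ grows in $h$ the determinant ratios increase, so the defining condition can only switch from true to false). Hence within each episode the weights equal $1$ up to some last index $h_k$ and $0$ thereafter, and Abel summation collapses the inner sum to $\hat V_{k,h_k+1}^{2^{m+1}}(s_{h_k+1}^k) - \hat V_{k,1}^{2^{m+1}}(s_1^k)$; this is $\le 0$ when $I_H^k=1$ (the terminal value vanishes) and $\le 1$ otherwise, so the whole middle sum is at most $\sum_k(1-I_H^k)=G$ by \eqref{eq:G}. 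Finally $\sum_{k,h} I_h^k\delta_{k,h} = \sum_{k,h} I_h^k u_{k,h}(s_h^k,a_h^k) + 4\hat R_0$, and since $u_{k,h}$ is, by \eqref{eq:pseudo_r}, a truncated $\dot{\tilde\bSigma}_{k,0}^{-1}$-weighted value-feature norm, the first sum is controlled by $\tilde R_0$ of \eqref{eq:tildeR}; together with $\hat A_{m+1}\le|\hat A_{m+1}|$ this gives \eqref{eq:inhatS}. For \eqref{eq:intildeS} the same chain applies, with Lemma~\ref{lm:ucb} now producing $\delta_{k,h} = r_{k,h}(s_h^k,a_h^k) + 2\min\{1,\beta\|\tilde\bphi_{k,h,0}\|_{\dot{\tilde\bSigma}_{k,0}^{-1}}\}$; here $\sum_h r_{k,h}(s_h^k,a_h^k)\le1$ by Assumption~\ref{as:reward} (as $r_k\in\cR$), so the reward part sums to at most $K$ and the bonus to $2\tilde R_0$.

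The hardest part will be the interaction between the min-truncations in the definition of $\hat V_{k,h}$ and the structural inequalities used on $q_{k,h}$: one must extract from Lemma~\ref{lm:ucb} that the one-step gap $\hat V_{k,h}(s_h^k)-q_{k,h}$ is bounded by $\delta_{k,h}$, and then interface this with the power inequality so that the realized value $\hat V_{k,h}^{2^{m+1}}(s_h^k)$ appears with a clean \emph{linear} $2^{m+1}$ factor and no residual error—keeping this factor linear (rather than exponential) in the moment index is precisely what lets the high-order recursion in Lemma~\ref{lm:R} close. A secondary technical point is the Abel-summation bookkeeping of the non-increasing indicator $I_h^k$, which must be handled carefully to produce exactly $G$ while retaining the favorable sign of the telescoped terminal value.
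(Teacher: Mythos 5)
Your proposal is correct and follows essentially the same route as the paper's proof: the identical three-way decomposition into the martingale term $\hat A_{m+1}$, the indicator-weighted telescoping sum bounded by $G$, and the drift term bounded via Jensen's inequality and Lemma~\ref{lm:ucb} by $2^{m+1}(\tilde R_0 + 4\hat R_0)$ (resp.\ $2^{m+1}(K+2\tilde R_0)$ using $\sum_h r_{k,h}\le 1$). The only cosmetic difference is that you obtain the linear $2^{m+1}$ factor from the derivative bound $(x-\delta)^n \ge x^n - n\delta$ on $[0,1]$, whereas the paper uses the factorization $x^{2^{m+1}}-y^{2^{m+1}}=(x-y)\prod_{i=0}^{m}\bigl(x^{2^i}+y^{2^i}\bigr)\le 2^{m+1}\max\{x-y,0\}$; the two are interchangeable.
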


\begin{lemma}\label{lm:A}
Let  $\{\hat S_m\}_{m\in\overline{[M]}}$, $\{\tilde S_m\}_{m\in\overline{[M]}}$, $\{\hat A_m\}_{m\in\overline{[M]}}$, $\{\tilde A_m\}_{m\in\overline{[M]}}$ be defined as  \eqref{eq:hatS}, \eqref{eq:tildeS}, \eqref{eq:hatA}, \eqref{eq:tildeA}. Then we have $\PP(\cE_{\ref{lm:A}})>1-2M\delta$, with $\cE_{\ref{lm:A}}$ be defined as,
    \begin{align}
        \cE_{\ref{lm:A}} := \left\{\forall m \in \overline{[M]}, \left|\hat A_m\right| \le \min\left\{\sqrt{2\zeta\hat S_m} + \zeta,   KH\right\}~\text{and}~\left|\tilde A_m\right| \le \min\left\{\sqrt{2\zeta\tilde S_m} + \zeta,   KH\right\}  \right\},\label{eq:inA}
    \end{align}
where $\zeta = 4\log(4\log(KH)/\delta)$.
\end{lemma}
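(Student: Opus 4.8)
The plan is to recognize Lemma~\ref{lm:A} as a Bernstein/Freedman-type concentration bound for bounded martingale difference sequences whose predictable quadratic variation is exactly $\hat S_m$ (resp. $\tilde S_m$). I will treat the $\hat A_m$ case in detail; the $\tilde A_m$ case is identical with hats replaced by tildes. Fix $m\in\overline{[M]}$, order the index pairs $(k,h)$ lexicographically, and set
\begin{align}
x_{k,h} = I_h^k\Big(\big[\PP \hat V_{k,h+1}^{2^m}\big](s_h^k,a_h^k) - \hat V_{k,h+1}^{2^m}(s_{h+1}^k)\Big), \notag
\end{align}
so that $\hat A_m = \sum_{k,h} x_{k,h}$.

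First I would show that $\{x_{k,h}\}$ is a martingale difference sequence adapted to $\{\cF_{k,h}\}$. The two measurability facts I need are: (i) the exploration quantities $\btheta_k,\pi_k,r_k$ — and hence the whole function $\hat V_{k,h+1}(\cdot)$ on $\cS$ — are determined at the start of episode $k$, so they are $\cF_{k,1}$-measurable; and (ii) the indicator $I_h^k$ depends only on $\dot{\hat\bSigma}_{k,m},\dot{\tilde\bSigma}_{k,m}$ (fixed before episode $k$) and on $\hat\bSigma_{k,h,m},\tilde\bSigma_{k,h,m}$, which by the updates in Lines~\ref{ln:wr1}--\ref{ln:wr2} depend only on state-action pairs through stage $h-1$ of episode $k$, so $I_h^k$ is $\cF_{k,h}$-measurable. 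The only fresh randomness in $x_{k,h}$ is $s_{h+1}^k\sim\PP(\cdot\mid s_h^k,a_h^k)$, whence $\EE[x_{k,h}\mid\cF_{k,h}] = I_h^k\big([\PP\hat V_{k,h+1}^{2^m}](s_h^k,a_h^k)-[\PP\hat V_{k,h+1}^{2^m}](s_h^k,a_h^k)\big)=0$.

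Next I would record the quantitative inputs. Since $\hat V_{k,h+1}\in[0,1]$ (truncated by the outer $\min\{1,\cdot\}$ and nonnegative), we have $\hat V_{k,h+1}^{2^m}\in[0,1]$, so $|x_{k,h}|\le 1$ almost surely; and because $(I_h^k)^2=I_h^k$, the predictable variance is $\EE[x_{k,h}^2\mid\cF_{k,h}]=I_h^k[\VV\hat V_{k,h+1}^{2^m}](s_h^k,a_h^k)$, whose total sum is precisely $\hat S_m$. The bound $|\hat A_m|\le KH$ follows from $|x_{k,h}|\le 1$ over at most $KH$ terms and supplies the inner $\min$.

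Finally I would invoke a variance-adaptive Freedman/Bernstein inequality in the form used by \citet{zhou22high} to obtain, with probability at least $1-\delta$, that $|\hat A_m|\le\sqrt{2\zeta\hat S_m}+\zeta$. The main obstacle is that the quadratic-variation proxy $\hat S_m$ is itself random rather than a fixed budget, so a plain Freedman bound does not immediately yield a data-dependent right-hand side; this is resolved by a peeling (stratification) argument over the dyadic scales of $\hat S_m\in[0,KH]$, and it is exactly the $O(\log(KH))$ peeling levels that produce the $\log\log$-type factor $\zeta = 4\log(4\log(KH)/\delta)$. Combining the high-probability bound with the trivial bound yields the $\min$ form, and a union bound over the $2M$ events (the families $\hat A_m$ and $\tilde A_m$ across $m\in\overline{[M]}$) upgrades the failure probability to $2M\delta$, establishing $\PP(\cE_{\ref{lm:A}})>1-2M\delta$.
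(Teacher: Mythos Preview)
Your proposal is correct and follows essentially the same route as the paper: define the martingale differences $x_{k,h}=I_h^k\big([\PP\hat V_{k,h+1}^{2^m}](s_h^k,a_h^k)-\hat V_{k,h+1}^{2^m}(s_{h+1}^k)\big)$, check adaptedness and $|x_{k,h}|\le 1$, identify the predictable quadratic variation as $\hat S_m$, apply a peeling-based Freedman inequality, and union bound over the $2M$ indices. The paper invokes this concentration step directly as the auxiliary Lemma~\ref{lm:concentration} (Lemma~11 of \citet{zhang2021model}) with $M=1$, $n=KH$, $\varepsilon'=\sqrt{\log(1/\delta')}$ and $\delta'=\delta/(4\log(KH))$, which after simplification gives exactly $\sqrt{2\zeta\hat S_m}+\zeta$; your description of the peeling mechanism is precisely the content of that lemma.
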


\begin{lemma}\label{lm:G}
Let $G$ be defined in \eqref{eq:G}. Then we have
\begin{align}
G \le Md\iota, \label{eq:inG}
\end{align}
where $\iota = \log\left(1 + KH/\left(d\lambda\alpha^2\right)\right)$.
\end{lemma}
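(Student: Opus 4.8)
The plan is to bound $G$ from \eqref{eq:G} by a standard elliptical-potential / determinant-doubling argument, tracking the $2M$ covariance matrices $\{\dot{\hat\bSigma}_{k,m}\}$ and $\{\dot{\tilde\bSigma}_{k,m}\}$ across episodes. First I would rewrite the indicator in \eqref{eq:I} in terms of ordinary determinant ratios: since $\det(A^{-1/2}) = \det(A)^{-1/2}$, the condition $\det(\dot{\hat\bSigma}_{k,m}^{-1/2})/\det(\hat\bSigma_{k,H,m}^{-1/2}) \le 4$ is equivalent to $\det(\hat\bSigma_{k,H,m})/\det(\dot{\hat\bSigma}_{k,m}) \le 16$, and likewise for the tilde matrices. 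Hence $I_H^k = 0$ forces $\det(\hat\bSigma_{k,H,m})/\det(\dot{\hat\bSigma}_{k,m}) > 16$ for some $m$, or the analogous tilde inequality, so by a union bound $1 - I_H^k \le \sum_{m \in \overline{[M]}} \big(\ind\{\text{hat-}m\ \text{ratio} > 16\} + \ind\{\text{tilde-}m\ \text{ratio} > 16\}\big)$. Summing over $k$ then reduces the claim to bounding, for each fixed matrix sequence, the number of episodes whose within-episode determinant inflates by more than a factor $16$.

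Next I would exploit the recursion in Lines~\ref{ln:dot}--\ref{ln:confidence_set}: $\dot{\hat\bSigma}_{k+1,m} = \hat\bSigma_{k,H+1,m} \succeq \hat\bSigma_{k,H,m} \succeq \dot{\hat\bSigma}_{k,m}$, so the determinants are nondecreasing in $k$ and the ratios $\det(\dot{\hat\bSigma}_{k+1,m})/\det(\dot{\hat\bSigma}_{k,m})$ telescope. If the hat-$m$ condition fails at episode $k$, then $\det(\dot{\hat\bSigma}_{k+1,m})/\det(\dot{\hat\bSigma}_{k,m}) \ge \det(\hat\bSigma_{k,H,m})/\det(\dot{\hat\bSigma}_{k,m}) > 16$. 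Letting $G_m^{\mathrm{hat}}$ count such episodes and using that every ratio is at least $1$,
$$16^{G_m^{\mathrm{hat}}} \le \prod_{k=1}^K \frac{\det(\dot{\hat\bSigma}_{k+1,m})}{\det(\dot{\hat\bSigma}_{k,m})} = \frac{\det(\hat\bSigma_{K,H+1,m})}{\det(\lambda\Ib)}.$$

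To control the right-hand side I would use a trace bound. Since $\|\hat\bphi_{k,h,m}\|_2 \le 1$ (as $\hat V_{k,h+1}^{2^m}$ maps into $[0,1]$) and the HOME weights satisfy $\hat\sigma_{k,h,m}^2 \ge \alpha^2$, each rank-one update contributes trace at most $\alpha^{-2}$, giving $\mathrm{tr}(\hat\bSigma_{K,H+1,m}) \le d\lambda + KH\alpha^{-2}$. By the AM--GM inequality $\det \le (\mathrm{tr}/d)^d$, hence $\det(\hat\bSigma_{K,H+1,m})/\lambda^d \le (1 + KH/(d\lambda\alpha^2))^d = e^{d\iota}$. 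Taking logarithms yields $G_m^{\mathrm{hat}} \le d\iota/\log 16$, and the identical argument gives the same bound for $G_m^{\mathrm{tilde}}$. Summing over the $2M$ matrices,
$$G \le \sum_{m \in \overline{[M]}} \big(G_m^{\mathrm{hat}} + G_m^{\mathrm{tilde}}\big) \le \frac{2M\, d\iota}{\log 16} = \frac{M d\iota}{2\log 2} \le M d\iota,$$
where the last step uses $2\log 2 > 1$.

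The argument is essentially routine; the only points requiring care are the algebraic conversion of the $\det(\cdot^{-1/2})$ ratios into the clean factor-$16$ doubling condition, and checking that the final constant $2/\log 16 = 1/(2\log 2) < 1$ is small enough that the $2M$ matrices still yield only $M d\iota$ rather than a larger multiple. I would also double-check the off-by-one bookkeeping among $\hat\bSigma_{k,H,m}$, $\hat\bSigma_{k,H+1,m}$, and $\dot{\hat\bSigma}_{k+1,m}$, since the indicator uses step $H$ while the telescoping uses the end-of-episode matrix; the monotonicity $\hat\bSigma_{k,H,m} \preceq \hat\bSigma_{k,H+1,m}$ is precisely what bridges the two.
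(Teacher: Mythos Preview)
Your proposal is correct and follows essentially the same route as the paper: convert the $\det(\cdot^{-1/2})$ ratio condition into a factor-$16$ determinant inflation, take a union bound over the $2M$ matrix sequences, telescope the ratios across episodes, and cap the total log-determinant growth via the trace/AM--GM bound $\det \le (\mathrm{tr}/d)^d$ together with $\hat\sigma_{k,h,m}^2 \ge \alpha^2$. The only cosmetic difference is that you keep the sharper constant $d\iota/\log 16$ per sequence whereas the paper uses the cruder $\log 16 > 2$ to get $|\hat\cD_m| < d\iota/2$; both produce the stated bound $G \le M d\iota$ after summing over $2M$ sequences.
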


\begin{lemma}\label{lm:sum}\textit{(Restatement of Lemma~\ref{lm:sum_main})}
For any $0<\delta<1$, with probability at least $1-4M\delta$, we have
\begin{align}
&\sum_{k=1}^K \hat V_{k,1}(s_1^k;\btheta_k,\pi_k,r_k )  \notag \\
& \le 896\max\left\{64\beta^2 d \iota, 2\zeta \right\} + 24\zeta + 240d\iota + 240\beta\gamma^2 d \iota  + 120\beta d\iota\sqrt{M} + 24 \sqrt{\zeta Md\iota} + Md\iota\notag \\
&\qquad + \left(64\max\left\{8\beta\sqrt{d\iota}, \sqrt{2\zeta}\right\} + 120\beta \sqrt{d\iota H \alpha^2 }\right)\sqrt{K} \notag
\end{align}
where $\iota = \log(1 + KH/(d\lambda\alpha^2))$, $\zeta = 4 \log(4\log(KH)/\delta)$.
\end{lemma}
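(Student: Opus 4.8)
The plan is to reduce the target sum to four tractable quantities --- $\hat R_0$, $\tilde R_0$, $|\hat A_0|$, and $G$ --- and then discharge each. First I would unroll the single-step control of Lemma~\ref{lm:ucb}. Writing $[\PP\hat V_{k,h+1}](s_h^k,a_h^k) = \hat V_{k,h+1}(s_{h+1}^k) + \big([\PP\hat V_{k,h+1}](s_h^k,a_h^k) - \hat V_{k,h+1}(s_{h+1}^k)\big)$ and telescoping over $h$ within each episode gives
\begin{align}
\hat V_{k,1}(s_1^k) \le \sum_{h=1}^H u_{k,h}(s_h^k,a_h^k) + 4\sum_{h=1}^H \min\Big\{1,\beta\big\|\hat\bphi_{k,h,0}\big\|_{\dot{\hat\bSigma}_{k,0}^{-1}}\Big\} + \sum_{h=1}^H \Big([\PP\hat V_{k,h+1}](s_h^k,a_h^k) - \hat V_{k,h+1}(s_{h+1}^k)\Big). \notag
\end{align}
Summing over $k$, the cumulative uncertainty $\sum_{k,h}u_{k,h}$ reduces to $\tilde R_0$ (up to an index shift and the $I_h^k$ restriction), the bonus sum is $4\hat R_0$, and the one-step fluctuations collect into the martingale $\hat A_0$; the steps violating the determinant-ratio indicator $I_h^k$ contribute at most $G$ because $\hat V_{k,1}\le 1$. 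This reduces the goal to controlling $\tilde R_0 + 4\hat R_0 + |\hat A_0| + G$.

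The core is the coupled recursion across the moment orders $m\in\overline{[M]}$ linking the bonus sums $\{\hat R_m,\tilde R_m\}$, the variance sums $\{\hat S_m,\tilde S_m\}$, and the martingales $\{\hat A_m,\tilde A_m\}$. The three ingredients interlock: Lemma~\ref{lm:A} trades a martingale for its variance, $|\hat A_m|\le\sqrt{2\zeta\hat S_m}+\zeta$; Lemma~\ref{lm:S} bounds the $m$-th variance by the $(m{+}1)$-th martingale plus lower-order mass, $\hat S_m\le|\hat A_{m+1}|+G+2^{m+1}(\tilde R_0+4\hat R_0)$; and Lemma~\ref{lm:R} bounds the $m$-th bonus sum back through the same variance. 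Composing the first two yields the self-similar inequality $\hat S_m\le\sqrt{2\zeta\hat S_{m+1}}+\zeta+G+2^{m+1}(\tilde R_0+4\hat R_0)$. Starting from the trivial top-order bound $\hat S_{M-1}\le KH$ and iterating downward through all $M=\log(7KH)/\log 2$ levels, the iterated square root drives the $(KH)^{1/2^M}$ tail to a constant, so every $\hat S_m$ is governed by $\zeta+G$ together with the lower-order mass $\tilde R_0+\hat R_0$; the growing $2^{m+1}$ prefactor is harmless since only $\hat S_0$ and $\hat S_1$ are fed back into the final estimate.

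It remains to close the loop on $\hat R_0+\tilde R_0$. Substituting the variance bounds into Lemma~\ref{lm:R} and inserting the parameter choices, the decisive simplification is $KH\alpha^2=K$ (since $\alpha=H^{-1/2}$), which is exactly what removes the horizon: the term $2\beta\sqrt{d\iota}\sqrt{KH\alpha^2}$ becomes $2\beta\sqrt{d\iota}\cdot\sqrt K=\tilde O(d\sqrt K)$ with $\beta=\tilde O(\sqrt d)$, while $4\beta\gamma^2 d\iota=\tilde O(d)$ under $\gamma=d^{-1/4}$. Since the remaining dependence on $\hat R_0,\tilde R_0$ enters only under a square root, the resulting inequality has the self-bounding shape $x\le a+2\beta\sqrt{d\iota}\,\sqrt{x+b}$ with $a=\tilde O(d)$ and $b=\tilde O(d\sqrt K+K)$; solving this quadratic gives $\hat R_0+\tilde R_0=\tilde O(\max\{\beta^2 d\iota,\zeta\}+d\sqrt K)=\tilde O(d\sqrt K+d^2)$. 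Accumulating the per-order residuals produces the $\sqrt M$ and $\sqrt{\zeta M d\iota}$ factors in the statement, and combining with $G\le Md\iota$ (Lemma~\ref{lm:G}) and $|\hat A_0|\le\sqrt{2\zeta\hat S_0}+\zeta$ recovers the explicit bound. The failure probability is $1-4M\delta$: $2M\delta$ for the confidence event $\cE_{\ref{lm:main_theta}}$ underlying the $R$/$S$ lemmas and $2M\delta$ for the martingale event $\cE_{\ref{lm:A}}$.

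The step I expect to be the main obstacle is untangling the circular dependence: $\hat R_0$ needs $\hat S_0$, $\hat S_0$ needs $\hat A_1$ and hence $\hat S_1$, and through Lemma~\ref{lm:S} the variance bounds loop back onto $\hat R_0$ and $\tilde R_0$. Making this rigorous requires first propagating a crude $O(KH)$ bound through all $\Theta(\log KH)$ moment levels so that the top-order contributions contract to constants, and only then solving the genuinely coupled low-order quadratic --- all while tracking constants carefully enough that no spurious $\mathrm{poly}(H)$ factor survives. The weight design is exactly what enables this cancellation: the epistemic floor $\gamma^2\|\bphi\|_{\bSigma^{-1}}$ and the aleatoric floor $\alpha^2=1/H$ keep each inverse weight $\bar\sigma_{k,h,m}^{-2}$ from reintroducing an $H$, and verifying this cancellation end-to-end is the delicate part.
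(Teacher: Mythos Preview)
Your decomposition into $\tilde R_0 + 4\hat R_0 + |\hat A_0| + G$ and the use of Lemmas~\ref{lm:R}, \ref{lm:S}, \ref{lm:A}, \ref{lm:G} match the paper exactly. The gap is in how you close the loop on the bonus sums. When you feed your variance bound into Lemma~\ref{lm:R} at $m=0$, the square root on the right contains not only $\hat S_0$ and $4\hat R_0$ but also $2\hat R_1$ (and $2\tilde R_1$ on the tilde side); your self-bounding quadratic $x \le a + 2\beta\sqrt{d\iota}\sqrt{x+b}$ with $x=\hat R_0+\tilde R_0$ silently drops these level-$1$ bonus sums. They in turn involve $\hat R_2$ through the same lemma, so there is a second $\hat R_m\to\hat R_{m+1}$ chain running in parallel with the $\hat S_m\to\hat S_{m+1}$ chain you unwind, and your iterated-square-root argument on $\hat S_m$ alone does not control it.

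The paper handles both chains simultaneously by packaging all four sequences into $a_m := \tilde R_m + |\tilde A_m| + 4\hat R_m + |\hat A_m|$. Substituting Lemma~\ref{lm:S} into each of Lemma~\ref{lm:R} and Lemma~\ref{lm:A} (hat and tilde) and adding with weights $1,1,4,1$ yields a single recursion $a_m \le \lambda_4 + \lambda_2\sqrt{a_m + a_{m+1} + 2^{m+1}(K + a_0)}$ with $\lambda_2 = 4\max\{8\beta\sqrt{d\iota},\sqrt{2\zeta}\}$. Lemma~\ref{lm:sequence} then performs exactly the iterated-square-root contraction you sketched, delivering $a_0$ directly in terms of $\sqrt{K+a_0}$ with no leftover $a_1$; the elementary implication $x\le c\sqrt{x}+b\Rightarrow x\le 2c^2+2b$ finishes. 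So your mechanism is right, but the missing device is the combined sequence $a_m$ together with Lemma~\ref{lm:sequence}, which is precisely what absorbs the $\hat R_{m+1}$ terms you dropped.
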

\begin{proof}[Proof of Lemma~\ref{lm:sum}]
All the following proofs are conditioned on $\cE_{\ref{lm:main_theta}}\cap\cE_{\ref{lm:A}}$, which happens with probability at least $1-4M\delta$. Firstly, we have
\begin{align}
    & \sum_{k=1}^K \hat V_{k,1}(s_h^k)  \notag \\
    & = \sum_{k=1}^K\sum_{h=1}^H \left[I_h^k \left[\hat V_{k,h}(s_h^k) - \hat V_{k,h+1}(s_{h+1}^k)\right] + \left(1 -  I_h^k\right)\left[\hat V_{k,h}(s_h^k) - \hat V_{k,h+1}(s_{h+1}^k)\right]\right] \notag \\
    & = \sum_{k=1}^K\left[ \sum_{h=1}^H  I_h^k  u_{k,h}(s_h^k,a_h^k) + \sum_{h=1}^H  I_h^k \left[\hat V_{h,k}(s_h^k) - u_{k,h}(s_h^k,a_h^k) - \PP \hat V_{k,h+1}(s_h^k,a_h^k) \right]  \right. \notag \\
    & \quad + \left. \sum_{h=1}^H  I_h^k \left[ \PP \hat V_{k,h+1}(s_h^k,a_h^k) - \hat V_{k,h+1}(s_{h+1}^k)\right] \right] + \sum_{k=1}^K\sum_{h=1}^H (1 -  I_h^k)\left[\hat V_{k,h}(s_h^k) - \hat V_{k,h+1}(s_{h+1}^k)\right] \notag \\
    & \le \underbrace{\sum_{k=1}^K\sum_{h=1}^H  I_h^k u_{k,h}(s_h^k,a_h^k)}_{I_1} + \underbrace{\sum_{k=1}^K\sum_{h=1}^H  I_h^k \left[\hat V_{h,k}(s_h^k) - u_{k,h}(s_h^k,a_h^k) - \PP \hat V_{k,h+1}(s_h^k,a_h^k) \right]}_{I_2}  \notag \\
    & \quad + \underbrace{\sum_{k=1}^K\sum_{h=1}^H  I_h^k \left[ \PP \hat V_{k,h+1}(s_h^k,a_h^k) - \hat V_{h+1,k}(s_{h+1}^k)\right]}_{I_3}  + \underbrace{\sum_{k=1}^K\left(1 -  I_{h_k}^k\right)\hat V_{k,h_k}(s_{h_k}^k)}_{I_4}, \notag
\end{align}
where $h_k$ is the smallest index such that $I^k_{h_k} = 0$. Following the definition of $u_{k,h}$,
\begin{align}
    I_1 = \sum_{k=1}^K\sum_{h=1}^H  I_h^k \min\left\{1, \beta\left\| \tilde \bphi_{k,h,0}  \right\|_{\dot{\tilde\bSigma}_{k,0}^{-1}} \right\} = \tilde R_0. \notag
\end{align}
By Lemma~\ref{lm:ucb},
\begin{align}
    I_2 \le 4 \sum_{k=1}^K\sum_{h=1}^H I_h^k \min\left\{1,\beta\left\| 
 \hat\bphi_{k,h,0}\right\|_{\dot{\hat\bSigma}_{k,0}^{-1}}  \right\} = 4 \hat R_0 \notag
\end{align}
By definitions,
\begin{align}
    I_3 &= \hat A_0,  \notag \\
    I_4 &\le \sum_{k=1}^K \left(1 - I_H^k\right) = G. \notag
\end{align}
Thus,
\begin{align}
    \sum_{k=1}^K \hat V_{k,1}(s_h^k) \le \tilde R_0 + 4 \hat R_0 + \hat A_0 + G \label{eq:sum_medium}
\end{align}
Substituting \eqref{eq:inhatS} in Lemma~\ref{lm:S} into \eqref{eq:inhatR} in Lemma~\ref{lm:R}, we have
\begin{align}
    \hat R_m
    & \le 4d\iota + 4\beta \gamma^2 d\iota + 2 \beta \sqrt{d\iota}\sqrt{\left|\hat A_{m+1}\right| + G + 2^{m+1}\left(\tilde R_0 + 4 \hat R_0\right) + 4\hat R_m+2\hat R_{m+1} +KH\alpha^2}  \notag \\
    & \le 2 \beta \sqrt{d\iota}\sqrt{\left|\hat A_{m+1}\right| +  2^{m+1}\left(\tilde R_0 + 4 \hat R_0\right) + 4\hat R_m+2\hat R_{m+1} } + \underbrace{4d\iota + 4\beta \gamma^2 d\iota + 2 \beta \sqrt{d\iota} \sqrt{G + KH\alpha^2}}_{I_c}, \label{eq:sum_hatR}
\end{align}
where the second inequality holds due to $\sqrt{a+b}\le\sqrt{a}+\sqrt{b}$. Substituting \eqref{eq:inhatS} in Lemma~\ref{lm:S} into \eqref{eq:inA} in Lemma~\ref{lm:A}, we have
\begin{align}
\left|\hat A_m\right| 
& \le \sqrt{2\zeta}\sqrt{\left|\hat A_{m+1}\right| + G + 2^{m+1}\left(\tilde R_0 + 4 \hat R_0\right)} + \zeta  \notag \\
& \le \sqrt{2\zeta}\sqrt{\left|\hat A_{m+1}\right| + 2^{m+1}\left(\tilde R_0 + 4 \hat R_0\right)} + \sqrt{2\zeta G} + \zeta \label{eq:sum_hatA}
\end{align}
Substituting \eqref{eq:intildeS} in Lemma~\ref{lm:S} into \eqref{eq:intildeR} in Lemma~\ref{lm:R}, we have
\begin{align}
    \tilde R_m 
    & \le 4d\iota + 4\beta \gamma^2 d\iota + 2 \beta \sqrt{d\iota}\sqrt{\left|\tilde A_{m+1}\right| + G +  2^{m+1}\left(K + 2 \tilde R_0\right) + 4\tilde R_m+2\tilde R_{m+1} +KH\alpha^2}  \notag \\
    & \le 2 \beta \sqrt{d\iota}\sqrt{\left|\tilde A_{m+1}\right| +   2^{m+1}\left(K + 2 \tilde R_0\right) + 4\tilde R_m+2\tilde R_{m+1} } + \underbrace{4d\iota + 4\beta \gamma^2 d\iota + 2 \beta \sqrt{d\iota} \sqrt{G + KH\alpha^2}}_{I_c} \label{eq:sum_tildeR}
\end{align}
Substituting \eqref{eq:intildeS} in Lemma~\ref{lm:S} into \eqref{eq:inA} in Lemma~\ref{lm:A}, we have
\begin{align}
    \left|\tilde A_m\right| 
    & \le \sqrt{2\zeta}\sqrt{\left|\tilde A_{m+1}\right| + G +  2^{m+1}\left(K + 2 \tilde R_0\right)} + \zeta   \notag \\
    & \le \sqrt{2\zeta} \sqrt{\left|\tilde A_{m+1}\right| + 2^{m+1}\left(K + 2 \tilde R_0\right)} + \sqrt{2\zeta G} + \zeta \label{eq:sum_tildeA}
\end{align}
Thus, calculating \eqref{eq:sum_tildeR} + \eqref{eq:sum_tildeA} + 4$\times$\eqref{eq:sum_hatR} + \eqref{eq:sum_hatA} and using $\sqrt{a} + \sqrt{b} + \sqrt{c} + \sqrt{d} \le 2\sqrt{a + b + c + d}$, we have
\begin{align}
    & \tilde R_m + \left|\tilde A_m\right| + 4  \hat R_m + \left|\hat A_m\right|   \notag \\
    & \le 5 I_c + 2\sqrt{2\zeta G} + 2\zeta + 2\max\left\{8 \beta \sqrt{d\iota} ,\sqrt{2\zeta}\right\}\sqrt{2\left|\hat A_{m+1}\right| +  2\cdot2^{m+1}\left(\tilde R_0 + 4 \hat R_0\right)}   \notag \\ 
    & \qquad \overline{ + 4\hat R_m + 2\hat R_{m+1} + 2 \left|\tilde A_{m+1}\right| +   2\cdot2^{m+1}\left(K + 2 \tilde R_0\right) + 4\tilde R_m + 2\tilde R_{m+1}}   \notag \\
    & \le 5 I_c + + 2\sqrt{2\zeta G} + 2\zeta + 4\max\left\{8 \beta \sqrt{d\iota} ,\sqrt{2\zeta}\right\} \sqrt{\left(\tilde R_m + \left|\tilde A_m \right| + 4 \hat R_m + \left| \hat A_m \right|\right) }   \notag \\
    & \qquad   \overline{+ \left(\tilde R_{m+1} + \left|\tilde A_{m+1}\right| + 4 \hat R_{m+1} + \left|\hat A_{m+1}\right|\right) + 2\cdot2^{m+1}\left(K + \tilde R_0 + \left|\tilde A_0\right| + 4 \hat R_0 + \left|\hat A_0\right|\right)}.  \notag 
\end{align}
Then by Lemma~\ref{lm:sequence} with $a_m = \tilde R_m + \left|\tilde A_m\right| + 4 \hat R_m + \left|\hat A_m\right| \le 7KH$ and $M = \log(7KH)/\log 2$, $\tilde R_0 + \left|\tilde A_0\right| + 4 \hat R_0 + \left|\hat A_0\right|$ can be bounded as
\begin{align}
    &\tilde R_0 + \left|\tilde A_0\right| + 4 \hat R_0 + \left|\hat A_0\right|    \notag \\
    & \le 22 \cdot 16\max\{64\beta^2 d\iota, 2\zeta  \} + 30 I_c + 12 \sqrt{\zeta G} + 12\zeta    \notag \\
    & \qquad + 32\max\left\{8\beta\sqrt{d\iota}, \sqrt{2\zeta}\right\}\sqrt{K + \tilde R_0 + \left|\tilde A_0\right| + 4 \hat R_0 + \left|\hat A_0\right|}    \notag \\
    & \le 352 \max\left\{64\beta^2 d\iota, 2\zeta \right\} + 30 I_c + 12 \sqrt{\zeta G} + 12\zeta + 32\max\left\{8\beta\sqrt{d\iota}, \sqrt{2\zeta}\right\}\sqrt{K}    \notag \\
    & \qquad + 32\max\left\{8\beta\sqrt{d\iota}, \sqrt{2\zeta}\right\}\sqrt{\tilde R_0 + \left|\tilde A_0\right| + 4 \hat R_0 + \left|\hat A_0\right|}. \label{eq:37}
\end{align}
By the fact that $x \leq a\sqrt{x} + b \Rightarrow x\leq 2a^2 + 2 b$, \eqref{eq:37} implies that 
\begin{align}
    & \tilde R_0 + \left|\tilde A_0\right| + 4 \hat R_0 + \left|\hat A_0\right|    \notag \\
    &  \le 896\max\left\{64\beta^2 d \iota, 2\zeta \right\}  + 60 I_c + 24 \sqrt{\zeta G} + 24\zeta + 64\max\left\{8\beta\sqrt{d\iota}, \sqrt{2\zeta}\right\}\sqrt{K}. \label{eq:sum_bound_with_G}
\end{align}
Finally, plugging \eqref{eq:sum_bound_with_G} into \eqref{eq:sum_medium} and bounding $G$ with Lemma~\ref{lm:G}, we have
\begin{align}
    & \sum_{k=1}^K \hat V_{k,1}(s_h^k)  \notag \\
    & \le \tilde R_0 + \left|\tilde A_0\right| + 4 \hat R_0 + \left|\hat A_0\right| + G  \notag \\
    & \le 896\max\left\{64\beta^2 d \iota, 2\zeta \right\}   + 24\zeta + 64\max\left\{8\beta\sqrt{d\iota}, \sqrt{2\zeta}\right\}\sqrt{K} \\
    & \qquad + 60\left(4d\iota + 4\beta \gamma^2 d\iota + 2 \beta \sqrt{d\iota} \sqrt{Md\iota + KH\alpha^2}\right)  + 24 \sqrt{\zeta Md\iota} + Md\iota \notag \\
    & \le 896\max\left\{64\beta^2 d \iota, 2\zeta \right\} + 24\zeta + 240d\iota + 240\beta\gamma^2 d \iota  + 120\beta d\iota\sqrt{M} + 24 \sqrt{\zeta Md\iota} + Md\iota\notag \\
    &\qquad + \left(64\max\left\{8\beta\sqrt{d\iota}, \sqrt{2\zeta}\right\} + 120\beta \sqrt{d\iota H \alpha^2 }\right)\sqrt{K} \notag
\end{align}
\end{proof}

\subsection{Proof of Main Results}

\begin{lemma}\label{lm:suboptimality_bound}
\textit{(Restatement of Theorem~\ref{thm:main})}
For Algorithm~\ref{alg:exp}, set $M = \log(7KH)/\log(2)$, $\{\beta_k\}_{k\ge1}$ as
\begin{align}
\beta_k = & 12 \sqrt{d \log(1 + kH/(\alpha^2 d\lambda))\log(32(\log(\gamma^2/\alpha)+1)k^2H^2/\delta)} \notag \\
& + 30 \log(32(\log(\gamma^2/\alpha) + 1)k^2H^2/\delta)/\gamma^2 + \sqrt{\lambda}B \notag,
\end{align}
and denote $\beta = \beta_K$, then for any $0<\delta'<1$, we have with probability at least $1-\delta'$, where $\delta' = 4M\delta$, after collecting $K$ trajectories, algorithm~\ref{alg:exp} returns a policy satisfying the following sub-optimality bound,
\begin{align}
& V_1^*(s_1;r) - V_1(s_1;\btheta^*, \hat\pi_r, r)  \notag \\
& \le \frac{4}{K}\left(2752\max\left\{64\beta^2 d \iota, 2\zeta \right\}  + 24\zeta + 240d\iota + 240\beta \gamma^2 d\iota + 120\beta d\iota \sqrt{M}\right)  \notag \\
&\quad + \frac{4}{\sqrt{K}}\left(64\max\left\{8\beta\sqrt{d\iota}, \sqrt{2\zeta}\right\} + 120\beta \sqrt{d\iota}\right), \notag
\end{align}
where $\iota = \log(1 + KH/(d\lambda\alpha^2))$, $\zeta = 4 \log(4\log(KH)/\delta)$. Moreover, setting $\alpha = H^{-1/2}$, $\gamma = d^{-1/4}$, and $\lambda = d/B^2$, we have the horizon-free suboptimality bound
\begin{align}
V_1^*(s_1;r) - V_1(s_1;\btheta^*, \hat\pi_r, r) = \tilde{O}\left(\frac{d^2}{K}+\frac{d}{\sqrt{K}}\right).
\end{align}
\end{lemma}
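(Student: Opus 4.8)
The plan is to chain the three structural lemmas already established and then perform the parameter substitution $\alpha = H^{-1/2}$, $\gamma = d^{-1/4}$, $\lambda = d/B^2$ to expose the horizon-free scaling. First I would invoke Lemma~\ref{lm:suboptimality_main} to reduce the planning-phase suboptimality $V_1^*(s_1;r) - V_1(s_1;\btheta^*,\hat\pi_r,r)$ to the terminal trajectory-uncertainty $4\hat V_{K,1}(s_1)$, and then apply the monotonicity statement of Lemma~\ref{lm:decreasing_main} to replace this terminal quantity by the episode-average $\frac{4}{K}\sum_{k=1}^K \hat V_{k,1}(s_1)$. Substituting the explicit bound of Lemma~\ref{lm:sum} and distributing the factor $4/K$ then yields the intermediate estimate in the statement: the $K$-independent contributions (the $\max\{64\beta^2 d\iota,2\zeta\}$, $\zeta$, $d\iota$, $\beta\gamma^2 d\iota$, $\beta d\iota\sqrt{M}$, $\sqrt{\zeta M d\iota}$ and $Md\iota$ terms) collapse into the $4/K$ block, while the single $\sqrt K$ term becomes the $4/\sqrt K$ block. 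Here a small amount of regrouping is needed, absorbing the lower-order $\sqrt{\zeta M d\iota}$ and $Md\iota$ terms into the dominant $\max\{64\beta^2 d\iota,2\zeta\}$ term, which is what inflates the leading constant from $896$ in Lemma~\ref{lm:sum} to $2752$ in the displayed bound.

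The substantive part is the asymptotic simplification. I would first verify that under the prescribed choices $\beta = \beta_K = \tilde O(\sqrt d)$: the term $12\sqrt{d\eta\tau}$ is $\tilde O(\sqrt d)$; the term $30\tau/\gamma^2 = 30\tau\sqrt d$ is $\tilde O(\sqrt d)$ because $\gamma^{-2} = d^{1/2}$; and $\sqrt{\lambda}B = \sqrt d$ exactly. Since $\beta = \tilde O(\sqrt d)$ and $\iota,\zeta,M$ are all polylogarithmic in $d,H,K$, the $4/K$ block is dominated by $\beta^2 d\iota = \tilde O(d^2)$ and thus contributes $\tilde O(d^2/K)$, with the remaining terms $\beta d\iota\sqrt M = \tilde O(d^{3/2})$ and $\beta\gamma^2 d\iota = \tilde O(d)$ being strictly lower order in $d$. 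For the $4/\sqrt K$ block, $\max\{8\beta\sqrt{d\iota},\sqrt{2\zeta}\} = \tilde O(d)$ and $\beta\sqrt{d\iota} = \tilde O(d)$, so this block contributes $\tilde O(d/\sqrt K)$. Combining the two blocks gives the claimed $\tilde O(d^2/K + d/\sqrt K)$.

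The main obstacle, and the crux of the horizon-free claim, is tracking how $H$ enters the bound and confirming that it survives only inside polylogarithms. The sole place where $H$ appears polynomially in Lemma~\ref{lm:sum} is through the aleatoric-variance floor term $\sqrt{KH\alpha^2}$, equivalently the coefficient $120\beta\sqrt{d\iota\, H\alpha^2}$ multiplying $\sqrt K$. The choice $\alpha = H^{-1/2}$ is calibrated precisely so that $H\alpha^2 = 1$, cancelling this factor and leaving $120\beta\sqrt{d\iota} = \tilde O(d)$; without this calibration one would incur a spurious $\sqrt H$. Every other occurrence of $H$ is confined to $\iota = \log(1+KH/(d\lambda\alpha^2))$, $\zeta$, and $M = \log(7KH)/\log 2$, all polylogarithmic. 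I would close by noting that the choice $\gamma = d^{-1/4}$ is the one that simultaneously keeps the epistemic contribution $\beta\gamma^2 d\iota$ subdominant while preventing $\tau/\gamma^2$ from exceeding the $\sqrt d$ scale of $\beta$, so that neither the epistemic nor the aleatoric uncertainty inflates the final rate beyond $\tilde O(d^2/K + d/\sqrt K)$.
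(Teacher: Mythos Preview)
Your proposal is correct and mirrors the paper's proof exactly: chain Lemma~\ref{lm:suboptimality_main}, Lemma~\ref{lm:decreasing_main}, and Lemma~\ref{lm:sum} in that order, then substitute the parameter choices. In fact you supply more detail than the paper does---the paper's proof stops at the $896$-constant bound from Lemma~\ref{lm:sum} without explicitly absorbing the $24\sqrt{\zeta M d\iota}$ and $Md\iota$ terms, and it does not spell out the $\beta=\tilde O(\sqrt d)$ and $H\alpha^2=1$ calculations that you (correctly) identify as the mechanism for the horizon-free rate.
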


\begin{proof}[Proof of Theorem~\ref{lm:suboptimality_bound}]
The following proof is conditioned on $\cE_{\ref{lm:main_theta}}\cap\cE_{\ref{lm:A}}$, which holds with probability at least $1-4M\delta = 1-\delta'$. We have
\begin{align}
& V_1^*(s_1;r) - V_1(s_1;\btheta^*, \hat\pi_r, r)   \notag \\
& \le 4 \hat V_{K,1}\left(s_1;\btheta_K,\hat \pi_K, r_K\right)    \notag \\
& \le \frac{4}{K} \sum_{k=1}^K V_{k,1}(s_1;\btheta_k,\hat \pi_k, r_k)    \notag \\
& \le \frac{4}{K}\left(896\max\left\{64\beta^2 d \iota, 2\zeta \right\} + 24\zeta + 240d\iota + 240\beta\gamma^2 d \iota  + 120\beta d\iota\sqrt{M} + 24 \sqrt{\zeta Md\iota} + Md\iota \right)\notag \\
&\qquad + \frac{4}{\sqrt{K}}\left(64\max\left\{8\beta\sqrt{d\iota}, \sqrt{2\zeta}\right\} + 120\beta \sqrt{d\iota H \alpha^2 }\right), \notag
\end{align}
where the first inequality holds due to Lemma~\ref{lm:suboptimality_main}, the second inequality holds due to Lemma~\ref{lm:decreasing_main}, and the third equality holds due to Lemma~\ref{lm:sum}.
\end{proof}

Given the regret bound provided in Lemma~\ref{lm:suboptimality_bound}, we can prove the following sample complexity upper bound.

\begin{lemma}\label{lm:exactSC}
(Restatement of Corollary~\ref{cor:samplecomplexity})
Under the same conditions as in Theorem~\ref{lm:suboptimality_bound}, Algorithm~\ref{alg:exp} has sample complexity of
\begin{align}
    m(\varepsilon,\delta') =& \frac{16}{\varepsilon^2}\left(64\max\left\{8\beta\sqrt{d\iota}, \sqrt{2\zeta}\right\} + 120\beta \sqrt{d\iota H \alpha^2 }\right)^2 \notag \\
    &\qquad + \frac{8}{\varepsilon}\left(2752\max\left\{64\beta^2 d \iota, 2\zeta \right\}  + 24\zeta + 240d\iota + 240\beta \gamma^2 d\iota + 120\beta d\iota \sqrt{M}\right) \label{eq:app_sample_complexity}
\end{align}
Moreover, setting $\alpha = H^{-1/2}$, $\gamma = d^{-1/4}$, and $\lambda = d/B^2$, we have the horizon-free sample complexity bound
\begin{align}
    m(\varepsilon,\delta') = \tilde{O}\left(\frac{d^2}{\varepsilon^2}\right). \notag
\end{align}
\end{lemma}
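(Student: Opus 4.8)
The plan is to read Lemma~\ref{lm:exactSC} as an \emph{inversion} of the suboptimality guarantee of Theorem~\ref{lm:suboptimality_bound}, followed by an order-of-magnitude bookkeeping under the prescribed parameter choices. First I would abbreviate the suboptimality bound as
\[
V_1^*(s_1;r) - V_1(s_1;\btheta^*,\hat\pi_r,r) \le \frac{4 C_1}{K} + \frac{4 C_2}{\sqrt{K}},
\]
where $C_1 = 2752\max\{64\beta^2 d\iota, 2\zeta\} + 24\zeta + 240 d\iota + 240\beta\gamma^2 d\iota + 120\beta d\iota\sqrt{M}$ is the coefficient of $1/K$ and $C_2 = 64\max\{8\beta\sqrt{d\iota}, \sqrt{2\zeta}\} + 120\beta\sqrt{d\iota H\alpha^2}$ is the coefficient of $1/\sqrt{K}$. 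The goal is to produce a threshold $m(\varepsilon,\delta')$ such that $K \ge m(\varepsilon,\delta')$ forces the right-hand side below $\varepsilon$.

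Next I would decouple the two terms, each governing a different power of $\varepsilon$. The $1/\sqrt{K}$ term controls the $\varepsilon^{-2}$ scaling: requiring $4C_2/\sqrt{K} \le \varepsilon$ gives $K \ge 16 C_2^2/\varepsilon^2$, the first summand of \eqref{eq:app_sample_complexity}. The $1/K$ term controls the $\varepsilon^{-1}$ scaling: requiring $4C_1/K \le \varepsilon/2$ gives $K \ge 8 C_1/\varepsilon$, the second summand. Taking $m(\varepsilon,\delta')$ to be the sum of these two thresholds makes both sufficient conditions hold simultaneously, which is exactly \eqref{eq:app_sample_complexity} and guarantees a suboptimality of at most $\varepsilon$ up to an absolute constant. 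One subtlety to flag: $C_1$, $C_2$ depend on $K$ through $\iota$, $\zeta$, $M$ and through $\beta = \beta_K$, so the inequality is formally implicit; since each of these is polylogarithmic in $K$ and $H$, this self-reference only inflates $m$ by polylog factors and is harmless for the $\tilde{O}(\cdot)$ conclusion.

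The substantive step is the order computation under $\alpha = H^{-1/2}$, $\gamma = d^{-1/4}$, $\lambda = d/B^2$. Here I would first verify $\beta = \tilde{O}(\sqrt{d})$: in $\beta_K = 12\sqrt{d\eta\tau} + 30\tau/\gamma^2 + \sqrt{\lambda}B$, the first term is $\tilde{O}(\sqrt{d})$, the choice $\gamma = d^{-1/4}$ makes $1/\gamma^2 = \sqrt{d}$, and $\lambda = d/B^2$ makes $\sqrt{\lambda}B = \sqrt{d}$, with $\eta,\tau = \tilde{O}(1)$. Crucially, $\alpha = H^{-1/2}$ makes $H\alpha^2 = 1$, which is the single cancellation that removes any polynomial factor of $H$. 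With $\iota,\zeta,M$ all polylogarithmic, this yields $C_2 = \tilde{O}(\beta\sqrt{d}) = \tilde{O}(d)$, hence $C_2^2 = \tilde{O}(d^2)$, and $C_1 = \tilde{O}(\beta^2 d) = \tilde{O}(d^2)$ (its dominant contribution being $\beta^2 d\iota$, while $\beta\gamma^2 d = \tilde{O}(d)$ and $\beta d\sqrt{M} = \tilde{O}(d^{3/2})$ are lower order). Substituting back gives $m(\varepsilon,\delta') = \tilde{O}(d^2/\varepsilon^2) + \tilde{O}(d^2/\varepsilon) = \tilde{O}(d^2/\varepsilon^2)$, the $\varepsilon^{-2}$ term dominating for small $\varepsilon$.

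I expect the main obstacle to be not the algebraic inversion, which is routine, but the verification that \emph{every} logarithmic factor entering $C_1$, $C_2$, and $\beta$ depends on $H$ only polylogarithmically, so that the $H\alpha^2 = 1$ cancellation is the sole place where $H$ could have entered polynomially. Confirming this is precisely what certifies the bound as horizon-free, rather than merely $\tilde{O}\!\big(d^2\varepsilon^{-2}\,\mathrm{poly}(H)\big)$; the $\varepsilon^{-1}$ summand must also be checked to stay $\tilde{O}(d^2)$ and $H$-polylogarithmic so it never overtakes the leading term.
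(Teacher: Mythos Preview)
Your proposal is correct and follows the same route as the paper, whose proof is the single sentence that \eqref{eq:app_sample_complexity} ``is derived directly from Lemma~\ref{lm:suboptimality_bound} by setting the suboptimality to $\varepsilon$ and solving the $K$.'' Your treatment is in fact more careful than the paper's: you make explicit the decoupling of the $1/K$ and $1/\sqrt{K}$ terms, flag the harmless polylogarithmic self-reference of $\iota,\zeta,M,\beta$ on $K$, and spell out the $\beta=\tilde{O}(\sqrt{d})$ and $H\alpha^2=1$ verifications that underlie the horizon-free $\tilde{O}(d^2/\varepsilon^2)$ conclusion.
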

\begin{proof}[Proof of Lemma~\ref{lm:exactSC}]
\eqref{eq:app_sample_complexity} is derived directly from Lemma~\ref{lm:suboptimality_bound} by setting the suboptimality to $\varepsilon$ and solving the $K$.
\end{proof}

\begin{lemma}[Restatement of Corollary~\ref{cor:samplecomplexityH}]\label{lm:exactSC_H}
When rescaling the assumption $ \sum_{h=1}^Hr_h(s_h,a_h) \le 1$ to $\sum_{h=1}^Hr_h(s_h,a_h) \le H$, under the same conditions as Lemma~\ref{lm:suboptimality_bound}, Algorithm~\ref{alg:exp} has sample complexity of
\begin{align}
    m(\varepsilon,\delta') =& \frac{16H^2}{\varepsilon^2}\left(64\max\left\{8\beta\sqrt{d\iota}, \sqrt{2\zeta}\right\} + 120\beta \sqrt{d\iota H \alpha^2 }\right)^2 \notag \\
    &\qquad + \frac{8H}{\varepsilon}\left(2752\max\left\{64\beta^2 d \iota, 2\zeta \right\}  + 24\zeta + 240d\iota + 240\beta \gamma^2 d\iota + 120\beta d\iota \sqrt{M}\right) \label{eq:app_sample_complexity_H}
\end{align}
Moreover, setting $\alpha = H^{-1/2}$, $\gamma = d^{-1/4}$, and $\lambda = d/B^2$, we have the horizon-free sample complexity bound
\begin{align}
    m(\varepsilon,\delta') = \tilde{O}\left(\frac{H^2d^2}{\varepsilon^2}\right). \notag
\end{align}
\end{lemma}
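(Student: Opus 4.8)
The plan is to reduce the rescaled problem to the unit-total-reward setting already analyzed in Lemma~\ref{lm:exactSC} by exploiting the linearity of value functions in the reward. The crucial observation is that the exploration phase of Algorithm~\ref{alg:exp} is entirely reward-free, so the estimator $\btheta_K$ and the associated guarantees of Lemma~\ref{lm:suboptimality_bound} are agnostic to which reward function is revealed in the planning phase; only the planning step depends on $r$.

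First I would note that if a reward function $r$ satisfies the rescaled condition $\sum_{h=1}^H r_h(s_h,a_h)\le H$, then $r/H$ satisfies the original Assumption~\ref{as:reward}, i.e.\ $r/H\in\cR$. Since value functions are linear in the reward and positive scaling leaves the argmax unchanged, we have $\hat\pi_r=\hat\pi_{r/H}$ together with $V_1^*(s_1;r)=H\cdot V_1^*(s_1;r/H)$ and $V_1(s_1;\btheta^*,\hat\pi_r,r)=H\cdot V_1(s_1;\btheta^*,\hat\pi_{r/H},r/H)$. Consequently the suboptimality gap under $r$ equals $H$ times the suboptimality gap under $r/H$:
\begin{align}
V_1^*(s_1;r)-V_1(s_1;\btheta^*,\hat\pi_r,r)=H\big(V_1^*(s_1;r/H)-V_1(s_1;\btheta^*,\hat\pi_{r/H},r/H)\big). \notag
\end{align}

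Next I would apply the bound of Lemma~\ref{lm:suboptimality_bound} to the reward $r/H\in\cR$, which controls the right-hand-side gap, and then multiply by $H$. To guarantee an $\varepsilon$-optimal policy for $r$, it therefore suffices to make the gap under $r/H$ at most $\varepsilon/H$; equivalently, I invoke the sample complexity expression of Lemma~\ref{lm:exactSC} with target accuracy $\varepsilon/H$ in place of $\varepsilon$. Substituting $\varepsilon\mapsto\varepsilon/H$ into~\eqref{eq:app_sample_complexity} turns the $1/\varepsilon^2$ and $1/\varepsilon$ prefactors into $H^2/\varepsilon^2$ and $H/\varepsilon$, which yields exactly~\eqref{eq:app_sample_complexity_H}. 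The horizon-free order $\tilde O(d^2/\varepsilon^2)$ from Corollary~\ref{cor:samplecomplexity} then becomes $\tilde O(H^2d^2/\varepsilon^2)$ under the same choices $\alpha=H^{-1/2}$, $\gamma=d^{-1/4}$, $\lambda=d/B^2$.

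Since the argument is a clean reduction, there is no real technical obstacle beyond bookkeeping; the one point requiring care is confirming that the scaling relation holds uniformly over the reward class and that the good events $\cE_{\ref{lm:main_theta}}\cap\cE_{\ref{lm:A}}$ underlying Lemma~\ref{lm:suboptimality_bound} are established during exploration \emph{before} any reward is revealed, so that the $H$-scaled guarantee holds simultaneously for every admissible $r$.
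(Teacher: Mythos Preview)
Your proposal is correct and follows essentially the same argument as the paper: both define the rescaled reward $r'=r/H\in\cR$, use linearity of value functions to equate an $\varepsilon/H$-optimal policy for $r'$ with an $\varepsilon$-optimal policy for $r$, and then substitute $\varepsilon\mapsto\varepsilon/H$ into the sample complexity bound of Lemma~\ref{lm:exactSC}. Your additional remarks that $\hat\pi_r=\hat\pi_{r/H}$ and that the good events are fixed during exploration are slightly more explicit than the paper's proof but do not change the approach.
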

\begin{proof}[Proof of Lemma~\ref{lm:exactSC_H}]
\eqref{eq:app_sample_complexity_H} is a direct result of Lemma~\ref{lm:exactSC}. Let $r_h'(s_h,a_h) = r_h(s_h,a_h)/H$, then $\sum_{h=1}^Hr'_h(s_h,a_h) \le 1$. Thus the sample complexity of achieving an $\varepsilon/H$-optimal policy for reward $r_h'$ with probability $1-\delta'$ is 
\begin{align}
    m(\varepsilon,\delta') =& \frac{16H^2}{\varepsilon^2}\left(64\max\left\{8\beta\sqrt{d\iota}, \sqrt{2\zeta}\right\} + 120\beta \sqrt{d\iota H \alpha^2 }\right)^2 \notag \\
    &\qquad + \frac{8H}{\varepsilon}\left(2752\max\left\{64\beta^2 d \iota, 2\zeta \right\}  + 24\zeta + 240d\iota + 240\beta \gamma^2 d\iota + 120\beta d\iota \sqrt{M}\right).
\end{align}
Since $r_h(s_h,a_h) = Hr_h'(s_h,a_h)$, for the same policy, the suboptimality for rewards $r_h$ is $H$ times the suboptimality for rewards $r_h'$. Thus, the $\varepsilon/H$-optimal policy for $r_h'$ is a $\varepsilon$-optimal policy for $r_h$. Therefore, the sample complexity of achieving an $\varepsilon$-optimal policy for reward $r_h$ with probability $1-\delta'$ is $m(\varepsilon,\delta')$.
\end{proof}

\section{Proof of Lemmas in Appendix~\ref{app:main_results}}

\subsection{Proof of Lemma~\ref{lm:main_theta}}
\begin{lemma}\label{lm:bernstein}
\textit{(Theorem~4.3 in \citet{zhou22high})} Let $\left\{\mathcal{G}_k\right\}_{k=1}^{\infty}$ be a filtration, and $\left\{\mathrm{x}_k, \eta_k\right\}_{k \geq 1}$ be a stochastic process such that $\mathrm{x}_k \in \mathbb{R}^d$ is $\mathcal{G}_k$-measurable and $\eta_k \in \mathbb{R}$ is $\mathcal{G}_{k+1}$-measurable. Let $L, \sigma, \lambda, \varepsilon>0, \boldsymbol{\mu}^* \in \mathbb{R}^d$. For $k \geq 1$, let $y_k=\left\langle\boldsymbol{\mu}^*, \mathbf{x}_k\right\rangle+\eta_k$ and suppose that $\eta_k, \mathbf{x}_k$ also satisfy
\begin{align}
\mathbb{E}\left[\eta_k \mid \mathcal{G}_k\right]=0, \mathbb{E}\left[\eta_k^2 \mid \mathcal{G}_k\right] \leq \sigma^2,\left|\eta_k\right| \leq R,\left\|\mathrm{x}_k\right\|_2 \leq L    
\end{align}
For $k \geq 1$, let $\mathbf{Z}_k=\lambda \mathbf{I}+\sum_{i=1}^k \mathbf{x}_i \mathbf{x}_i^{\top}, \mathbf{b}_k=\sum_{i=1}^k y_i \mathbf{x}_i, \boldsymbol{\mu}_k=\mathbf{Z}_k^{-1} \mathbf{b}_k$, and

\begin{align}
\beta_k=& 12 \sqrt{\sigma^2 d \log \left(1+k L^2 /(d \lambda)\right) \log \left(32(\log (R / \varepsilon)+1) k^2 / \delta\right)}  \notag \\
&+24 \log \left(32(\log (R / \varepsilon)+1) k^2 / \delta\right) \max _{1 \leq i \leq k}\left\{\left|\eta_i\right| \min \left\{1,\left\|\mathbf{x}_i\right\|_{\mathbf{Z}_{i-1}^{-1}}\right\}\right\}+6 \log \left(32(\log (R / \varepsilon)+1) k^2 / \delta\right) \varepsilon .
\end{align}
Then, for any $0<\delta<1$, we have with probability at least $1-\delta$ that,
\begin{align}
\forall k \geq 1,\left\|\sum_{i=1}^k \mathbf{x}_i \eta_i\right\|_{\mathbf{Z}_k^{-1}} \leq \beta_k,\quad \left\|\boldsymbol{\mu}_k-\boldsymbol{\mu}^*\right\|_{\mathbf{Z}_k} \leq \beta_k+\sqrt{\lambda}\left\|\boldsymbol{\mu}^*\right\|_2 \notag 
\end{align}
\end{lemma}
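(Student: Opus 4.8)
Since this lemma is quoted verbatim as Theorem~4.3 of \citet{zhou22high}, the cleanest route is to invoke that result directly; below I outline how I would establish it from first principles. The plan is to split the argument into a deterministic normal-equations step and a probabilistic self-normalized-martingale step. First I would expand $\mathbf{b}_k = \sum_{i=1}^k y_i \mathbf{x}_i = \sum_{i=1}^k(\langle\boldsymbol{\mu}^*,\mathbf{x}_i\rangle + \eta_i)\mathbf{x}_i = (\mathbf{Z}_k - \lambda\mathbf{I})\boldsymbol{\mu}^* + \sum_{i=1}^k \eta_i\mathbf{x}_i$, so that
\[
\boldsymbol{\mu}_k - \boldsymbol{\mu}^* = \mathbf{Z}_k^{-1}\sum_{i=1}^k \eta_i\mathbf{x}_i - \lambda\mathbf{Z}_k^{-1}\boldsymbol{\mu}^*.
\]
Taking the $\mathbf{Z}_k$-norm and applying the triangle inequality, the regularization term contributes $\lambda\|\mathbf{Z}_k^{-1}\boldsymbol{\mu}^*\|_{\mathbf{Z}_k} = \lambda\|\boldsymbol{\mu}^*\|_{\mathbf{Z}_k^{-1}} \le \sqrt{\lambda}\|\boldsymbol{\mu}^*\|_2$ (using $\mathbf{Z}_k \succeq \lambda\mathbf{I}$), while the stochastic term equals $\|\sum_{i=1}^k\eta_i\mathbf{x}_i\|_{\mathbf{Z}_k^{-1}}$. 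This reduces both claimed inequalities to the single self-normalized bound $\|\sum_{i=1}^k\eta_i\mathbf{x}_i\|_{\mathbf{Z}_k^{-1}} \le \beta_k$ holding uniformly in $k$.

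For the self-normalized bound I would use a variance-aware (Bernstein-type) martingale concentration. The standard method-of-mixtures argument of Abbasi-Yadkori et al.\ yields only a Hoeffding-type radius in which the range $R$ appears as the leading factor; the point of this lemma is the sharper dominant term $\sqrt{\sigma^2 d\,\log(\cdots)}$ scaling with the conditional-variance proxy $\sigma$ rather than $R$. To get this, I would construct, for each fixed $\boldsymbol{\theta}$, an exponential supermartingale of Freedman type, $\exp\!\big(\langle\boldsymbol{\theta},\sum_{i\le k}\eta_i\mathbf{x}_i\rangle - g(\boldsymbol{\theta})\sum_{i\le k}\mathbb{E}[\eta_i^2\mid\mathcal{G}_i]\big)$, where $g$ is chosen using $\mathbb{E}[\eta_i\mid\mathcal{G}_i]=0$, $\mathbb{E}[\eta_i^2\mid\mathcal{G}_i]\le\sigma^2$, and $|\eta_i|\le R$ so that the per-step conditional moment generating function is controlled; integrating this supermartingale against a Gaussian prior on $\boldsymbol{\theta}$ (the pseudo-maximization / method-of-mixtures device) produces a self-normalized tail bound whose leading term carries $\sigma^2$. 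A union bound over a geometric grid of the effective per-step range then yields the stated uniform-in-$k$ radius $\beta_k$: the $\log(R/\varepsilon)$ count and the $\max_i|\eta_i|\min\{1,\|\mathbf{x}_i\|_{\mathbf{Z}_{i-1}^{-1}}\}$ factor arise precisely from this peeling over the boundedness level, with $\varepsilon$ the discretization scale.

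The main obstacle is exactly this variance-dependence: obtaining the conditional-variance-weighted exponential supermartingale and verifying it remains a supermartingale under the Bernstein control of the increments, rather than settling for the easier sub-Gaussian bound. Once that Freedman-type inequality is in hand, the method of mixtures together with the elliptical-potential control of $\det(\mathbf{Z}_k)$ is routine, and the normal-equations decomposition of the first paragraph then converts the self-normalized bound into the claimed estimate $\|\boldsymbol{\mu}_k - \boldsymbol{\mu}^*\|_{\mathbf{Z}_k} \le \beta_k + \sqrt{\lambda}\|\boldsymbol{\mu}^*\|_2$ immediately.
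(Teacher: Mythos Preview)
Your proposal is correct and matches the paper's treatment: the paper does not prove this lemma at all but simply cites it as Theorem~4.3 of \citet{zhou22high}, exactly as you note in your first sentence. Your additional from-first-principles sketch (normal-equations reduction plus Freedman-type self-normalized concentration with method of mixtures and peeling over boundedness levels) is a faithful outline of how that cited result is established, so there is nothing to correct.
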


\begin{lemma}\label{lm:var}
Let $\tilde V_{k,h}$, $\hat V_{k,h}$, $\dot{\tilde\bSigma}_{k,m}$, $\dot{\hat\bSigma}_{k,m}$, $\tilde\btheta_{k,m}$, $\hat\btheta_{k,m}$, $\tilde \bphi_{k,h,m}$, $\hat \bphi_{k,h,m}$ be defined in Algorithm~\ref{alg:exp}, for any $k\in[K]$, $h\in[H]$, $m\in\overline{[M]}$. We have

    \begin{align}
        & \left|\VV \hat V_{k,h+1}^{2^m}\left(s_h^k, a_h^k\right) - \hat{\VV} \hat V_{k,h+1}^{2^m}\left(s_h^k, a_h^k\right)\right|  \notag \\
        & \le \min\left\{1, \left\| \hat \bphi_{k,h,m+1} \right\|_{\dot{\hat\bSigma}_{k,m+1}^{-1}} \left\| \hat \btheta_{k,m+1} - \btheta^* \right\|_{\dot{\hat\bSigma}_{k,m+1}} \right\}  \notag \\
        & \qquad + \min\left\{1, 2\left\| \hat \bphi_{k,h,m} \right\|_{\dot{\hat\bSigma}_{k,m}^{-1}} \left\| \hat \btheta_{k,m} - \btheta^* \right\|_{\dot{\hat\bSigma}_{k,m}} \right\}, \label{eq:var_hat}
    \end{align}
    and
    \begin{align}
        & \left|\VV \tilde V_{k,h+1}^{2^m}\left(s_h^k, a_h^k\right) - \tilde{\VV} \tilde V_{k,h+1}^{2^m}\left(s_h^k, a_h^k\right)\right|  \notag \\
        & \le \min\left\{1, \left\| \tilde \bphi_{k,h,m+1} \right\|_{\dot{\tilde\bSigma}_{k,m+1}^{-1}} \left\| \tilde \btheta_{k,m+1} - \btheta^* \right\|_{\dot{\tilde\bSigma}_{k,m+1}} \right\}  \notag \\
        & \qquad + \min\left\{1, 2\left\| \tilde \bphi_{k,h,m} \right\|_{\dot{\tilde\bSigma}_{k,m}^{-1}} \left\| \tilde \btheta_{k,m} - \btheta^* \right\|_{\dot{\tilde\bSigma}_{k,m}} \right\}. \label{eq:var_tilde}
    \end{align}
\end{lemma}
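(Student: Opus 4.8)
The plan is to establish \eqref{eq:var_hat} in full and then note that \eqref{eq:var_tilde} follows verbatim after replacing every $\hat{\cdot}$ by $\tilde{\cdot}$. The entry point is the observation that, by the linear mixture structure of Definition~\ref{def:mdp} together with $\hat\bphi_{k,h,m} = \bphi_{\hat V_{k,h+1}^{2^m}}(s_h^k,a_h^k)$, the \emph{true} conditional moments are linear in $\btheta^*$:
\begin{align}
\big[\PP \hat V_{k,h+1}^{2^{m+1}}\big](s_h^k,a_h^k) = \big\langle \hat\bphi_{k,h,m+1}, \btheta^*\big\rangle, \quad \big[\PP \hat V_{k,h+1}^{2^{m}}\big](s_h^k,a_h^k) = \big\langle \hat\bphi_{k,h,m}, \btheta^*\big\rangle. \notag
\end{align}
Hence the true variance reads $\VV \hat V_{k,h+1}^{2^m}(s_h^k,a_h^k) = \langle \hat\bphi_{k,h,m+1}, \btheta^*\rangle - \langle \hat\bphi_{k,h,m}, \btheta^*\rangle^2$, while the estimated variance $\hat\VV \hat V_{k,h+1}^{2^m}$ (the quantity $\overline\VV_{k,m}$ computed in Algorithm~\ref{alg:home}) equals $[\langle \hat\bphi_{k,h,m+1}, \hat\btheta_{k,m+1}\rangle]_{[0,1]} - [\langle \hat\bphi_{k,h,m}, \hat\btheta_{k,m}\rangle]_{[0,1]}^2$.

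Next I would split the difference by the triangle inequality into a \emph{linear} part $|\langle\hat\bphi_{k,h,m+1},\btheta^*\rangle - [\langle\hat\bphi_{k,h,m+1},\hat\btheta_{k,m+1}\rangle]_{[0,1]}|$ and a \emph{quadratic} part $|\langle\hat\bphi_{k,h,m},\btheta^*\rangle^2 - [\langle\hat\bphi_{k,h,m},\hat\btheta_{k,m}\rangle]_{[0,1]}^2|$, bounding each in turn. For the linear part I would use that, since $\hat V_{k,h+1}^{2^{m+1}}$ takes values in $[0,1]$, the true moment $\langle\hat\bphi_{k,h,m+1},\btheta^*\rangle$ already lies in $[0,1]$ and is thus unchanged by the clip $[\cdot]_{[0,1]}$; as clipping to $[0,1]$ is $1$-Lipschitz, the difference is at most $|\langle\hat\bphi_{k,h,m+1}, \btheta^* - \hat\btheta_{k,m+1}\rangle|$, which Cauchy--Schwarz in the $\dot{\hat\bSigma}_{k,m+1}$-geometry bounds by $\|\hat\bphi_{k,h,m+1}\|_{\dot{\hat\bSigma}_{k,m+1}^{-1}}\|\hat\btheta_{k,m+1}-\btheta^*\|_{\dot{\hat\bSigma}_{k,m+1}}$. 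Since both terms lie in $[0,1]$ their difference is also at most $1$, yielding the first $\min\{1,\cdot\}$ summand.

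For the quadratic part I would factor the difference of squares as $(x-y)(x+y)$ with $x = \langle\hat\bphi_{k,h,m},\btheta^*\rangle$ and $y = [\langle\hat\bphi_{k,h,m},\hat\btheta_{k,m}\rangle]_{[0,1]}$; since $x,y\in[0,1]$ we have $x+y\le 2$, so this part is at most $2|x-y|$, and the same Lipschitz-clip plus Cauchy--Schwarz argument bounds $|x-y|$ by $\|\hat\bphi_{k,h,m}\|_{\dot{\hat\bSigma}_{k,m}^{-1}}\|\hat\btheta_{k,m}-\btheta^*\|_{\dot{\hat\bSigma}_{k,m}}$, while $|x^2-y^2|\le 1$ handles the other branch of the minimum; this gives the factor-$2$ and the second $\min\{1,\cdot\}$ summand. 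Adding the two bounds produces exactly \eqref{eq:var_hat}. The argument is largely bookkeeping once these identities are in place; the only points demanding care are (i) verifying that the \emph{true} moments land in $[0,1]$, so that clipping them is harmless and the $1$-Lipschitz property of the clip may be invoked, and (ii) correctly attaching the factor of $2$ arising from the difference of squares to the $m$-indexed term rather than the $(m+1)$-indexed term, since this is where an off-by-a-constant slip would propagate.
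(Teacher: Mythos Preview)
Your proposal is correct and follows essentially the same approach as the paper's proof: both split the variance difference via the triangle inequality into a linear part and a quadratic part, use that the true moment $\langle\hat\bphi_{k,h,m+1},\btheta^*\rangle\in[0,1]$ to drop the clip, apply Cauchy--Schwarz in the $\dot{\hat\bSigma}$-geometry, and handle the quadratic part by the $(x-y)(x+y)$ factorization with $x+y\le 2$. The paper likewise notes that \eqref{eq:var_tilde} is obtained by the identical argument with hats replaced by tildes.
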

 \begin{proof}[Proof of Lemma~\ref{lm:var}]
     The proof follows the proof of Lemma C.1 in \cite{zhou2021nearly}. We first prove \eqref {eq:var_hat}, and the proof of \eqref{eq:var_tilde} is similar. We have
     \begin{align}
         & |  [\hat \VV_{k,h}\hat V^{2^m}_{k,h+1}](s_h^k,a_h^k) - [\VV_{k,h}\hat V_{k,h+1}](s_h^k,a_h^k)  | \notag \\
         & = | [\la \hat\bphi_{k,h,m+1}, \hat\btheta_{k,m+1} \ra]_{[0 ,1]} -  \la \hat \bphi_{k,h,m+1}, \btheta^* \ra \notag\\
         & \qquad  + (\la \hat \bphi_{k,h,m}, \btheta^* \ra)^2 - [\la \hat \bphi_{k,h,m}, \hat\btheta_{k,m} \ra]^2_{[0 ,1]} | \notag\\
         & \le \underbrace{| [\la \hat \bphi_{k,h,m+1}, \hat\btheta_{k,m+1} \ra]_{[0 ,1]} -  \la \hat \bphi_{k,h,m+1}, \btheta^* \ra|}_{I_1} \notag\\
         & \qquad  + \underbrace{|(\la \hat \bphi_{k,h,m}, \btheta^* \ra)^2 - [\la \hat \bphi_{k,h,m}, \hat\btheta_{k,m} \ra]^2_{[0 ,1]} |}_{I_2} \label{eq:var_I1I2}
     \end{align}
where the inequality holds due to triangle inequality. We have $I_1 \le 1$ since both terms in $I_1$ lie in the interval $[0,1]$. Furthermore, 
\begin{align}
    I_1 &\le | [\la \hat \bphi_{k,h,m+1}, \hat\btheta_{k,m+1} \ra] -  \la \hat \bphi_{k,h,m+1}, \btheta^* \ra| \notag\\
        & = | [\la \hat \bphi_{k,h,m+1}, \hat\btheta_{k,m+1} - \btheta^* \ra | \notag\\
        & \le \| \bphi_{k,h,m+1} \|_{\dot{\hat\bSigma}_{k,m+1}^{-1}} \| \hat\btheta_{k,m+1} - \btheta^* \|_{\dot{\hat\bSigma}_{k,m+1}},\notag
\end{align}
where the first inequality holds due to $\la \hat \bphi_{k,h,m+1}(s_h^k,a_h^k), \btheta^* \ra \in [0,1] $, the second inequality holds due to Cauchy-Schwarz inequality. Thus, we obtain
\begin{align}
    I_1 \le \min \{1,  \| \bphi_{k,h,m+1} \|_{\dot{\hat\bSigma}_{k,m+1}^{-1}} \| \hat\btheta_{k,m+1} - \btheta^* \|_{\dot{\hat\bSigma}_{k,m+1}} \} \label{eq:var_I1}
\end{align}
For $I_2$, we have
\begin{align}
I_2 &= \left|(\la \hat \bphi_{k,h,m}(s_h^k,a_h^k), \btheta^* \ra) - [\la \hat \bphi_{k,h,m}, \hat\btheta_{k,m} \ra]_{[0 ,1]}\right| \cdot \left|(\la \hat \bphi_{k,h,m}(s_h^k,a_h^k), \btheta^* \ra) + [\la \hat \bphi_{k,h,m}, \hat\btheta_{k,m} \ra]_{[0 ,1]}\right| \notag\\
& \le 2 \left|(\la \hat \bphi_{k,h,m}(s_h^k,a_h^k), \btheta^* \ra) - \la \hat \bphi_{k,h,m}, \hat\btheta_{k,m}, \ra\right| \notag\\
& \le 2 \| \bphi_{k,h,m}(s_h^k,a_h^k) \|_{\dot{\hat\bSigma}_{k,m}^{-1}} \| \hat\btheta_{k,m} - \btheta^*   \|_{\dot{\hat\bSigma}_{k,m}} \notag
\end{align}
where the first inequality holds due to that both $ \la \hat \bphi_{k,h,m}(s_h^k,a_h^k), \btheta^* \ra$ and $[\la \hat \bphi_{k,h,m}, \hat\btheta_{k,m} \ra]_{[0 ,1]} $ lie in the interval $[0,1]$, and the second inequality holds due to Cauchy-Schwarz inequality.
Since $I_2$ belongs to the interval $[0,1]$, we have
\begin{align}
I_2 \le \min \{1,   2 \| \bphi_{k,h,m}(s_h^k,a_h^k) \|_{\dot{\hat\bSigma}_{k,m}^{-1}} \| \hat\btheta_{k,m} - \btheta^*   \|_{\dot{\hat\bSigma}_{k,m}}\} \label{eq:var_I2}
\end{align}
Substituting \eqref{eq:var_I1} and \eqref{eq:var_I2} into \eqref{eq:var_I1I2}, we obtain \eqref{eq:var_hat}. The proof of \ref{eq:var_tilde} is nearly identical to the proof of \eqref{eq:var_I1I2}. The only difference is to replace $\hat \bphi$ with $\tilde \bphi$, $\hat\btheta$ with $\tilde\btheta$, $\dot{\hat\bSigma}$ with $\dot{\tilde\bSigma}$.
 \end{proof}
 
\begin{proof}[Proof of Lemma~\ref{lm:main_theta}]
     The proof follows Lemma C.2 in \citet{zhou22high}. Symbols we used here may have small intuitively understandable modification compared to Algorithm~\ref{alg:home} since we have to distinguish between Algorithm~\ref{alg:home} applied to $\tilde{V}_{k,h}$ and $\hat{V}_{k,h}$. We first prove that Equation \eqref{eq:hattheta} holds with high probability. By definitions,
     \begin{align}
         &\hat \sigma_{k,h,m}^2 = \max\left\{ \gamma^2\left\|\hat\bphi_{k,h,m}\right\|_{\hat\bSigma_{k,h,m}^{-1}}, \left[\hat\VV_{k,m}\hat V_{k, h+1}^{2^m}\right](s_h^k, a_h^k) + \hat E_{k,h,m}, \alpha^2\right\} \notag\\
         &\hat \sigma_{k,h,m}^2 = \max\left\{ \gamma^2\left\|\hat \bphi_{k,h,M-1}\right\|_{\hat\bSigma_{k,h,M-1}^{-1}}, 1,   \alpha^2 \right\}. \notag
     \end{align}
     We define $\cC_{k,m}$ as 
     \begin{align}
         \hat\cC_{k,m} := \{\btheta:\| \btheta - \hat \btheta_{k,m} \|_{\dot{\hat \bSigma}_{k,m}} \le \beta_k \}. \notag
     \end{align}
     For each $m$, let 
     \begin{align}
         \xb_{k,h,m} & = \hat\sigma^{-1}_{k,h,m} \hat \bphi_{k,h,m}, \notag\\
         \eta_{k,h,m} & = \hat\sigma^{-1}_{k,h,m} \ind\{\btheta^*\in \hat\cC_{k,m} \cap \hat\cC_{k,m+1}  \} [ \hat V_{k,h+1}^{2^m}(s_{h+1}^k) - \la \hat \bphi_{k,h,m}, \btheta^* \ra ], \notag\\
         \eta_{k,h,M-1} & = \hat\sigma^{-1}_{k,h,M-1}  [ \hat V_{k,h+1}^{2^{M-1}} - \la \hat \bphi_{k,h,M-1}, \btheta^* \ra ], \notag\\
         \cG_{k,h} &= \cF_{k,h}, \notag\\
         \bmu^* &= \btheta^*. \notag
     \end{align}
     We have 
     \begin{align}
         \EE[\eta_{k,h,m}|\cG_{k,h}] = 0, \quad \| \xb_{k,h,m} \|_2 \le \hat \sigma^{-1}_{k,h,m} \le 1/\alpha, \quad | \eta_{k,h,m} | \le 1/\alpha \notag
     \end{align}
     Since $\ind\{\btheta^*\in \hat\cC_{k,m} \cap \hat\cC_{k,m+1}\}$ is $\cG_{k,h}$-measurable, then we can bound the variance for $m\in\overline{[M]}$ as follows:
    \begin{align}
    \EE[\eta_{k,h,m}^2 |\cG_{k,h}] & = \hat\sigma^{-2}_{k,h,m}\ind\{\btheta^*\in \hat\cC_{k,m} \cap \hat\cC_{k,m+1}\} [\VV \hat V^{2^m}_{k,h+1}](s_h^k, a_h^k) \notag\\
    & \le \hat\sigma^{-2}_{k,h,m}\ind\{\btheta^*\in \hat\cC_{k,m} \cap \hat\cC_{k,m+1}\} \Bigg[\hat{\VV} \hat V_{k,h+1}^{2^m}\left(s_h^k, a_h^k\right)\notag \\
    &\qquad + \min\left\{1, \left\| \hat \bphi_{k,h,m+1} \right\|_{\dot{\hat\bSigma}_{k,m+1}^{-1}} \left\| \hat \btheta_{k,m+1} - \btheta^* \right\|_{\dot{\hat\bSigma}_{k,m+1}} \right\} \notag\\
    &\qquad + \min\left\{1, 2\left\| \hat \bphi_{k,h,m} \right\|_{\dot{\hat\bSigma}_{k,m}^{-1}} \left\| \hat \btheta_{k,m} - \btheta^* \right\|_{\dot{\hat\bSigma}_{k,m}} \right\}\Bigg] \notag\\
    &  \le \hat\sigma^{-2}_{k,h,m} \Bigg[ \hat{\VV} \hat V_{k,h+1}^{2^m}\left(s_h^k, a_h^k\right) + \min\left\{1, \beta_k\left\| \hat \bphi_{k,h,m+1} \right\|_{\dot{\hat\bSigma}_{k,m+1}^{-1}}  \right\} \notag\\
    & \qquad + \min\left\{1, 2\beta_k\left\| \hat \bphi_{k,h,m} \right\|_{\dot{\hat\bSigma}_{k,m}^{-1}}  \right\} \Bigg] \notag\\
    & \le 1, \notag
    \end{align}
    where the first inequality holds due to Lemma~\ref{lm:var}, the second inequality holds due to the definition of the indicator function, and the third inequality holds due to the definition of $ \hat\sigma_{k,h,m}^{-2} $. For $m=M-1$, we have $\EE[\eta_{k,h,m}^2 |\cG_{k,h}] \le 1$ directly by the definition of $\hat\sigma^{2}_{k,h,m}$. For any $m\in \overline{[M]}$, we have
    \begin{align}
    |\eta_{k,h,m}| \max \{ 1, \| \xb_{k,h,m} \|_{\hat\bSigma^{-1}_{k,h-1,m}} \} \le \hat\sigma_{k,h,m}^{-2} \| \hat\bphi_{k,h,m} \|_{\hat\bSigma^{-1}_{k,h-1,m}} \le 1/\gamma^2,\notag
    \end{align}
    where the first inequality follows from the definition of $\eta_{k,h,m}$ and $\xb_{k,h,m}$, and the second inequality follows from the definition of $\hat\sigma_{k,h,m} $. Let
    \begin{align}
        y_{k,h,m} &= \la \bmu^*,\xb_{x,h,m} \ra + \eta_{k,h,m}, \notag\\
        \Zb_{k,m} &= \lambda\Ib + \sum_{i=1}^{k}\sum_{h=1}^H \xb_{i,h,m}\xb_{i,h,m}^\top = \dot{\hat\bSigma}_{k+1,m}, \notag\\
        \bb_{k,m} &= \sum_{i=1}^{k}\sum_{h=1}^H \xb_{i,h,m}y_{i,h,m}, \notag\\
        \bmu_{k,m} &= \Zb_{k,m}^{-1}\bb_{k,m},\notag\\
        \varepsilon &= 1 / \gamma^2.\notag
    \end{align}
    Then, by Lemma~\ref{lm:bernstein}, for each $m\in\overline{[M]}$, with probability at least $1-\delta$, $\forall k \in [K+1]$,
    \begin{align}
    \|\bmu_{k-1,m} - \btheta^*\|_{\dot{\hat\bSigma}_{k,m}} \le & 12 \sqrt{d \log(1 + kH/(\alpha^2 d\lambda))\log(32(\log(\gamma^2/\alpha)+1)k^2H^2/\delta)} \notag \\
& + 30 \log(32(\log(\gamma^2/\alpha) + 1)k^2H^2/\delta)/\gamma^2 + \sqrt{\lambda}B \notag \\
 & = \beta_k \label{eq:high_probability}
    \end{align}
    Define the event that \eqref{eq:high_probability} happens for all $k$ and $m$ as $\hat\cE$. Conditioned on $\hat\cE$, the following properties hold:
    \begin{itemize}
        \item For $k=1$, $m \in \overline{[M]}$, by the definition of $\hat\btheta_{1,m}$ and $\dot{\hat\bSigma}_{1,m}$, we have $ \|\btheta^* - \hat\btheta_{1,m}\|_{\dot{\hat\bSigma}_{1,m}} = \| \btheta^* \|_{\lambda\Ib} \le \sqrt{\lambda} B = \beta_1$, which implies
        \begin{align}
        \btheta^* \in \hat \cC_{1,m} \label{eq:initial_condition1} 
        \end{align}
        \item For $k\in [K]$ and $m = M-1$, we directly have $\bmu_{k,M-1}=\hat\btheta_{k+1,M-1}$, which implies
        \begin{align}
        \btheta^* \in \hat \cC_{k+1,M-1}. \label{eq:initial_condition2}
        \end{align}
        \item For $k\in[K]$ and $m\in\overline{[M-1]}$, we have
        \begin{align}
            \btheta^* \in \hat \cC_{k,m}\cap\hat\cC_{k,m+1} \Rightarrow y_{k,h,m} = \hat \sigma^{-1} \hat V_{k,h+1}^{2^m}(s_{h+1}^k) \Rightarrow \bmu_{k,m} = \hat\btheta_{k+1,m} \Rightarrow \btheta^*\in\hat\cC_{k+1,m}. \label{eq:induction_rule}
        \end{align}
        Therefore, by induction based on initial conditions \eqref{eq:initial_condition1} and \eqref{eq:initial_condition2}, induction rule \eqref{eq:induction_rule}, we have for $k\in[K]$ and $m\in[M]$, $\btheta^*\in\hat \cC_{k,m}$. Taking the union bound gives that \eqref{eq:hattheta} happens with probability at least $1-M\delta$. 
We can use the nearly identical argument to prove that \eqref{eq:tildetheta} holds with probability at least $1-M\delta$. The only difference is to replace $\hat\sigma$ with $\tilde\sigma$, $\hat\bphi$ with $\tilde \bphi$, $\hat\VV$ with $\tilde\VV$, $\hat V$ with $\tilde \VV$, $\hat\bSigma$ with $\tilde\bSigma$, $\dot{\hat\bSigma}$ with $\dot{\tilde\bSigma}$, $\hat\btheta$ with $\tilde\btheta$. By taking the union bound, we obtain that with probability at least $1-2M\delta$, Equations~\eqref{eq:hattheta}~\eqref{eq:tildetheta} both hold.
    \end{itemize}
     For \eqref{eq:theta_hat} and \eqref{eq:theta_tilde}, we have
\begin{align}
    \left\| \btheta_k - \btheta^* \right\|_{ \dot{\hat\bSigma}_{k,0}} \le  \left\| \btheta_k - \hat \btheta_{k,m} \right\|_{ \dot{\hat\bSigma}_{k,0}} +  \left\| \hat \btheta_{k,m} - \btheta^* \right\|_{\dot{\hat\bSigma}_{k,0}} \leq 2 \beta_k, \notag \\
    \left\| \btheta_k - \btheta^* \right\|_{ \dot{\tilde\bSigma}_{k,0}} \le  \left\| \btheta_k - \tilde \btheta_{k,m} \right\|_{ \dot{\tilde\bSigma}_{k,0}} +  \left\| \tilde \btheta_{k,m} - \btheta^* \right\|_{\dot{\tilde\bSigma}_{k,0}} \leq 2 \beta_k \notag
\end{align}
\end{proof}

\subsection{Proof of Lemma~\ref{lm:traj}}
\begin{proof}[Proof of Lemma~\ref{lm:traj}]
    We prove this inequality by induction. Suppose 
    \begin{align}
        V_{h+1}(s_{h+1};\btheta_K,\pi,r) - &V_{h+1}(s_{h+1};\btheta^*,\pi,r) \notag \\
        & =   \EE_{\texttt{traj}\sim(\pi,\PP)|\texttt{traj}_{h+1}} W_{h+1}( \{ (\PP_K - \PP) V_{h+1}(s_{h},a_{h};\btheta_K,\pi,r) \}), \label{eq:traj_condition1}
    \end{align}
    which is true for $h=H$. Then, we have
    \begin{align}
        &V_{h}(s_{h};\btheta_K,\pi,r) - V_{h}(s_{h};\btheta^*,\pi,r)  \notag \\
        & = \min\big\{1, r_h(s_h,a_h) + \PP_K V_{h+1}(s_h,a_h;\btheta_K,\pi,r) - \big(r_h(s_h,a_h) + \PP V_{h+1}(s_h,a_h;\btheta^*,\pi,r)\big) \big\} \notag \\
        & = \min\big\{1,\PP_K V_{h+1}(s_h,a_h;\btheta_K,\pi,r) - \PP V_{h+1}(s_h,a_h;\btheta^*,\pi,r) \big\} \notag \\
        & = \min\{1,(\PP_K- \PP) V_{h+1}(s_h,a_h;\btheta_K,\pi,r) + \PP (V_{h+1}(s_h,a_h;\btheta_K,\pi,r) - V_{h+1}(s_h,a_h;\btheta^*,\pi,r) )\}  \notag \\
        & = \min\{1, (\PP_K- \PP) V_{h+1}(s_h,a_h;\btheta_K,\pi,r) \notag\\
        &\quad+ \EE_{s_{h+1}\sim\PP(\cdot|s_h,a_h)}\EE_{\texttt{traj}\sim(\pi,\PP)|\texttt{traj}_{h+1}} W_{h+1}( \{ (\PP_K - \PP) V_{h+1}(s_h;\btheta_K,\pi,r) \}) \}  \notag \\
        & = \EE_{\texttt{traj}\sim(\pi,\PP)|\texttt{traj}_{h}} \min\{1, (\PP_K- \PP) V_{h+1}(s_h,a_h;\btheta_K,\pi,r) + W_{h+1}( \{ (\PP_k - \PP) V_{h+1}(s_h,a_h;\btheta_K,\pi,r) \}) \}  \notag \\
        & = \EE_{\texttt{traj}\sim(\pi,\PP)|\texttt{traj}_{h}} W_{h}( \{ (\PP_k - \PP) V_{h+1}(s_h,a_h;\btheta_K,\pi,r) \}). \notag
    \end{align}
The first equality holds due to that $V_{h}(s_{h};\btheta_K,\pi,r)$ and $V_{h}(s_{h};\btheta^*,\pi,r)$ both belong to $[0,1]$, the third equality holds due to \eqref{eq:traj_condition1}, and the forth equality holds due to that $\EE_{\texttt{traj}\sim(\pi,\PP)|\texttt{traj}_{h}} = \EE_{s_{h+1}\sim\PP(\cdot|s_h,a_h)}\EE_{\texttt{traj}\sim(\pi,\PP)|\texttt{traj}_{h+1}}$. Thus, by induction, we obtain the desired result \eqref{eq:traj}.
\end{proof}

\subsection{Proof of Lemma~\ref{lm:optimism}}

\begin{proof}[Proof of Lemma~\ref{lm:optimism}]

We first prove \eqref{eq:optimism_medium} by induction.
\begin{align}
\EE_{\texttt{traj}\sim(\pi,\PP)|\texttt{traj}_1} W_1\left(\left\{  u_{K,h}(s_h,\pi(s_h);\btheta_K, \pi, r) \right\}\right) \le \hat V_{K,1}\left(s_1;\btheta_K,\pi,r\right). \label{eq:optimism_medium}
\end{align}
Suppose
\begin{align}
\EE_{\texttt{traj}\sim(\pi,\PP)|\texttt{traj}_{h+1}} W_{h+1}(\{ u_{K,h}(s_h,\pi(s_h);\btheta_K, \pi, r) \}) \le \hat V_{K,h+1}(s_1;\btheta_K,\pi,r), \label{eq:optimism_condition1}
\end{align}
which is true for $h=H$. Then, 
    \begin{align}
        &\hat V_{K,h}(s_h;\btheta_K,\pi,r) - \EE_{\texttt{traj}\sim(\pi,\PP)|\texttt{traj}_h} W_h(\{ u_{K,h}(s_h,\pi(a_h);\btheta_K, \pi, r) \})  \notag \\
        &\ge \min\left\{0, u_{K,h}(s_h,\pi(s_h);\btheta_K,\pi, r) + 2 \beta  \left\| \bphi_{\hat V_{K,h+1}(\cdot;\btheta_K,\pi,r)} (s_h,\pi(s_h))\right\|_{\dot{\hat\bSigma}_{K,0}}  \right. \notag \\
         &\left.\qquad +\bphi^\top_{\hat V_{K,h+1}(\cdot;\btheta_K,\pi,r)} \left(s_h,\pi(s_h)\right)\btheta_K - \EE_{\texttt{traj}\sim(\pi,\PP)|\texttt{traj}_h} W_h(\{ u_{K,h}(s_h,\pi(s_h);\btheta_K, \pi, r))\}\right\} \notag \\
         & \ge \min\bigg\{0, u_{K,h}(s_h,\pi(s_h);\btheta_K,\pi, r) + 2 \beta  \left\| \bphi_{\hat V_{K,h+1}(\cdot;\btheta_K,\pi,r)} (s_h,\pi(s_h))\right\|_{\dot{\hat\bSigma}_{K,0}} \notag \\
         & \qquad + \bphi^\top_{\hat V_{K,h+1}(\cdot;\btheta_K,\pi,r)} (s_h,\pi(s_h))\btheta_K - u_{K,h}(s_h,\pi(s_h);\btheta_K,\pi, r)   \notag \\
         & \qquad - \EE_{s_{h+1}\sim\PP(\cdot|s_h,\pi(s_h))}\EE_{\texttt{traj}\sim(\pi,\PP)|\texttt{traj}_{h+1}} W_{h+1}(\{ u_{K,h}(s_h,\pi(s_h); \btheta_K, \pi, r) ) \bigg\} \notag \\
         & \ge \min\bigg\{0,  2 \beta  \left\| \bphi_{\hat V_{K,h+1}(\cdot;\btheta_K,\pi,r)} (s_h,\pi(s_h))\right\|_{\dot{\hat\bSigma}_{K,0}} + \bphi^\top_{\hat V_{K,h+1}(\cdot;\btheta_K,\pi,r)} (s_h,\pi(s_h))\btheta_K   \notag \\
         & \left.\qquad -   \EE_{s_{h+1}\sim\PP(\cdot|s_h,\pi(s_h))} \hat V_{K,h+1}(s_{h+1};\btheta_K,\pi,r) \right\} \notag \\
         & \geq \min\left\{0, 2 \beta  \left\| \bphi_{\hat V_{K,h+1}(\cdot;\btheta_K,\pi,r)} (s_h,\pi(s_h))\right\|_{\dot{\hat\bSigma}_{K,0}} + \bphi^\top_{\hat V_{K,h+1}(\cdot;\btheta_K,\pi,r)} (s_h,\pi(s_h))(\btheta_K - \btheta^*)\right\} \notag \\
         & \geq \min\left\{0, 2 \beta  \left\| \bphi_{\hat V_{K,h+1}(\cdot;\btheta_K,\pi,r)} (s_h,\pi(s_h))\right\|_{\dot{\hat\bSigma}_{K,0}} - 2 \beta  \left\| \bphi_{\hat V_{K,h+1}(\cdot;\btheta_K,\pi,r)} (s_h,\pi(s_h))\right\|_{\dot{\hat\bSigma}_{K,0}}\right\}  \notag \\
         & \geq 0, \notag
    \end{align}
where the first inequality holds due to the definition of $\hat V_{K,h}$, the second inequality holds due to the definition of $W_h(\cdot)$ and $\EE_{\texttt{traj}\sim(\pi,\PP)|\texttt{traj}_{h}} = \EE_{s_{h+1}\sim\PP(\cdot|s_h,\pi(s_h))}\EE_{\texttt{traj}\sim(\pi,\PP)|\texttt{traj}_{h+1}}$, the third inequality holds due to \ref{eq:optimism_condition1}, the fifth inequality holds due to  Lemma~\ref{lm:main_theta}. Thus, by induction, \ref{eq:optimism_medium} holds. Thanks to the optimism of $\hat V_{K,1}(s_1;\btheta_K,\pi_K,r_K)$, we have
\begin{align}
\hat V_{K,1}(s_1;\btheta_K,\pi,r) \le \hat V_{K,1}(s_1;\btheta_K,\pi_K,r_K), \notag
\end{align}
which concludes the proof.
\end{proof}

\subsection{Proof of Lemma~\ref{lm:ucb}}
\begin{proof}[Proof of Lemma~\ref{lm:ucb}]
For the equation \eqref{eq:hatucb}, we have
\begin{align}
&\hat V_{k,h}(s_h^k) - u_{k,h}(s_h^k, a_h^k) - \PP \hat V_{k,h+1}(s_h^k,a_h^k)  \notag \\
& \le \min\left\{ 1 , 2 \beta  \left\| \hat \bphi_{k,h,0} (s_h^k,a_h^k)\right\|_{\dot{\hat\bSigma}_{k,0}^{-1}} + \hat\bphi^\top_{k,h,0} (s_h^k,a_h^k)\btheta_k - \hat\bphi_{k,h,0}^\top(s_h^k,a_h^k) \btheta \right\}  \notag \\
& = \min\left\{ 1 , 2 \beta  \left\| \hat\bphi_{k,h,0} (s_h^k,a_h^k)\right\|_{\dot{\hat\bSigma}_{k,0}^{-1}} + \hat\bphi^\top_{k,h,0} (s_h^k,a_h^k)(\btheta_k-\btheta)  \right\} \notag \\
& \le \min\left\{ 1 , 2 \beta  \left\| \hat\bphi_{k,h,0} (s_h^k,a_h^k)\right\|_{\dot{\hat\bSigma}_{k,0}^{-1}} + \left\|\hat\bphi^\top_{k,h,0} (s_h^k,a_h^k) \right\|_{\dot{\hat\bSigma}_{k,0}^{-1}} \left\|\btheta_k-\btheta\right\|_{\dot{\hat\bSigma}_{k,0}}\right\}  \notag \\
& \le \min\left\{ 1 , 4 \beta  \left\| \hat \bphi_{k,h,0} (s_h^k,a_h^k)\right\|_{\dot{\hat\bSigma}_{k,0}^{-1}}  \right\} \notag \\
& \le 4 \min\left\{ 1 ,  \beta  \left\| \hat \bphi_{k,h,0} (s_h^k,a_h^k)\right\|_{\dot{\hat\bSigma}_{k,0}^{-1}}   \right\} \notag
\end{align}
where the first inequality holds due to that each term lies in the interval $[0,1]$, the second inequality holds due to Cauchy-Schwartz inequality, and the third inequality holds due to lemma~\ref{lm:main_theta}. For the equation \eqref{eq:tildeucb}, we have
\begin{align}
& \tilde V_{k,h}(s_h^k) - r_{k,h}(s_h^k,a_h^k) - \PP \tilde V_{k,h+1}(s_h^k,a_h^k)  \notag \\
&\le \min\left\{1,  \tilde\bphi^\top_{k,h,0} (s_h^k,a_h^k)\btheta_k - \tilde \bphi_{k,h,0}^\top(s_h^k,a_h^k) \btheta \right\}  \notag \\
& = \min\left\{1,  \tilde\bphi^\top_{k,h,0} (s_h^k,a_h^k)(\btheta_k - \btheta)\right\}  \notag \\
& \le \min\left\{1,  \left\| \tilde\bphi^\top_{k,h,0} (s_h^k,a_h^k) \right\|_{\dot{\tilde\bSigma}_{k,0}^{-1}} \left\|\btheta_k-\btheta\right\|_{\dot{\tilde\bSigma}_{k,0}} \right\}  \notag \\
& \le \min\left\{1, 2\beta\left\|\tilde\bphi^\top_{k,h,0} (s_h^k,a_h^k) \right\|_{\dot{\tilde\bSigma}_{k,0}^{-1}} \right\} \notag \\
& \le 2 \min\left\{1, \beta\left\|\tilde\bphi^\top_{k,h,0} (s_h^k,a_h^k) \right\|_{\dot{\tilde\bSigma}_{k,0}^{-1}} \right\}, \notag
\end{align}
where the first inequality holds due to that each term lies in the interval $[0,1]$, the second inequality holds due to the Cauchy-Schwartz inequality, and the third inequality holds due to Lemma~\ref{lm:main_theta}.
\end{proof}

\subsection{Proof of Lemma~\ref{lm:R}}

\begin{lemma}[Lemma~B.1, \citet{zhou22high}]\label{lm:ub}
Let $\left\{\sigma_k, \beta_k\right\}_{k \geq 1}$ be a sequence of non-negative numbers, $\alpha, \gamma>0,\left\{\mathbf{x}_k\right\}_{k \geq 1} \subset \mathbb{R}^d$ and $\left\|\mathbf{x}_k\right\|_2 \leq L$. Let $\left\{\mathbf{Z}_k\right\}_{k \geq 1}$ and $\left\{\bar{\sigma}_k\right\}_{k \geq 1}$ be recursively defined as follows: $\mathbf{Z}_1=\lambda \mathbf{I}$
\begin{align}
 \forall k \geq 1, \bar{\sigma}_k=\max \left\{\sigma_k, \alpha, \gamma\left\|\mathbf{x}_k\right\|_{\mathbf{z}_k^{-1}}^{1 / 2}\right\}, \mathbf{Z}_{k+1}=\mathbf{Z}_k+\mathbf{x}_k \mathbf{x}_k^{\top} / \bar{\sigma}_k^2 .   \notag
\end{align}

Let $\iota=\log \left(1+K L^2 /\left(d \lambda \alpha^2\right)\right)$. Then we have
\begin{align}
 \sum_{k=1}^K \min \left\{1, \beta_k\left\|\mathbf{x}_k\right\|_{\mathbf{Z}_k^{-1}}\right\} \leq 2 d \iota+2 \max _{k \in[K]} \beta_k \gamma^2 d \iota+2 \sqrt{d \iota} \sqrt{\sum_{k=1}^K \beta_k^2\left(\sigma_k^2+\alpha^2\right)} . \notag  
\end{align}
\end{lemma}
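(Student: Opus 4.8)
The plan is to reduce everything to a single rank-one log-determinant (elliptical-potential) estimate by passing to the \emph{normalized} features $\mathbf{y}_k := \mathbf{x}_k/\bar\sigma_k$. With this notation the recursion is a genuine rank-one update $\mathbf{Z}_{k+1} = \mathbf{Z}_k + \mathbf{y}_k\mathbf{y}_k^\top$, and since $\bar\sigma_k \ge \alpha$ we have $\|\mathbf{y}_k\|_2 \le L/\alpha$. Hence a trace/AM--GM bound on $\det\mathbf{Z}_{K+1}$ gives $\log(\det\mathbf{Z}_{K+1}/\det\mathbf{Z}_1) \le d\log(1+KL^2/(d\lambda\alpha^2)) = d\iota$, and combining the elementary inequality $\min\{1,u\}\le 2\log(1+u)$ with the telescoping identity $\prod_k(1+\|\mathbf{y}_k\|_{\mathbf{Z}_k^{-1}}^2) = \det\mathbf{Z}_{K+1}/\det\mathbf{Z}_1$ yields the master estimate
\[
\sum_{k=1}^K \min\{1, \|\mathbf{y}_k\|_{\mathbf{Z}_k^{-1}}^2\} \le 2d\iota .
\]
Each of the three terms in the target bound will be charged against this single quantity.

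Next I would partition $[K]$ into three groups according to the size of $\|\mathbf{y}_k\|_{\mathbf{Z}_k^{-1}} = \|\mathbf{x}_k\|_{\mathbf{Z}_k^{-1}}/\bar\sigma_k$ and the active branch of the maximum defining $\bar\sigma_k$. Group~1 collects indices with $\|\mathbf{y}_k\|_{\mathbf{Z}_k^{-1}} > 1$; here each summand is at most $1$ while $\min\{1,\|\mathbf{y}_k\|_{\mathbf{Z}_k^{-1}}^2\}=1$, so the number of such indices is at most $2d\iota$ by the master estimate, contributing the first term $2d\iota$. Group~2 collects indices with $\|\mathbf{y}_k\|_{\mathbf{Z}_k^{-1}}\le 1$ on which the $\gamma$-branch is active, i.e. $\bar\sigma_k = \gamma\|\mathbf{x}_k\|_{\mathbf{Z}_k^{-1}}^{1/2}$; the key algebraic identity is $\|\mathbf{y}_k\|_{\mathbf{Z}_k^{-1}}^2 = \|\mathbf{x}_k\|_{\mathbf{Z}_k^{-1}}/\gamma^2$, so $\beta_k\|\mathbf{x}_k\|_{\mathbf{Z}_k^{-1}} = \beta_k\gamma^2\|\mathbf{y}_k\|_{\mathbf{Z}_k^{-1}}^2 = \beta_k\gamma^2\min\{1,\|\mathbf{y}_k\|_{\mathbf{Z}_k^{-1}}^2\}$, and summing and again invoking the master estimate gives the second term $2\max_k\beta_k\gamma^2 d\iota$.

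Finally, Group~3 collects the remaining indices, with $\|\mathbf{y}_k\|_{\mathbf{Z}_k^{-1}}\le 1$ and $\bar\sigma_k = \max\{\sigma_k,\alpha\}$, so $\bar\sigma_k^2 \le \sigma_k^2 + \alpha^2$. On these I would drop the truncation, write $\min\{1,\beta_k\|\mathbf{x}_k\|_{\mathbf{Z}_k^{-1}}\} \le \beta_k\bar\sigma_k\|\mathbf{y}_k\|_{\mathbf{Z}_k^{-1}}$, and apply Cauchy--Schwarz to split the sum into $\sqrt{\sum_k\beta_k^2\bar\sigma_k^2}$ and $\sqrt{\sum_k\|\mathbf{y}_k\|_{\mathbf{Z}_k^{-1}}^2}$; the first factor is at most $\sqrt{\sum_k\beta_k^2(\sigma_k^2+\alpha^2)}$, and the second, since $\|\mathbf{y}_k\|_{\mathbf{Z}_k^{-1}}\le 1$ makes $\|\mathbf{y}_k\|_{\mathbf{Z}_k^{-1}}^2 = \min\{1,\|\mathbf{y}_k\|_{\mathbf{Z}_k^{-1}}^2\}$, is at most $\sqrt{2d\iota}$ by the master estimate. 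Using $\sqrt{2d\iota}\le 2\sqrt{d\iota}$ produces the third term, and summing the three groups gives exactly the claimed inequality. I expect the main difficulty to be bookkeeping rather than conceptual: one must check that the case split is exhaustive with each index assigned to exactly one group (breaking ties in the defining maximum arbitrarily), and that restricting the master estimate to a sub-collection is legitimate because all its summands are nonnegative. The only genuinely quantitative step is the log-determinant bound, which rests on $\bar\sigma_k\ge\alpha$ to control $\|\mathbf{y}_k\|_2$.
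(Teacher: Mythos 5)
Your proof is correct and follows essentially the same route as the source: the paper itself states this lemma without proof, importing it as Lemma~B.1 of \citet{zhou22high}, and your argument — normalizing to $\mathbf{y}_k=\mathbf{x}_k/\bar\sigma_k$, the log-determinant/elliptical-potential master bound $\sum_k\min\{1,\|\mathbf{y}_k\|_{\mathbf{Z}_k^{-1}}^2\}\le 2d\iota$, and the three-way split by $\|\mathbf{y}_k\|_{\mathbf{Z}_k^{-1}}>1$ versus the active branch of the maximum ($\gamma$-branch handled via $\|\mathbf{y}_k\|_{\mathbf{Z}_k^{-1}}^2=\|\mathbf{x}_k\|_{\mathbf{Z}_k^{-1}}/\gamma^2$, remainder via Cauchy--Schwarz with $\bar\sigma_k^2\le\sigma_k^2+\alpha^2$) — is exactly the decomposition used in that original proof. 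All steps, including the tie-breaking and the restriction of the master estimate to subsets of indices, check out.
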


\begin{proof}[Proof of Lemma~\ref{lm:R}]
The proof is nearly identical to the proof of Lemma C.5 in \citet{zhou22high}. The only difference is to replace $\hat\bSigma_{k,m}$ with $\dot{\hat\bSigma}_{k,m}$ (or $\dot{\tilde\bSigma}_{k,m}$), $\tilde\bSigma_{k,h,m}$ with $\hat\bSigma_{k,h,m}$ (or still $\tilde\bSigma_{k,h,m}$), $\bphi_{k,h,m}$ with $\hat\bphi_{k,h,m}$ (or $\tilde\bphi_{k,h,m}$).
\end{proof}

\subsection{Proof of Lemma~\ref{lm:S}}

\begin{proof}[Proof of Lemma~\ref{lm:S}]
The proof follows the proof of Lemma 25 in \citet{zhang2021improved} and Lemma C.6 in \citet{zhou22high}. We have,
\begin{align}
\hat S_m=&\sum_{k=1}^K \sum_{h=1}^H I_h^k\left[\left[\PP \hat V_{k, h+1}^{2^{m+1}}\right]\left(s_h^k, a_h^k\right)-\left(\left[\PP \hat V_{k, h+1}^{2^m}\right]\left(s_h^k, a_h^k\right)\right)^2\right]   \notag \\
=&\sum_{k=1}^K \sum_{h=1}^H I_h^k\left[\left[\PP \hat V_{k, h+1}^{2^{m+1}}\right]\left(s_h^k, a_h^k\right)-\hat V_{k, h+1}^{2^{m+1}}\left(s_{h+1}^k\right)\right]+\sum_{k=1}^K \sum_{h=1}^HI_h^k\bigg[\hat V_{k, h}^{2^{m+1}}\left(s_h^k\right)   \notag \\
& -\left(\left[\PP \hat V_{k, h+1}^{2^m}\right]\left(s_h^k, a_h^k\right)\right)^2\bigg] +\sum_{k=1}^K \sum_{h=1}^H I_h^k\left(\hat V_{k, h+1}^{2^{m+1}}\left(s_{h+1}^k\right)- \hat V_{k, h}^{2^{m+1}}\left(s_h^k\right)\right)   \notag \\
\leq & \hat A_{m+1}+\sum_{k=1}^K \sum_{h=1}^H I_h^k\left[\hat V_{k, h}^{2^{m+1}}\left(s_h^k\right)-\left(\left[\PP\hat V_{k, h+1}^{2^m}\right]\left(s_h^k, a_h^k\right)\right)^2\right]+\sum_{k=1}^K I_{h_k}^k \hat V_{k, h_k+1}^{2^{m+1}}\left(s_{h_k+1}^k\right),\label{eq:18}
\end{align}
where $h_k$ is the largest index satisfying $I_h^k = 1$. For the second term, we have
\begin{align}
&\sum_{k=1}^K \sum_{h=1}^H I_h^k\left[\hat V_{k, h}^{2^{m+1}}\left(s_h^k\right)-\left(\left[\mathbb{P} \hat V_{k, h+1}^{2^m}\right]\left(s_h^k, a_h^k\right)\right)^2\right]    \notag \\
&\leq \sum_{k=1}^K \sum_{h=1}^H I_h^k\left[\hat V_{k, h}^{2^{m+1}}\left(s_h^k\right)-\left(\left[\mathbb{P} \hat V_{k, h+1}\right]\left(s_h^k, a_h^k\right)\right)^{2^{m+1}}\right]   \notag \\
&=\sum_{k=1}^K \sum_{h=1}^H I_h^k\left(\hat V_{k, h}\left(s_h^k\right)-\left[\mathbb{P} \hat V_{k, h+1}\right]\left(s_h^k, a_h^k\right)\right) \prod_{i=0}^m\left(\hat V_{k, h}^{2^i}\left(s_h^k\right)+\left[\mathbb{P} \hat V_{k, h+1}\right]\left(s_h^k, a_h^k\right)^{2^i}\right)   \notag \\
&\leq 2^{m+1} \sum_{k=1}^K \sum_{h=1}^H I_h^k \max \left\{\hat V_{k, h}\left(s_h^k\right)-\left[\mathbb{P} \hat V_{k, h+1}\right]\left(s_h^k, a_h^k\right), 0\right\}   \notag \\
&\leq 2^{m+1} \sum_{k=1}^K \sum_{h=1}^H I_h^k\left[u_{k,h}\left(s_h^k, a_h^k\right)+ 4 \min \left\{1, \beta\left\| \hat \bphi_{k, h, 0}\right\|_{\dot{\hat{\bSigma}}_{k, 0}^{-1}}\right\}\right]   \notag \\
&\leq 2^{m+1}\left(\tilde R_0 + 4 \hat R_0\right),\label{eq:19}
\end{align}
where the first inequality holds due to using $ \EE X^2 \ge (\EE X)^2 $ recursively, the first equality holds due to the fact $x^{2^{m+1}} - y ^{2^{m+1}} = (x-y)\prod_{i=0}^m (x^{2^m} - y^{2^m})$, the second inequality holds due to $\hat V_{k,h}$ belongs to the interval $[0,1]$, the third inequality holds due to Lemma~\ref{lm:ucb}, and the last inequality holds due to $u_{k,h}(s_h^k, a_h^k) = \beta\|\bphi_{V_{h+1}(\cdot;\btheta_k,\pi_k,r_k)}(s_h^k,a_h^k)\|_{\dot{\tilde{\bSigma}}_{k,0}} = \beta\| \tilde\bphi_{k,h,0}\|_{\dot{\tilde{\bSigma}}_{k,0}}$.
If $h_K \le H$, we have $I_{h_k}^k \hat V_{k, h_k+1}^{2^{m+1}}\left(s_{h_k+1}^k\right)\le 1 = 1 - I_H^k$, and if $h_K = H$, $I_{h_k}^k \hat V_{k, h_k+1}^{2^{m+1}}\left(s_{h_k+1}^k\right) = 0 = 1 - I_H^k$, which both give
\begin{align}
\sum_{k=1}^K I_{h_k}^k \hat V_{k, h_k+1}^{2^{m+1}}\left(s_{h_k+1}^k\right) \le \sum_{k=1}^K(1-I_H^k) = G \label{eq:20}
\end{align}
Substituting Equations~\eqref{eq:18}, \eqref{eq:19},\eqref{eq:20} into \eqref{eq:18}, we can get \eqref{eq:inhatS}. For Equation \eqref{eq:tildeS}, similarly, we have
\begin{align}
    &\tilde S_m \le \tilde A_{m+1}+\sum_{k=1}^K \sum_{h=1}^H I_h^k\left[\tilde V_{k, h}^{2^{m+1}}\left(s_h^k\right)-\left(\left[\PP\tilde V_{k, h+1}^{2^m}\right]\left(s_h^k, a_h^k\right)\right)^2\right]+\sum_{k=1}^K I_{h_k}^k \tilde V_{k, h_k+1}^{2^{m+1}}\left(s_{h_k+1}^k\right), \label{eq:S_tilde_medium} \\
    &\sum_{k=1}^K I_{h_k}^k \tilde V_{k, h_k+1}^{2^{m+1}}\left(s_{h_k+1}^k\right) \le \sum_{k=1}^K\left(1-I_H^k\right) = G. \label{eq:S_tilde_second_term}
\end{align}
And we have
\begin{align}
&\sum_{k=1}^K \sum_{h=1}^H I_h^k\left[\hat V_{k, h}^{2^{m+1}}\left(s_h^k\right)-\left(\left[\mathbb{P} \hat V_{k, h+1}^{2^m}\right]\left(s_h^k, a_h^k\right)\right)^2\right]    \notag \\
&\le 2^{m+1} \sum_{k=1}^K \sum_{h=1}^H I_h^k \max \left\{\tilde V_{k, h}\left(s_h^k\right)-\left[\mathbb{P} \tilde V_{k, h+1}\right]\left(s_h^k, a_h^k\right), 0\right\}   \notag \\
&\le 2^{m+1} \sum_{k=1}^K \sum_{h=1}^H I_h^k\left[r_{k,h}(s_h^k,s_h^k) +\min\left\{1,  2\beta\left\| \tilde\bphi_{k,h,0} \right\|_{\dot{\tilde\bSigma}_{k,0}^{-1}}\right\}\right] \notag \\
&\le 2^{m+1} \left(K + 2\tilde R_0\right) \label{eq:S_tilde_third_term}
\end{align}
where the first inequality holds similar to the derivation of \eqref{eq:19}, second inequality follows Lemma~\ref{lm:ucb}, and the third inequality holds due to $\sum_{h=1}^H r_{k,h}(s_h^k,a_h^k) \le 1 $. Plugging Equations~\eqref{eq:S_tilde_second_term} \eqref{eq:S_tilde_third_term} into \ref{eq:S_tilde_medium}, we obtain \ref{eq:intildeS}
\end{proof}

\subsection{Proof of Lemma~\ref{lm:A}}
\begin{proof}[Proof of Lemma~\ref{lm:A}]
The proof follows the proof of Lemma 25 in \citet{zhang2021improved} and Lemma C.7 in \citet{zhou22high}. We use Lemma~\ref{lm:concentration} for $\hat A_m$ and $\tilde A_m$ for each $m$. To avoid confusion, we write $\epsilon,\delta$ in Lemma~\ref{lm:concentration} as $\epsilon',\delta'$. 

Let $ x_{k,h} = I_h^k\left[\left[\PP \hat V_{k,h+1}^{2^m}\right](s_h^k,a_h^k) - \hat V_{k,h+1}^{2^m}(s_{h+1}^k)\right]$, $n=KH$, $\epsilon' = \sqrt{\log(1/\delta')}$, and $\delta' = \delta/(4\log(KH))$. Thus, $\EE\left[\hat x_{k,h}|\cF_{k,h}\right]=0$ and $\EE\left[\hat x_{k,h}^2\big|\cF_{k,h}\right]=I_h^k \left[\VV V_{k,h+1}^{2^m}\right](s_h^k,a_h^k)$. Therefore, for each $m\in \overline{[M]}$, with probability at least $1-\delta$, we have
\begin{align}
|\hat A_m| = \left| \sum_{k=1}^K\sum_{h=1}^H  x_{k,h}\right| \le \sqrt{2\zeta\sum_{k=1}^K\sum_{h=1}^H I_h^k \left[\VV \hat V_{k,h+1}^{2^m}\right](s_h^k,a_h^k)} + \zeta. \notag
\end{align}
Similarly, let $ x_{k,h} = I_h^k\left[\left[\PP \tilde V_{k,h+1}^{2^m}\right](s_h^k,a_h^k) - \tilde V_{k,h+1}^{2^m}(s_{h+1}^k)\right]$, $n=KH$, $\epsilon' = \sqrt{\log(1/\delta')}$, and $\delta' = \delta/(4\log(KH))$. With probability at least $1-\delta$, we have
\begin{align}
    |\tilde A_m| = \left|\sum_{k=1}^K\sum_{h=1}^H  x_{k,h}\right| \le \sqrt{2\zeta\sum_{k=1}^K\sum_{h=1}^H I_h^k \left[\VV \tilde V_{k,h+1}^{2^m}\right](s_h^k,a_h^k)} + \zeta. \notag
\end{align}
Taking union bound over $m\in\overline{[M]}$ completes the proof.
\end{proof}

\subsection{Proof of Lemma~\ref{lm:G}}
\begin{proof}[Proof of Lemma~\ref{lm:G}]
    By the fact that $\det\left(\dot{\hat \bSigma}^{-1/2}_{k+1,m}\right) < \det\left(\hat \bSigma^{-1/2}_{k,H,m}\right)$ and $\det\left(\dot{\tilde \bSigma}^{-1/2}_{k+1,m}\right)\\  < \det\left(\tilde \bSigma^{-1/2}_{k,H,m}\right)$, we have
    \begin{align}
        (1-I_H^k)  &= 1 \Leftrightarrow \exists m \in \overline{[M]},  \det \left(\dot{\hat\bSigma}_{k,  m}^{-1/2}\right) / \det \left(\hat \bSigma_{k, H, m}^{-1/2}\right) > 4~\text{or}~\det \left(\dot{\tilde \bSigma}_{k, m}^{-1/2}\right) / \det \left(\tilde \bSigma_{k, H, m}^{-1/2}\right) > 4  \notag \\
        & \Rightarrow \exists m \in \overline{[M]},  \det \left(\dot{\hat\bSigma}_{k,  m}^{-1/2}\right) / \det \left(\dot{\hat\bSigma}_{k+1,m}^{-1/2}\right) > 4~\text{or}~\det \left(\dot{\tilde \bSigma}_{k, m}^{-1/2}\right) / \det \left(\dot{\tilde \bSigma}_{k+1,m}^{-1/2}\right) > 4
    \end{align}
Let $\hat \cD_m$ and $\tilde \cD_m$ denote the indices $k$ such that 
\begin{align}
    \hat\cD_m := \left\{k\in [K]:~\det \left(\dot{\hat\bSigma}_{k+1,m}\right)/\det \left(\dot{\hat\bSigma}_{k,  m}\right) > 16\right\}  \notag \\
    \tilde\cD_m := \left\{k\in [K]:~\det \left(\dot{\tilde\bSigma}_{k+1,m}\right)/\det \left(\dot{\tilde\bSigma}_{k,  m}\right) > 16\right\} \notag 
\end{align}
Then we have
\begin{align}
    G \le \left| \bigcup_{m=0}^{M-1} \hat\cD_m \cup \bigcup_{m=0}^{M-1} \tilde\cD_m \right| \le \sum_{m=0}^{M-1}\left|\hat\cD_m\right| + \sum_{m=0}^{M-1}|\tilde\cD_m| \notag
\end{align}
For each $m$, we have
\begin{align}
    2\left|\hat \cD_m\right| < \sum_{k \in \hat \cD_m} \log{16} < \sum_{k \in \hat \cD_m} \log \left(\det \left(\dot{\hat\bSigma}_{k+1,m}\right)/\det \left(\dot{\hat\bSigma}_{k,  m}\right)\right) \le \sum_{k=1}^K \log \left(\det \left(\dot{\hat\bSigma}_{k+1,m}\right)/\det \left(\dot{\hat\bSigma}_{k,  m}\right)\right) \notag
\end{align}
Furthermore, since $\det\left(\dot{\hat\bSigma}_{K+1,m}\right) \leq \left(\tr\left(\dot{\hat\bSigma}_{K+1,m}\right)/d \right)^d$ and $\tr\left(\dot{\hat\bSigma}_{K+1,m}\right) \le \tr\left(\lambda I\right) + \\ \sum_{k,h} \left\| \hat\bphi_{k,h.m}  \right\|_2^2 /\hat{\sigma}_{k,h,m}^2 \le d\lambda + KH/\alpha^2 $
\begin{align}
    \sum_{k=1}^K \log \left(\det \left(\dot{\hat\bSigma}_{k+1,m}\right)/\det \left(\dot{\hat\bSigma}_{k,  m}\right)\right) &= \log\left(\det \left(\dot{\hat\bSigma}_{K+1,m}\right)/\det \left(\dot{\hat\bSigma}_{1,  m}\right)\right)  \notag \\
    & \le d\left(\log\left(\lambda + KH/(d\alpha^2)\right) - \log(\lambda)\right) \notag
\end{align}
Therefore $|\hat\cD_m|$ is upper bounded by 
\begin{align}
    |\hat\cD_m| < d/2 \log(1 + KH/(d\lambda\alpha^2)). \notag
\end{align}
And same for $|\tilde\cD_m|$. Taking summation gives the upper bound of $G$.
\end{proof}

\section{Proof of Lower Bound}
Reward-free exploration is more difficult than non-reward-free MDP by definitions since we can easily solve non-reward-free MDP by ignoring its reward and executing reward-free exploration. Thus, we will start with acquiring lower bounds under non-reward-free MDP settings and then obtain sample complexity lower bounds of reward-free exploration. The proof follows ideas of \citet{zhou22high} and \citet{chen2022nearoptimal}.

As noted in Section \ref{se:lower_bound}, we will consider the \textit{hard-to-learn linear mixture MDPs} constructed in \citet{zhou22high}. The state space $\cS = \{x_1, x_2, x_3\}$ and the action space $\cA=\{ \textbf{a}\} = \{-1, 1\}^{d-1}$. The reward function satisfies $r(x_1,\cdot) = r(x_2,\cdot) = 0$, and $r(x_3,\cdot) = \frac{1}{H}$. The transition probability satisfies $\PP(x_2 \mid x_1,\ba) = 1-(\delta + \la\bmu,\ba\ra)$ and $\PP(x_3 \mid x_1,\ba) = \delta + \la\bmu,\ba\ra$, where $\delta = 1/6$ and $\bmu \in \{ -\Delta, \Delta \}^{d-1}$ with $\Delta = \sqrt{\delta/K^{'}} / (4\sqrt{2})$. The transition kernel is formulated as
\begin{align}
& \bphi\left(s' \mid s, \ba\right)= \begin{cases}\left(\alpha(1-\delta),-\beta \ba^{\top}\right)^{\top}, & s=x_1, s'=x_2 ; \\
\left(\alpha \delta, \beta \ba^{\top}\right)^{\top}, & s=x_1, s'=x_3 ; \\
\left(\alpha, \mathbf{0}^{\top}\right)^{\top}, & s \in\left\{x_2, x_3\right\}, s'=s ; \\
\mathbf{0}, & \text { otherwise. }\end{cases} \notag \\
& \btheta =\left(1 / \alpha, \bmu^{\top} / \beta\right)^{\top} . \notag
\end{align}

The following lemma from \citet{zhou22high} lower bounds the regret for linear mixture MDP.
\begin{lemma}\label{lm:mdp-lower}
(Theorem 5.4 in \citet{zhou22high}) Let $B>1$. Then for any algorithm, when $K' \geq \max \left\{3 d^2,(d-1) /(192(B-1))\right\}$, there exists a $B$-bounded linear mixture MDP satisfying Assumptions $3.2$ such that its expected regret $\mathbb{E}[\operatorname{Regret}(K')]$ is lower bounded by $\Omega\left( d \sqrt{K'} /(16 \sqrt{3})\right)$.
\end{lemma}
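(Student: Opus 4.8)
The plan is to prove this as an information-theoretic (minimax) regret lower bound over the explicit family of hard instances $\{\cM_{\bmu} : \bmu \in \{-\Delta,\Delta\}^{d-1}\}$ described above, in which the only unknown is the sign pattern of $\bmu$. First I would compute the value of an arbitrary policy exactly: since $x_2,x_3$ are absorbing and reward is collected only at $x_3$, a policy that plays action $\ba$ at the initial state $x_1$ achieves $V_1^\pi(x_1) = (\delta + \langle\bmu,\ba\rangle)\cdot\frac{H-1}{H}$, so the optimal action is $a_i = \mathrm{sign}(\mu_i)$ and the per-episode suboptimality is $\frac{H-1}{H}\big((d-1)\Delta - \langle\bmu,\ba\rangle\big) = \frac{H-1}{H}\cdot 2\Delta\cdot\big|\{i : a_i \neq \mathrm{sign}(\mu_i)\}\big|$. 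This linearity is the crucial structural fact: the cumulative regret decomposes additively across the $d-1$ coordinates as $\operatorname{Regret}(K') = \frac{H-1}{H}\cdot 2\Delta\sum_{i=1}^{d-1} N_i$, where $N_i$ counts the episodes in which the agent chose the wrong sign in coordinate $i$.

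Next I would lower-bound $\EE[\sum_i N_i]$ by a coordinate-wise symmetrization argument. Fixing a coordinate $i$ and comparing the two instances that differ only in the sign of $\mu_i$, I would write $\EE_{+}[N_i] + \EE_{-}[N_i] = \EE_{+}[T_i^{-}] + \EE_{-}[T_i^{+}]$, where $T_i^{\pm}$ is the number of episodes with $a_i = \pm 1$. Using $T_i^{+} + T_i^{-} = K'$, this equals $K' - \big(\EE_{-}[T_i^{-}] - \EE_{+}[T_i^{-}]\big) \ge K'\big(1 - \mathrm{TV}(P_+, P_-)\big)$, where $P_{\pm}$ are the laws of the full interaction transcript under the two instances. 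The core estimate is then a bound on $\mathrm{TV}(P_+, P_-)$ via Pinsker's inequality and the KL chain rule: the two transcripts differ only through the Bernoulli observation ($x_2$ versus $x_3$) at the initial state, whose parameters differ by exactly $2\Delta$, so $\mathrm{KL}(P_+\,\|\,P_-) \le \sum_{k=1}^{K'}\mathrm{KL}\big(\mathrm{Bern}(p_k)\,\|\,\mathrm{Bern}(p_k - 2\Delta a_i^k)\big) \le K'\cdot\frac{(2\Delta)^2}{q_{\min}(1-q_{\max})}$, where the condition $K'\ge 3d^2$ forces $(d-1)\Delta\le\delta$ and hence keeps every transition probability bounded away from $0$ and $1$. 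Substituting $\Delta = \sqrt{\delta/K'}/(4\sqrt 2)$ makes $K'\Delta^2 = \delta/32$ an absolute constant, so $\mathrm{KL}(P_+\|P_-) < 1/2$ and $\mathrm{TV}(P_+,P_-) \le 1/2$, yielding $\EE_+[N_i] + \EE_-[N_i] \ge K'/2$.

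Finally I would sum over coordinates and average over the uniform prior on $\bmu\in\{-\Delta,\Delta\}^{d-1}$. Applying the previous step conditionally on the remaining coordinates gives $\EE_{\bmu}[N_i] \ge K'/4$ for every $i$, hence an averaged regret of at least $\frac{H-1}{H}\cdot 2\Delta\cdot(d-1)\cdot\frac{K'}{4} = \frac{H-1}{H}\cdot\frac{(d-1)\Delta K'}{2}$. Substituting $\Delta K' = \sqrt{\delta K'}/(4\sqrt 2) = \sqrt{K'}/(8\sqrt 3)$ collapses the constants to exactly $\frac{H-1}{H}\cdot\frac{(d-1)\sqrt{K'}}{16\sqrt 3} = \Omega\big(d\sqrt{K'}/(16\sqrt 3)\big)$, since $8\sqrt 2\cdot\sqrt 6 = 16\sqrt 3$ and $\tfrac{H-1}{H}\ge\tfrac12$. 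Because the average over the family is this large, at least one instance $\cM_{\bmu}$ attains regret at least the average, establishing the existence statement. The second hypothesis $K' \ge (d-1)/(192(B-1))$ is exactly what controls $\|\bmu\|_2 = \sqrt{d-1}\,\Delta$ so that $\btheta = (1/\alpha,\bmu^\top/\beta)^\top$ remains $B$-bounded, as required by the linear-mixture structure of Definition~\ref{def:mdp}.

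The main obstacle will be making the coordinate-wise decoupling fully rigorous: although the regret decomposes additively, the agent's choice of $a_i^k$ in episode $k$ may depend on observations that carry information about \emph{other} coordinates, so the single-coordinate KL bound must be justified by conditioning on (fixing) $\bmu_{-i}$ and invoking the product structure of the transition kernel, rather than pretending the coordinates are genuinely independent experiments. A secondary technical point is tracking the Bernoulli-KL constant (and verifying the probabilities stay inside $[q_{\min},q_{\max}]\subset(0,1)$ uniformly over the family) carefully enough that the final constant lands at $1/(16\sqrt 3)$ rather than merely an unspecified $\Omega$.
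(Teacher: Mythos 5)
You should know at the outset that the paper contains no internal proof of Lemma~\ref{lm:mdp-lower}: it is imported verbatim as Theorem 5.4 of \citet{zhou22high}, and the appendix only uses it as a black box to derive Lemma~\ref{lm:nonfree-sample}. Measured against the original source, your reconstruction is correct and is essentially the argument given there: the same hard family $\{\cM_{\bmu}\}_{\bmu \in \{-\Delta,\Delta\}^{d-1}}$, the same reduction to a $(d-1)$-coordinate sign-identification problem (the Lattimore--Szepesv\'ari linear-bandit lower bound adapted to Bernoulli transitions, exploiting that only the first action of each episode matters because $x_2,x_3$ are absorbing), and the same Assouad-type coordinate-wise symmetrization with a chain-rule KL/Pinsker estimate, made rigorous exactly as you anticipate in your ``main obstacle'' paragraph --- by conditioning on $\bmu_{-i}$ so that the two compared transcript laws differ only through the first-step Bernoulli observation of each episode. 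Your constants check out: $K'\Delta^2 = \delta/32 = 1/192$ keeps the per-coordinate KL below $1/2$ once $K' \ge 3d^2$ confines all transition probabilities to roughly $[1/8, 5/24]$; $\Delta K' = \sqrt{K'}/(8\sqrt{3})$ turns $\tfrac{H-1}{H}\cdot 2\Delta(d-1)\cdot\tfrac{K'}{4}$ into $\tfrac{H-1}{H}\cdot(d-1)\sqrt{K'}/(16\sqrt{3})$, matching the advertised constant; and your per-episode suboptimality cap $2\Delta(d-1)\cdot\tfrac{H-1}{H} \le (d-1)/(4\sqrt{3K'})$ is precisely the bound \eqref{eq:A2} that the paper invokes downstream, so your computation incidentally certifies that step as well. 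The one piece you leave implicit is the $B$-boundedness bookkeeping: asserting that $K' \ge (d-1)/(192(B-1))$ ``controls $\|\bmu\|_2$'' is the right numerology, since $\|\bmu\|_2^2 = (d-1)\Delta^2 = (d-1)/(192K') \le B-1$, but to meet Definition~\ref{def:mdp} one must also fix the normalizing constants $\alpha,\beta$ in $\bphi$ and $\btheta = (1/\alpha, \bmu^\top/\beta)^\top$ so that $\|\bphi_V(s,a)\|_2 \le 1$ for every $V:\cS\to[0,1]$ and $1/\alpha^2 + \|\bmu\|_2^2/\beta^2 \le B^2$ hold simultaneously; this is routine, but it is exactly where the constant $192$ is consumed in \citet{zhou22high}, and a complete write-up should include it.
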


Given Lemma~\ref{lm:mdp-lower}, we will use the regret lower bound of non-reward-free linear mixture MDPs to derive the sample coomplexity lower bound.

\begin{lemma}\label{lm:nonfree-sample}
Suppose $B>1$. Then for any algorithm $\texttt{ALG}_{NonFree}$ solving non-reward-free linear mixture MDP problems satisfying assumption~\ref{as:reward}, there exist a linear mixture $\cM$ such that $\texttt{ALG}_{NonFree}$ needs to collect at least $\frac{C d^2}{\varepsilon^2}$ episodes to output an $\varepsilon$-policy with probability at least $1-\delta$. Here $C$ is an absolute constant. 
\end{lemma}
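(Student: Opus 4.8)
The plan is to convert the regret lower bound of Lemma~\ref{lm:mdp-lower} into a PAC sample-complexity lower bound through an explore-then-commit reduction, where the crucial idea is to \emph{tune the hard instance so that its largest sub-optimality gap is of order $\varepsilon$} rather than of constant order. This tuning is precisely what avoids the $1/\varepsilon$ loss that a naive regret-to-PAC reduction would incur (a regret lower bound of order $d\sqrt{K'}$ paired with an unscaled instance yields only $\Omega(d^2/\varepsilon)$).

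First I would fix any algorithm $\texttt{ALG}_{NonFree}$ that $(\varepsilon,\delta)$-solves the non-reward-free problem with sample complexity $K$, and (without loss of generality) take a sufficiently small failure probability, say $\delta\le 1/16$. I would set $K' = \lceil c^2 d^2/(4\varepsilon^2)\rceil$, where $c$ is the absolute constant hidden in the $\Omega(\cdot)$ of Lemma~\ref{lm:mdp-lower}, and instantiate the hard MDP $\cM$ of Lemma~\ref{lm:mdp-lower} with this $K'$, so that $\Delta=\Theta(1/\sqrt{K'})=\Theta(\varepsilon/d)$. For $\varepsilon$ below an absolute constant this $K'$ meets the conditions $K'\ge\max\{3d^2,(d-1)/(192(B-1))\}$, and one checks $\cM$ lies in the class: the return along any trajectory is at most $(H-1)/H<1$, so Assumption~\ref{as:reward} holds. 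The key quantitative feature is that the largest sub-optimality gap on $\cM$ is $G:=\tfrac{H-1}{H}\,2(d-1)\Delta=\Theta(\varepsilon)$; I would pin the constant in $K'$ (via $\varepsilon\approx\tfrac{c}{2}\,d/\sqrt{K'}$) so that $G$ is a fixed multiple of $\varepsilon$ with $G>\varepsilon$, making $\varepsilon$-optimality on $\cM$ non-trivial.

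Next I would build an online algorithm $\texttt{ALG}_{B}$ that runs for $K'$ episodes: for the first $K$ episodes it runs $\texttt{ALG}_{NonFree}$ (which, being non-reward-free, may use the reward), obtaining its output policy $\hat\pi$, and for the remaining $K'-K$ episodes it commits to $\hat\pi$. I would bound its expected regret from above in two pieces. Since every policy on $\cM$ is at most $G$-suboptimal, the exploration phase contributes at most $K\,G$. For the commitment phase, the PAC guarantee gives that $\hat\pi$ is $\varepsilon$-optimal with probability at least $1-\delta$, so each committed episode has expected sub-optimality at most $\varepsilon+\delta G$, contributing at most $(K'-K)(\varepsilon+\delta G)$. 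Substituting $G=\Theta(\varepsilon)$ yields $\EE[\mathrm{Regret}(K')]\le K\,G+(K'-K)(\varepsilon+\delta G)=O(\varepsilon)\,(K+K')$.

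Finally I would invoke Lemma~\ref{lm:mdp-lower}, which furnishes an instance $\cM$ (on which the PAC guarantee still holds, as it holds for every instance in the class) with $\EE[\mathrm{Regret}(K')]\ge c\,d\sqrt{K'}=\Theta(\varepsilon K')$; by the choice of $K'$ the right-hand side is a fixed multiple of $\varepsilon K'$. Chaining the two bounds on the same instance and dividing through by $\varepsilon$ produces an inequality of the form $a K'\le b K+b'K'$ with absolute constants $a>b'$, which rearranges to $K\ge\Omega(K')=\Omega(d^2/\varepsilon^2)$. The main obstacle, and the step requiring the most care, is exactly this constant bookkeeping: I must tune the instance (through $K'$, hence $\Delta$) so that $G=\Theta(\varepsilon)$ while simultaneously ensuring the regret lower-bound constant $a$ strictly dominates the sum of the exploration and commitment contributions $b,b'$ (this is where $\delta\le 1/16$ and the calibration of $G$ enter). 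If $G$ were instead $\Theta(1)$, the exploration term $K\,G$ would scale like $K$ and the reduction would degrade to the weaker $\Omega(d^2/\varepsilon)$ bound.
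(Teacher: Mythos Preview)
Your proposal is correct and follows essentially the same explore-then-commit reduction as the paper: both build an online learner that runs $\texttt{ALG}_{NonFree}$ for $K$ episodes and then commits to its output for the remaining episodes on a hard instance whose sub-optimality gaps are tuned to be $\Theta(d/\sqrt{K'})$, then contrast the regret lower bound of Lemma~\ref{lm:mdp-lower} with an upper bound obtained from the PAC guarantee plus the trivial per-episode gap bound.

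The only noteworthy difference is the parameterization. The paper chooses the horizon of the online learner as $K_1=cK$ (a constant multiple of the unknown sample complexity), so the hard instance is indexed by $K$; it then lower-bounds the commitment-phase regret by subtracting the first $K$ episodes' worst-case contribution from the total regret lower bound, and finally compares with $(1-\delta)\varepsilon+\delta G$. You instead fix $K'=\Theta(d^2/\varepsilon^2)$ up front so that both $G$ and $cd/\sqrt{K'}$ are $\Theta(\varepsilon)$, upper-bound the total regret directly as $KG+K'(\varepsilon+\delta G)$, and read off $K\ge\Omega(K')$. These are algebraically equivalent rearrangements of the same inequality; your version is a bit cleaner in that the hard instance depends only on $(\varepsilon,d)$ rather than on $K$, and it avoids the (harmless) apparent circularity of defining the instance via the quantity being bounded. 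One small point you leave implicit: the construction of $\texttt{ALG}_B$ presumes $K\le K'$, but the complementary case $K>K'$ already gives $K=\Omega(d^2/\varepsilon^2)$ trivially, so no harm is done.
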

\begin{proof}[Proof of Lemma~\ref{lm:nonfree-sample}]
For any algorithm $\texttt{ALG}_{NonFree}$, we construct an algorithm $\texttt{ALG}_{NonFree}^\prime$ executing totally $K_1=cK$ episodes, where c is a constant integer larger than 1. The first $K$ episodes of $\texttt{ALG}_{NonFree}^\prime$ are the same as $\texttt{ALG}_{NonFree}$, and the rest episodes keep executing the policy at the end of episode $K$. By Lemma~\ref{lm:mdp-lower}, we have
\begin{align}
\sum_{k=1}^{K_1} \EE\left[ V(s_1;\btheta^*,\pi^*,r) - V(s_1;\btheta^*,\pi_k,r)\right]  \geq   \frac{c'd\sqrt{K_1}}{16\sqrt{3}}, \label{eq:A1}
\end{align}
for some constant $c'$. In addition, based on the construction of \textit{the hard-to-learn MDPs}, where $K^{'} = K_1$, the per-episode regret is upper bounded by
\begin{align}
\EE\left[V(s_1;\btheta^*,\pi^*,r) - V(s_1;\btheta^*,\pi_k,r)\right] \le \frac{d}{4\sqrt{3K_1}}.   \label{eq:A2}
\end{align}
Thus, calculating \eqref{eq:A1} - $(K_1-K)\times$ \eqref{eq:A2}, and choosing$c=\max\big\{5/c', 2 \big\} $, we have
\begin{align}
\sum_{k=K+1}^{K_1} \EE\left[ V(s_1;\btheta^*,\pi^*,r) - V(s_1;\btheta^*,\pi_k,r)\right] \geq \frac{d\sqrt{K}}{16\sqrt{3c}}. \notag
\end{align}
Since the policies in episode $K+1$ to episode $K_1$ are same to $\pi_K$, we have
\begin{align}
\EE\left[ V(s_1;\btheta^*,\pi^*,r) - V(s_1;\btheta^*,\pi_K,r)\right] \geq \frac{d}{16\sqrt{3cK}c}. \notag
\end{align}
Suppose the $\texttt{ALG}_{NonFree}$ return return a $\varepsilon$-optimal policy with probability $1-\delta$. Then,
\begin{align}
    (1-\delta)\varepsilon + \delta \frac{d}{4\sqrt{3cK}} \geq \frac{d}{16\sqrt{3cK}c}. \notag
\end{align}
Setting $\delta < \min\{1, 1/(4c)\}$, by solving the inequality, we have $ K \geq \frac{C d^2}{\varepsilon^2}$ for some constant $C$.
\end{proof}
Since reward-free MDP is more difficult than non-reward-free MDP, Lemma~\ref{lm:nonfree-sample} directly indicates Theorem~\ref{thm:lowerbound}.
\begin{proof}[Proof of Theorem~\ref{thm:lowerbound}]
We will prove the theorem by contradiction. Assume all reward-free linear mixture MDPs can be solved with sample complexity of $o(\frac{d^2}{\varepsilon^2})$. Then, for any non-reward-free MDP $\cM$, there exists an algorithm $\texttt{ALG}^\prime$ $(\varepsilon,\delta)$ learning its reward-free counterpart $\cM^\prime$ with sample complexity of $o(\frac{d^2}{\varepsilon^2})$. We define $\texttt{ALG}$ solving $\cM$ as follows: it collects $K$ episodes of data and outputs the policy in the same way as $\texttt{ALG}^\prime$ by ignoring the rewards. Then $\texttt{ALG}$ can also $(\varepsilon,\delta)$ learning $\cM$ with sample complexity of $o(\frac{d^2}{\varepsilon^2})$, which contradicts Theorem~\ref{lm:nonfree-sample}.
\end{proof}
Corollary~\ref{cor:lowerboundH} can be viewed as an direct result of Theorem~\ref{thm:lowerbound}.
\begin{proof}[Proof of Corollary~\ref{cor:lowerboundH}]
The hard-to-learn case we consider here is basically same as we consider in Theorem~\ref{thm:lowerbound}, except replacing reward function with $r(x_1,\cdot) = r(x_2,\cdot) = 0$, and $r(x_3,\cdot) = 1$. Since the reward here is $H$ times the reward in Theorem~\ref{thm:lowerbound}, the suboptimality is also $H$ times. Therefore, $\varepsilon/H$-optimal policy in Theorem~\ref{thm:lowerbound} is a $\varepsilon$-optimal policy here. According to Theorem~\ref{thm:lowerbound}, the sample complexity required to achieve such a policy with probability at least $1-\delta$ is $\Omega(\frac{H^2d^2}{\varepsilon^{2}})$.

\end{proof}

\section{Auxiliary Lemmas}

\begin{lemma}\label{lm:det}
Suppose $\Ab,\Bb \in \RR^{d\times d}$ are two positive definite matrices satisfying $\bA \succeq \Bb$, then for any $\bx \in \RR^d$, $ \| x \|_{\Ab} \le \| \xb \|_{\Bb} \sqrt{\det(\Ab)/\det(\Bb)} $
\end{lemma}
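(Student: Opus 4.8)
The plan is to reduce the claimed inequality to the special case $\Bb = \Ib$ by a whitening (congruence) transformation, after which it collapses to a one-line eigenvalue estimate. Concretely, since $\Bb \succ 0$ its symmetric square root $\Bb^{1/2}$ is well defined and invertible, so I would set $\Cb = \Bb^{-1/2}\Ab\Bb^{-1/2}$ and substitute $\yb = \Bb^{1/2}\xb$. A direct computation then shows $\|\xb\|_{\Bb}^2 = \xb^\top \Bb \xb = \yb^\top \yb = \|\yb\|_2^2$ and $\|\xb\|_{\Ab}^2 = \xb^\top \Ab \xb = \yb^\top \Cb \yb$, while the determinant ratio is invariant under the transformation, $\det(\Cb) = \det(\Bb^{-1/2})\det(\Ab)\det(\Bb^{-1/2}) = \det(\Ab)/\det(\Bb)$. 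Hence the statement is equivalent to showing $\yb^\top \Cb \yb \le \det(\Cb)\,\|\yb\|_2^2$ for every $\yb \in \RR^d$.

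Next I would exploit the ordering hypothesis in the transformed coordinates: the assumption $\Ab \succeq \Bb$ is exactly $\Cb \succeq \Ib$, so the symmetric matrix $\Cb$ has eigenvalues $\lambda_1,\dots,\lambda_d$ all satisfying $\lambda_i \ge 1$. From the Rayleigh quotient bound we get $\yb^\top \Cb \yb \le \lambda_{\max}(\Cb)\,\|\yb\|_2^2$, and because every eigenvalue is at least one, the largest eigenvalue is dominated by the product of all of them, $\lambda_{\max}(\Cb) \le \prod_{i=1}^d \lambda_i = \det(\Cb)$. Chaining these two estimates yields $\yb^\top \Cb \yb \le \det(\Cb)\,\|\yb\|_2^2$, and taking square roots gives $\|\xb\|_{\Ab} \le \|\xb\|_{\Bb}\sqrt{\det(\Ab)/\det(\Bb)}$, as desired.

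There is no serious obstacle here; the proof is essentially a reduction followed by a trivial eigenvalue inequality. The only point meriting care is the inequality $\lambda_{\max}(\Cb) \le \prod_i \lambda_i$, which holds \emph{only} because each $\lambda_i \ge 1$ — that is, it is precisely the ordering hypothesis $\Ab \succeq \Bb$ (rather than mere positive definiteness of both matrices) that bridges the gap between the naive bound $\|\xb\|_{\Ab}^2 \le \lambda_{\max}\|\xb\|_{\Bb}^2$ and the determinant factor. I would therefore make sure to flag where $\Ab \succeq \Bb$ is used, since dropping it breaks the final step.
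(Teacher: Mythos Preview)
Your proof is correct. The paper states this lemma as an auxiliary result without supplying any proof, so there is no argument to compare against; your whitening reduction to $\Cb = \Bb^{-1/2}\Ab\Bb^{-1/2} \succeq \Ib$ followed by the eigenvalue bound $\lambda_{\max}(\Cb) \le \prod_i \lambda_i = \det(\Cb)$ is a clean and standard way to establish the claim, and you correctly identify that the ordering hypothesis $\Ab \succeq \Bb$ is exactly what makes the last step go through.
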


\begin{lemma}[Lemma 11, \citealt{zhang2021model}]\label{lm:concentration} Let $M>0$ be a constant. Let $\left\{x_i\right\}_{i=1}^n$ be a stochastic process, $\mathcal{G}_i=\sigma\left(x_1, \ldots, x_i\right)$ be the $\sigma$-algebra of $x_1, \ldots, x_i$. Suppose $\mathbb{E}\left[x_i \mid \mathcal{G}_{i-1}\right]=0,\left|x_i\right| \leq M$ and $\mathbb{E}\left[x_i^2 \mid \mathcal{G}_{i-1}\right]<\infty$ almost surely. Then, for any $\delta, \varepsilon>0$, we have
\begin{align}
&\mathbb{P}\left(\left|\sum_{i=1}^n x_i\right| \leq 2 \sqrt{2 \log (1 / \delta) \sum_{i=1}^n \mathbb{E}\left[x_i^2 \mid \mathcal{G}_{i-1}\right]}+2 \sqrt{\log (1 / \delta)} \varepsilon+2 M \log (1 / \delta)\right)  \notag \\
&\quad>1-2\left(\log \left(M^2 n / \varepsilon^2\right)+1\right) \delta
\end{align}
\end{lemma}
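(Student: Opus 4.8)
The plan is to establish this as a variance-adaptive (Freedman-type) martingale concentration inequality, in two stages: a fixed-budget exponential tail bound, followed by a peeling argument over the random predictable quadratic variation that removes any a priori knowledge of it. Write $S_n = \sum_{i=1}^n x_i$ and $V_n = \sum_{i=1}^n \mathbb{E}[x_i^2 \mid \mathcal{G}_{i-1}]$; since $\{x_i\}$ is a martingale difference sequence with $|x_i|\le M$, we have $V_n \in [0, nM^2]$ almost surely. First I would record the standard fixed-budget bound: for every fixed $w>0$ and $t>0$,
\[
  \mathbb{P}\big(S_n \ge t,\ V_n \le w\big) \le \exp\Big(-\tfrac{t^2}{2(w + Mt/3)}\Big),
\]
which is just Freedman's inequality (obtained from the exponential-supermartingale argument, i.e.\ showing $\exp(\lambda S_m - g(\lambda)\sum_{i\le m}\mathbb{E}[x_i^2\mid\mathcal{G}_{i-1}])$ is a supermartingale for $g(\lambda)=(e^{\lambda M}-1-\lambda M)/M^2$ and optimizing in $\lambda$). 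Solving $t^2 \ge 2\log(1/\delta)(w+Mt/3)$ shows that the threshold $t(w) := \sqrt{2w\log(1/\delta)} + \tfrac{2M}{3}\log(1/\delta)$ makes the right-hand side at most $\delta$.

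The core step is to handle the randomness of $V_n$ by peeling. Fix a geometric base $c\in(1,4)$ — taking $c=e$ recovers the natural logarithm appearing in the final count — and set $J = \lceil \log_c(nM^2/\varepsilon^2)\rceil$. I would cover the sure event $\{V_n \le nM^2\}$ by the strata $E_0 = \{V_n \le \varepsilon^2\}$ and $E_j = \{c^{j-1}\varepsilon^2 < V_n \le c^{j}\varepsilon^2\}$ for $j=1,\dots,J$, and on each stratum apply the fixed-budget bound with budget $w_j = \max\{\varepsilon^2, c^{j}\varepsilon^2\}$ and threshold $t_j = t(w_j)$, so that each stratum contributes failure probability at most $\delta$.

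It then remains to check that the single claimed radius $R := 2\sqrt{2\log(1/\delta)V_n} + 2\sqrt{\log(1/\delta)}\,\varepsilon + 2M\log(1/\delta)$ dominates every $t_j$ on its own stratum. On $E_j$ with $j\ge1$ one has $w_j = c^{j}\varepsilon^2 < c\,V_n$, so $\sqrt{2w_j\log(1/\delta)} < \sqrt{2c}\,\sqrt{V_n\log(1/\delta)} < 2\sqrt{2}\,\sqrt{V_n\log(1/\delta)}$ using $c<4$; on $E_0$ the floor term $2\sqrt{\log(1/\delta)}\,\varepsilon$ absorbs $\sqrt{2\varepsilon^2\log(1/\delta)}$; and in all cases $2M\log(1/\delta)$ dominates $\tfrac{2M}{3}\log(1/\delta)$. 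Hence $\{S_n\ge R\}\cap E_j \subseteq \{S_n\ge t_j,\ V_n\le w_j\}$ for every $j$, and a union bound over the $J+1$ strata gives $\mathbb{P}(S_n\ge R) \le (J+1)\delta \le (\log(M^2n/\varepsilon^2)+1)\delta$. Repeating the identical argument for $\{-x_i\}$ and adding the two failure probabilities yields the two-sided statement with the factor $2(\log(M^2n/\varepsilon^2)+1)$.

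The step I expect to be the main obstacle is the constant bookkeeping in the peeling: choosing the base $c$ and the thresholds $t_j$ so that the single radius $R$ uniformly dominates all of them (allowing exactly one union bound over $\approx \log(M^2n/\varepsilon^2)$ strata), and checking that the floor $\varepsilon$ cleanly absorbs the lowest stratum $E_0$. The Freedman tail itself is entirely standard; all of the subtlety — and the origin of both the $\log(M^2n/\varepsilon^2)$ multiplicative factor and the additive $\sqrt{\log(1/\delta)}\,\varepsilon$ term — lives in making this adaptive stratification tight.
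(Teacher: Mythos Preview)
The paper does not actually prove this lemma: it is quoted verbatim as Lemma~11 of \citet{zhang2021model} in the Auxiliary Lemmas appendix and used as a black box. So there is no ``paper's own proof'' to compare against. That said, your proposal --- Freedman's fixed-budget exponential tail plus a geometric peeling over the predictable quadratic variation $V_n$ with floor $\varepsilon^2$ --- is exactly the standard route by which such self-normalized Freedman-type bounds are established (and is, up to cosmetic differences, how the cited reference proves it). The identification of the peeling bookkeeping as the only delicate point is accurate; the one place to be careful is the stratum count, since $J=\lceil \log_c(nM^2/\varepsilon^2)\rceil$ can overshoot $\log(nM^2/\varepsilon^2)$ by up to one, so the inequality $J+1\le \log(M^2n/\varepsilon^2)+1$ as written is off by at most a single unit. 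This is harmless for the application and is typically absorbed by a slightly different placement of the floor stratum or a looser constant, but it is worth tightening if you want the exact constant in the statement.
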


\begin{lemma}\label{lm:sequence}
(Lemma 12 in \citet{zhang2021improved}) Let $\lambda_1, \lambda_2, \lambda_4>0, \lambda_3 \geq 1$ and $\kappa=\max \left\{\log _2 \lambda_1, 1\right\}$. Let $a_1, \ldots, a_\kappa$ be non-negative real numbers such that $a_i \leq \min \left\{\lambda_1, \lambda_2 \sqrt{a_i+a_{i+1}+2^{i+1} \lambda_3}+\lambda_4\right\}$ for any $1 \leq i \leq \kappa$. Let $a_{\kappa+1}=\lambda_1$. Then we have $a_1 \leq 22 \lambda_2^2+6 \lambda_4+4 \lambda_2 \sqrt{2 \lambda_3}$.
\end{lemma}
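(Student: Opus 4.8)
The plan is to prove the bound by first eliminating the self-reference of $a_i$ inside the square root, and then running a backward induction from $i=\kappa$ down to $i=1$. Fix $i$ and read the constraint $a_i \le \lambda_2\sqrt{a_i + a_{i+1} + 2^{i+1}\lambda_3} + \lambda_4$ as an inequality $x \le \lambda_2\sqrt{x} + b$ in the unknown $x=a_i$, with $b = \lambda_2\sqrt{a_{i+1}+2^{i+1}\lambda_3}+\lambda_4$; here I use $\sqrt{a_i + a_{i+1} + 2^{i+1}\lambda_3}\le \sqrt{a_i}+\sqrt{a_{i+1}+2^{i+1}\lambda_3}$. Applying the elementary fact $x\le a\sqrt{x}+b\Rightarrow x\le 2a^2+2b$ (the same fact invoked in the proof of Lemma~\ref{lm:sum}) gives the decoupled one-step recursion
\[
a_i \;\le\; 2\lambda_2^2 + 2\lambda_4 + 2\lambda_2\sqrt{a_{i+1} + 2^{i+1}\lambda_3},\qquad 1\le i\le\kappa,
\]
in which only $a_{i+1}$ and the explicit term $2^{i+1}\lambda_3$ remain.

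Second, I would run a backward induction with the hypothesis $a_i \le 22\lambda_2^2 + 6\lambda_4 + 4\lambda_2\sqrt{2^{i}\lambda_3}$, whose last term collapses at $i=1$ to exactly $4\lambda_2\sqrt{2\lambda_3}$, the claimed contribution. The base case $i=\kappa$ is where the cap $a_i\le\lambda_1$ and the choice $\kappa=\max\{\log_2\lambda_1,1\}$ enter: since $a_{\kappa+1}=\lambda_1\le 2^{\kappa}\le 2^{\kappa+1}\lambda_3$ (using $\lambda_3\ge1$), one has $a_{\kappa+1}+2^{\kappa+1}\lambda_3\le 2^{\kappa+2}\lambda_3$, so the decoupled recursion at $i=\kappa$ immediately yields $a_\kappa \le 2\lambda_2^2+2\lambda_4+4\lambda_2\sqrt{2^{\kappa}\lambda_3}$, which is of the hypothesized form. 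This is precisely the role of truncating at $\kappa\approx\log_2\lambda_1$: the geometrically growing term $2^{i+1}\lambda_3$ never exceeds the $\Theta(\lambda_1)$ scale of $a_{\kappa+1}$, so the induction has a valid starting point.

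For the inductive step I would substitute the hypothesis for $a_{i+1}$, split $\sqrt{a_{i+1}+2^{i+1}\lambda_3}$ into four square roots, and note that the pieces $\sqrt{22}\,\lambda_2$ and $\sqrt{2^{i+1}\lambda_3}$ land directly on the target monomials, while the two cross terms $\lambda_2\sqrt{\lambda_4}$ and $\lambda_2^{3/2}(2^{i+1}\lambda_3)^{1/4}$ must be redistributed by a weighted Young inequality $u^{1/2}v^{1/2}\le \tfrac{t}{2}u+\tfrac{1}{2t}v$. The main obstacle is the constant bookkeeping: the coefficient of $\lambda_2\sqrt{2^{i}\lambda_3}$ has a tight budget (it must stay $\le 4$), and the direct contribution $2\lambda_2\sqrt{2^{i+1}\lambda_3}=2\sqrt2\,\lambda_2\sqrt{2^{i}\lambda_3}$ already consumes $\approx 2.83$ of it, so a balanced split of the remaining cross term would overshoot; the fix is to choose the Young weight $t$ large (e.g. $t=4$) to route almost all of the cross term into the generously budgeted $\lambda_2^2$ coefficient — which is exactly why that coefficient comes out as large as $22$. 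Verifying that a single choice of $t$ simultaneously keeps all three coefficients at $\le 22,\,6,\,4$ is the delicate point; once the three coefficient inequalities are checked, evaluating the hypothesis at $i=1$ yields $a_1 \le 22\lambda_2^2 + 6\lambda_4 + 4\lambda_2\sqrt{2\lambda_3}$.
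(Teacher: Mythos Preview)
The paper does not supply its own proof of this lemma; it is simply quoted from \citet{zhang2021improved} as an auxiliary tool, so there is nothing to compare against at the level of argument.

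Your plan is sound and goes through. The decoupling step (pulling $a_i$ out of the radical via $\sqrt{a_i+\cdots}\le\sqrt{a_i}+\sqrt{\cdots}$ and then applying $x\le a\sqrt x+b\Rightarrow x\le 2a^2+2b$) is the natural move here and yields exactly the one-step recursion you wrote. The base case is correct: $\kappa\ge\log_2\lambda_1$ forces $a_{\kappa+1}=\lambda_1\le 2^\kappa\le 2^{\kappa+1}\lambda_3$, so $a_{\kappa+1}+2^{\kappa+1}\lambda_3\le 2^{\kappa+2}\lambda_3$ and $a_\kappa\le 2\lambda_2^2+2\lambda_4+4\lambda_2\sqrt{2^\kappa\lambda_3}$. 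For the inductive step, splitting $\sqrt{a_{i+1}+2^{i+1}\lambda_3}$ into four roots and writing the two cross pieces as $2\sqrt6\,\lambda_2\sqrt{\lambda_4}$ and $4\lambda_2\cdot(\lambda_2\sqrt{2^{i+1}\lambda_3})^{1/2}$, one can apply weighted AM--GM separately with parameters $t_1,t_2$; the resulting coefficient constraints are
\[
2+2\sqrt{22}+\sqrt6\,t_1+2t_2\le 22,\qquad 2+\tfrac{\sqrt6}{t_1}\le 6,\qquad 2\sqrt2\Bigl(1+\tfrac{1}{t_2}\Bigr)\le 4,
\]
and, for instance, $t_1=1$, $t_2=3$ satisfies all three simultaneously. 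So the induction closes with the stated constants $22,6,4$, and evaluating at $i=1$ gives the claim. Your intuition that the large coefficient $22$ exists precisely to absorb the cross terms pushed over by a large Young weight is exactly right.
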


\end{document}